\newcommand{\dev}[1]{\scriptstyle \pm #1}
\newcommand{\dual}[2]{\left\langle #1; #2 \right\rangle}
\newcommand{\Bsf}{\mathsf{B}}
\newcommand{\Bsftil}{\tilde{\mathsf{B}}}
\newcommand{\Fsf}{\mathsf{F}}
\newcommand{\ev}{\mathsf{ev}}
\newcounter{comment}
\newif\ifproof
\icmltitlerunning{Convex Representation Learning for Generalized Invariance in  Semi-Inner-Product Space}
\begin{document}
\twocolumn[
\icmltitle{Convex Representation Learning for Generalized Invariance in \\ Semi-Inner-Product Space}

\icmlsetsymbol{equal}{*}
\begin{icmlauthorlist}
  \icmlauthor{Yingyi Ma }{a}
  \icmlauthor{Vignesh Ganapathiraman}{a}
  \icmlauthor{Yaoliang Yu }{b}
  \icmlauthor{Xinhua Zhang }{a}
\end{icmlauthorlist}
\icmlaffiliation{a}{University of Illinois at Chicago}
\icmlaffiliation{b}{University of Waterloo and Vector Institute}

\icmlcorrespondingauthor{Xinhua Zhang}{zhangx@uic.edu}

\vspace{3em}
]
\printAffiliationsAndNotice{}
\begin{abstract}
   Invariance (defined in a general sense) has been one of the most effective priors for representation learning.
  Direct factorization of parametric models is feasible only for a small range of invariances,
  while regularization approaches, despite improved generality,
  lead to nonconvex optimization.
  In this work, we develop a \emph{convex} representation learning algorithm  for a variety of generalized invariances that can be modeled as semi-norms.  
  Novel Euclidean embeddings are introduced for kernel representers in a semi-inner-product space,
  and approximation bounds are established.
  This allows invariant representations to be learned efficiently and effectively as confirmed in our experiments, along with accurate predictions.
\end{abstract}

\section{Introduction}

Effective modeling of structural priors has been the workhorse of a variety of machine learning algorithms.
Such priors are available in a rich supply, 
including invariance \citep{SimardLDV96,FerCae94}, 
equivariance \citep{CohWel16,GraRav19}, 
disentanglement \citep{BenCouVin13,Higginsetal17},
homophily/heterophily \citep{EliFal12}, 
fairness \citep{Creageretal19}, 
correlations in multiple views and modalities \citep{WanAroLivBil15,Kumaretal18}, etc.

In this paper we focus on ``generalized invariance'', 
where certain relationship holds irrespective of certain changes in data.
This extends traditional settings that are limited to, \eg, transformation and permutation.
For instance, in multilabel classification there are semantic or logical relationships between classes which hold for any input.
Common examples include mutual exclusion and implication \citep{MirRavDinSch15,Dengetal12}.
In mixup \citep{ZhaCisDauLop18}, 
a convex interpolation of a pair of examples is postulated to yield the same interpolation of output labels.

While conventional wisdom learns models whose prediction accords with these  structures,
recent developments show that it can be more effective to learn structure-encoding representations.
Towards this goal, 
the most straightforward approach is to directly parameterize the model.
For example, deep sets model permutation invariance via an additive decomposition \citep{Zaheeretal17},
convolutional networks use sparse connection and parameter sharing to model translational invariance,
and a similar approach has been developed for equivariance \citep{RavSchPoc17}.
Although they simplify the model and can enforce invariance over the \emph{entire} space,
their applicability is very restricted, 
because most useful structures do not admit a readily decomposable parameterization.
As a result, most invariance/equivariance models are restricted to permutations and group based diffeomorphism.

In order to achieve significantly improved generality and flexibility,
the regularization approach can be leveraged,
which penalizes the violation of pre-specified structures.
For example, \citet{Rifaietal11} penalizes the norm of the Jacobian matrix to enforce contractivity, conceivably a generalized type of invariance.
\citet{Smola19} proposed using a max-margin loss over all transformations \citep{TeoGloRowSmo07}.
However, for most structures, regularization leads to a nonconvex problem.
Despite the recent progress in optimization for deep learning,
the process still requires a lot of trial and error.
Therefore a convex learning procedure will be desirable, 
because besides the convenience in optimization, 
it also offers the profound advantage of decoupling parameter optimization from problem specification: 
poor learning performance can only be ascribed to a poor model architecture, 
not to poor local minima.

Indeed convex invariant representation learning has been studied, 
but in limited settings.
Tangent distance kernels \citep{HaaKey02} and
Haar integration kernels are engineered to be invariant to a group of transformations \cite{Rajetal17,MroVoiPog15,HaaBur07},
but it relies on sampling for tractable computation and the sample complexity is $O(d/\epsilon^2)$ where $d$ is the dimension of the underlying space.
\citet{BhaPanSmo05} treated all perturbations within an ellipsoid neighborhood as invariances,
and it led to an expensive second order cone program (SOCP).  
Other distributionally robust formulations also lead to SOCP/SDPs \citep{RahMeh19}.
The most related work is \citet{MaGanZha19},
which warped a reproducing kernel Hilbert space (RKHS) by linear functionals that encode the invariances.
However, in order to keep the warped space an RKHS,
their applicability is restricted to \emph{quadratic} losses on linear functionals.

In practice, however, there are many invariances that cannot be modeled by quadratic penalties.
For example, the logical relationships between classes impose an ordering in the discriminative output \citep{MirRavDinSch15},
and this can hardly be captured by quadratic forms.
Similarly, when a large or infinite number of invariances are available, 
measuring the maximum violation makes more sense than their sum,
and it is indeed the principle underlying adversarial learning \cite{MadMakSchetal18}.
Again this is not amenable to quadratic forms.

Our goal, therefore, is to develop a \emph{convex} representation learning approach that efficiently incorporates generalized invariances as semi-norm functionals.
Our first contribution is to show that compared with linear functionals,
semi-norm functionals encompass a much broader range of invariance (Sections \ref{sec:mixup} and \ref{sec:multilabel}).

Our key tool is the semi-inner-product space \citep[\sip,][]{Lumer61},
into which an RKHS can be warped by augmenting the RKHS norm with semi-norm functionals.
A specific example of \sip\ space is 
the reproducing kernel Banach space \citep{ZhaXuZha09},
which has been used for $\ell_p$ regularization in, 
\eg, kernel SVMs,
and suffers from high computational cost
\citep{SalSuyRos18,DerLee07,BenBre00,HeiBouSch05,LuxBou04,ZhouXiaoZhoDai02}.
A \sip\ space extends RKHS by relaxing the underlying inner product into a semi-inner-product,
while retaining the important construct: \emph{kernel function}.
To our best knowledge, 
\sip\ space has yet been applied to representation learning.

Secondly, we developed efficient computation algorithms for solving the regularized risk minimization (RRM) with the new \sip\ norm (Section \ref{sec:RRM}).
Although \citet{ZhaXuZha09} established the representer theorem from a pure mathematical perspective,
no practical algorithm was provided and ours is the first to fill this gap.

However, even with this progress,
RRMs still do not provide invariant representations of data instances;
it simply learns a discriminant function by leveraging the representer theorem 
(which does hold in the applications we consider).
So our third contribution, as presented in Section \ref{sec:euc_embed},
is to learn and extract representations by embedding s.i.p. kernel representers in Euclidean spaces.
This is accomplished in a \emph{convex} and efficient fashion,
constituting a secondary advantage over RRMs which is not convex in the dual coefficients.
Different from Nystr\"om or Fourier linearization of kernels in RKHS,
the kernel representers in a \sip\ space carry interestingly different meanings and expressions in primal and dual spaces.
Finally, our experiments demonstrate that the new \sip-based algorithm learns more predictive representations than strong baselines.

\section{Preliminaries}

Suppose we have an RKHS $\Hcal = (\Fcal, \inner{\cdot}{\cdot}_{\Hcal}, k)$ with $\Fcal \subseteq \RR^{\Xcal}$, inner product $\inner{\cdot}{\cdot}_{\Hcal}$ and  kernel $k: \Xcal \times \Xcal \to \RR$.
Our goal is to renorm $\Hcal$ hence \emph{warp the distance metric} by adding a functional $R$ that induces desired structures.

\subsection{Existing works on invariance modeling by RKHS}

\citet{SmoSch98} and \citet{ZhaLeeTeh2013} proposed modeling invariances by bounded linear functionals in RKHS.
Given a function $f$, the graph Laplacian is
$\sum_{ij} w_{ij} (f(x_i)  -  f(x_j))^2$,
and obviously $f(x_i)  -  f(x_j)$ is bounded and linear.
Transformation invariance can be characterized by 
$\frac{\partial}{\partial \alpha} |_{\alpha=0} f(I(\alpha))$,
where $I(\alpha)$ stands for the image after applying an $\alpha$ amount of rotation, translation, etc.
It is again bounded and linear.
By Riesz representation theorem,
a bounded linear functional can be written as $\inner{z_i}{f}_\Hcal$ for some $z_i \in  \Hcal$.

Based on this view,
\citet{MaGanZha19} took a step towards representation learning.
By adding $R(f)^2 := \sum_{i} \inner{z_i}{f}^2_\Hcal$ to the RKHS norm square,
the space is warped to favor $f$ that respects invariance,
\ie, small magnitude of $\inner{z_i}{f}$.
They showed that it leads to a new RKHS with a kernel 
%
\begin{align}
\label{eq:rkhs_warp_kernel}
	k^\circ(x_1, x_2) = k(x_1, x_2) - z(x_1)^\top (I+K_z)^{-1} z(x_2),
\end{align}
%
where $z(x) = \! (z_1(x), \ldots, z_m(x))^\top$ and $K_z \! = (\inner{z_i}{z_j})_{i,j}$.

Although the kernel representer of $k^\circ$ offers a new invariance aware representation,
the requirement that the resulting space remains an RKHS forces the penalties in $R$ to be quadratic on $\inner{z_i}{f}$,
significantly limiting its applicability to a broader range of invariances such as total variation $\int_x |f'(x)| \rmd x$.
Our goal is to relax this restriction by enabling \emph{semi-norm} regularizers with new tools in functional analysis,
and illustrate its applications in Sections \ref{sec:mixup} and \ref{sec:multilabel}.

\subsection{Semi-inner-product spaces}

We first specify the range of regularizer $R$ considered here.

\begin{assumption}
\label{assump:Reg}
We assume that $R: \Fcal \to \RR$ is a  semi-norm.
Equivalently, $R: \Fcal \to \RR$ is convex and $R(\alpha f) = \abr{\alpha} R(f)$ for all $f \in \Fcal$ and $\alpha \in \RR$ (absolute homogeneity).
Furthermore, we assume $R$ is closed (i.e., lower semicontinuous) \wrt the topology in $\Hcal$.
\end{assumption}
Since $R$ is closed convex and its domain is the entire Hilbert space $\Hcal$,
$R$ must be continuous.
By exempting $R$ from being induced by an inner product,
we enjoy substantially improved flexibility in modeling various  regularities.

For most learning tasks addressed below,
it will be convenient to directly construct $R$ from the specific regularity.
However, in some context it will also be convenient to constructively explicate $R$ in terms of support functions.

\begin{proposition}
	\label{prop:reg_R_supfun}
	$R(f)$ satisfies Assumption \ref{assump:Reg} if, and only if,
  $R(f) = \sup_{g \in S} \inner{f}{g}_\Hcal$,
    where $S \subseteq \Hcal$ is bounded in the RKHS norm
    and is symmetric ($g \in S \Leftrightarrow -g \in S$).    
\end{proposition}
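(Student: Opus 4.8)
The plan is to recognize $R$ as the \emph{support function of its subdifferential at the origin} and invoke Fenchel--Moreau biconjugation; this is the Hilbert-space instance of the classical correspondence between closed sublinear functionals and support functions of closed convex sets, with the seminorm/symmetry refinement layered on top.

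For the ``only if'' direction, assume $R$ satisfies Assumption \ref{assump:Reg}; note $R(0)=0$ by absolute homogeneity and $R\ge 0$ since $0=R(f-f)\le R(f)+R(-f)=2R(f)$. Set
\[
  S \;:=\; \partial R(0) \;=\; \{\, g\in\Hcal \;:\; \inner{f}{g}_\Hcal \le R(f)\ \text{ for all } f\in\Hcal \,\}.
\]
I would first show $R^* = \iota_S$, the indicator of $S$: if $g\in S$ then $\inner{f}{g}_\Hcal - R(f)\le 0$ for all $f$ with equality at $f=0$, so $R^*(g)=0$; if $g\notin S$ then $\inner{f_0}{g}_\Hcal - R(f_0) = c>0$ for some $f_0$, and positive homogeneity of $R$ gives $\inner{tf_0}{g}_\Hcal - R(tf_0) = tc\to\infty$ as $t\to\infty$, so $R^*(g)=+\infty$. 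Since $R$ is proper (finite everywhere), closed and convex, Fenchel--Moreau yields $R = R^{**}$, whence $R(f) = \sup_{g\in S}\inner{f}{g}_\Hcal$. Next, $S$ is bounded: the excerpt already observes that a closed convex everywhere-finite $R$ on $\Hcal$ is continuous, so continuity at $0$ (with $R(0)=0$) gives $\delta>0$ with $R(f)\le 1$ whenever $\|f\|_\Hcal\le\delta$; plugging $f = \delta g/\|g\|_\Hcal$ for nonzero $g\in S$ gives $\delta\|g\|_\Hcal\le 1$, so $S$ lies in the ball of radius $1/\delta$. Finally, $S$ is symmetric: if $g\in S$ then for every $f$, $\inner{f}{-g}_\Hcal = \inner{-f}{g}_\Hcal \le R(-f) = R(f)$ by absolute homogeneity, so $-g\in S$.

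For the ``if'' direction, suppose $R(f) = \sup_{g\in S}\inner{f}{g}_\Hcal$ with $S$ bounded (say $\|g\|_\Hcal\le L$ on $S$) and symmetric; WLOG $S\neq\emptyset$, as otherwise $R\equiv-\infty$ is not $\RR$-valued. Then $R(f)\le L\|f\|_\Hcal<\infty$, and, being a pointwise supremum of continuous linear functionals, $R$ is real-valued, convex and lower semicontinuous, i.e.\ closed. Absolute homogeneity is direct: $R(\alpha f) = \sup_{g\in S}\alpha\inner{f}{g}_\Hcal = \alpha R(f) = |\alpha|R(f)$ for $\alpha\ge 0$, while for $\alpha<0$, $R(\alpha f) = |\alpha|\sup_{g\in S}\inner{f}{-g}_\Hcal = |\alpha|\sup_{g\in S}\inner{f}{g}_\Hcal = |\alpha|R(f)$ using $S=-S$. (Subadditivity then comes for free from convexity plus positive homogeneity.) Hence $R$ satisfies Assumption \ref{assump:Reg}.

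The one delicate step, and the place I would be most careful, is the biconjugation in the forward direction: confirming that $R$ is proper and closed so that Fenchel--Moreau applies, and verifying that the conjugate of the sublinear functional $R$ collapses exactly to the indicator of $S=\partial R(0)$. Everything else --- boundedness from continuity at the origin, symmetry from absolute homogeneity, and the routine verifications in the converse --- is mechanical.
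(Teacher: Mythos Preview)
Your proof is correct and follows essentially the same route as the paper: both identify $S=\partial R(0)=\dom R^*$, show $R^*=\iota_S$ so that Fenchel--Moreau gives $R$ as the support function of $S$, and then derive boundedness from continuity at the origin and symmetry from absolute homogeneity. The only cosmetic differences are that the paper argues $R^*$ is an indicator by scaling invariance rather than by your case split, and proves boundedness by contradiction rather than directly; the ``if'' direction is handled identically.
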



%
The proof is in Appendix \ref{app:proof}.
Using $R$, we arrive at a new norm defined by
%
\begin{align}
\label{eq:norm_sublinear}
     \nbr{f}_\Bcal := \sqrt{\nbr{f}_\Hcal^2 + R(f)^2},
\end{align}
thanks to Assumption \ref{assump:Reg}. 
It is immediately clear from \Cref{prop:reg_R_supfun} that 
$   \nbr{f}_\Hcal \leq \nbr{f}_\Bcal \leq C \nbr{f}_\Hcal$, 
for some constant $C > 0$ that bounds the norm of $S$. In other words, the two norms $\nbr{\cdot}_\Hcal$ and $\nbr{\cdot}_\Bcal$ are equivalent, hence in particular the norm $\nbr{\cdot}_\Bcal$ is complete. We thus arrive at a Banach space $\Bcal = (\Fcal, \|\cdot\|_{\Bcal})$. Note that both $\Hcal$ and $\Bcal$ have the same underlying vector space $\Fcal$---the difference is in the norm or distance metric.
To proceed, we need to endow more structures on $\Bcal$.



\begin{definition}[Strict convexity]
	A normed vector space $(\Fcal, \|\cdot\|)$ is strictly convex if for all $\zero\ne f, g \in \Fcal$, 
%
	\begin{align}
	\nbr{f + g} = \nbr{f} + \nbr{g}
	\end{align}
%
	implies $g = \alpha f$ for some $\alpha \geq0$. 
	Equivalently, if the unit ball $\Bsf := \{f \in \Fcal: \|f\| \leq 1\}$ is strictly convex.
\end{definition}


Using the parallelogram law it is clear that the Hilbert norm $\|\cdot\|_{\Hcal}$ is strictly convex. Moreover, since summation preserves strict convexity, it follows that the new norm $\|\cdot\|_{\Bcal}$ is strictly convex as well.




\begin{definition}[G\^ateaux differentiability] A normed vector space $(\Fcal, \|\cdot\|)$ is G\^ateaux differentiable if for all $\zero \ne f, g \in \Fcal$, there exists the directional derivative   
%
\begin{align}
  \lim\nolimits_{t \in \RR, t \to 0} \smallfrac{1}{t} (\nbr{f+tg} - \nbr{f}).
\end{align}
%

\end{definition}

We remark that both strict convexity and G\^ateaux differentiability are algebraic but not topological properties of the norm. In other words, two equivalent (in terms of topology) norms may not be strictly convex or G\^ateaux differentiable at the same time. For instance, the $\ell_2$-norm on $\RR^d$ is both strictly convex and G\^ateaux differentiable, while the equivalent $\ell_1$-norm is not.

Recall that $\Bcal^*$ is the dual space of $\Bcal$, consisting of all continuous linear functionals on $\Bcal$ and equipped with the dual norm $\|F\|_{\Bcal^*} = \sup_{\nbr{f}_\Bcal \le 1 } F(f)$. 
The dual space of a normed (reflexive) space is Banach (reflexive).

\begin{definition}
    A Banach space $\Bcal$ is reflexive if the canonical map $\jmath: \Bcal \to \Bcal^{**}$, $f \mapsto \dual{\cdot}{\jmath f} := \dual{f}{\cdot}$ is onto, where $\dual{f}{F}$ is the (bilinear) duality pairing between dual spaces. Here $\cdot$ is any element in $\Bcal^*$.
\end{definition}

Note that reflexivity is a topological property. 
In particular, equivalent norms are all reflexive if any one of them is. 
As any Hilbert space $\Hcal$ is reflexive, so is the equivalent norm $\|\cdot\|_{\Bcal}$ in \eqref{eq:norm_sublinear}.

\begin{theorem}[{\citealt[p. 212-213]{BorwenVanderwerff10}}]
    A Banach space $\Bcal$ is strictly convex (G\^ateaux differentiable) if its dual space $\Bcal^*$ is G\^ateaux differentiable (strictly convex). 
    The converse is true too if $\Bcal$ is reflexive.
    \label{thm:ref}
\end{theorem}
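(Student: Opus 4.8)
The plan is to exploit the classical duality between norming functionals and supporting hyperplanes, which needs only Hahn--Banach together with elementary convex analysis. I would first record two working characterizations. A normed space $(\Fcal, \nbr{\cdot})$ is strictly convex if and only if, for every nonzero $F$ in its dual, there is at most one unit vector $f$ with $F(f) = \nbr{F}$; equivalently $\nbr{f+g} < \nbr{f} + \nbr{g}$ unless $f$ and $g$ are nonnegative multiples of one another. And for $f \neq \zero$, the norm is G\^ateaux differentiable at $f$ if and only if there is a \emph{unique} norming functional at $f$, i.e.\ a unique $F$ with $\nbr{F} = 1$ and $F(f) = \nbr{f}$; this holds because the one-sided directional derivative of the convex function $\nbr{\cdot}$ at $f$ is the support function of $\partial\nbr{\cdot}(f)$, which for $f \neq \zero$ is exactly the (weak-$*$ compact, convex) set of norming functionals at $f$, so the derivative is linear in the direction precisely when that set is a singleton. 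These are the facts quoted from \citet{BorwenVanderwerff10}, so I would cite them and devote the rest of the argument to the duality bookkeeping.

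Given these characterizations, the two forward implications are immediate and use no reflexivity. Suppose $\Bcal^*$ is G\^ateaux differentiable but $\Bcal$ is not strictly convex: choosing nonzero $f, g$ with $\nbr{f+g}_\Bcal = \nbr{f}_\Bcal + \nbr{g}_\Bcal$ and $g$ not a nonnegative multiple of $f$, normalization yields distinct unit vectors $f', g'$ with $\nbr{f'+g'}_\Bcal = 2$; by Hahn--Banach pick a norm-one $F \in \Bcal^*$ with $F(\tfrac{1}{2}(f'+g')) = 1$, and since $F(f'), F(g') \le 1$ this forces $F(f') = F(g') = 1$, so the canonical images $\jmath f', \jmath g' \in \Bcal^{**}$ are two distinct norm-one functionals norming $F$ --- contradicting G\^ateaux differentiability of $\Bcal^*$ at $F$. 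Conversely, suppose $\Bcal^*$ is strictly convex but $\nbr{\cdot}_\Bcal$ fails to be G\^ateaux differentiable at some unit $f$: then there are distinct norm-one $F, G \in \Bcal^*$ with $F(f) = G(f) = 1$, whence $\nbr{F + G}_{\Bcal^*} \ge (F+G)(f) = 2 \ge \nbr{F}_{\Bcal^*} + \nbr{G}_{\Bcal^*}$, so equality holds in the triangle inequality with $F, G$ not proportional, contradicting strict convexity of $\Bcal^*$.

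For the converses, assume $\Bcal$ is reflexive, so $\jmath : \Bcal \to \Bcal^{**}$ is a surjective isometry and $\Bcal^{**}$ may be identified with $\Bcal$. Now apply the two forward implications just proved, but with $\Bcal^*$ in the role of ``$\Bcal$'' and $\Bcal^{**} \cong \Bcal$ in the role of ``its dual'': if $\Bcal$ (hence $\Bcal^{**}$) is strictly convex then $\Bcal^*$ is G\^ateaux differentiable, and if $\Bcal$ (hence $\Bcal^{**}$) is G\^ateaux differentiable then $\Bcal^*$ is strictly convex. Combined with the forward directions, this gives the full equivalences whenever $\Bcal$ is reflexive, which is the case here since the space in \eqref{eq:norm_sublinear} is reflexive.

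I expect the only genuinely delicate step to be the characterization of G\^ateaux differentiability of the norm by uniqueness of the norming functional --- specifically identifying the directional derivative with $\sup_{F \in \partial\nbr{\cdot}(f)} F(\cdot)$ and $\partial\nbr{\cdot}(f)$ with the set of norming functionals at $f$. Everything else is a one-line Hahn--Banach or triangle-inequality argument. Since the statement is quoted verbatim from \citet{BorwenVanderwerff10}, in the paper I would simply cite that reference for this lemma and present the duality argument above.
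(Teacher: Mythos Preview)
The paper does not supply its own proof of this theorem; it is stated with attribution to \citet[p.~212--213]{BorwenVanderwerff10} and used as a black box. Your proposal is a correct and standard proof --- essentially the argument one finds in that reference: characterize G\^ateaux differentiability of the norm via uniqueness of norming functionals, strict convexity via uniqueness of norm-attaining unit vectors, then use Hahn--Banach and the canonical embedding $\jmath$ into the bidual to pass between the two, with reflexivity providing surjectivity of $\jmath$ for the converse directions. Your closing remark already anticipates exactly what the authors did: cite the book and move on.
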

Combining \Cref{prop:reg_R_supfun} and \Cref{thm:ref}, we see that $R$, hence $\|\cdot\|_{\Bcal}$, is G\^ateaux differentiable if (the closed convex hull of) the set $S$ in \Cref{prop:reg_R_supfun} is strictly convex.

We are now ready to define 
a semi-inner-product (s.i.p.) on a normed space $(\Fcal, \|\cdot\|)$. We call a bivariate mapping   
$[\cdot, \cdot] : \Fcal \times \Fcal \to \RR$ a s.i.p. if for all $f, g, h \in \Fcal$ and $\lambda \in\RR$,
\begin{itemize}[topsep=0pt,itemsep=0pt]
    \item additivity: $[f+g, h] = [f, h] + [g,h]$
    \item homogeneity: $[\lambda f, g] = [f, \lambda g] = \lambda [f, g]$,
    \item norm-inducing: $[f, f] = \|f\|^2$,
    \item Cauchy-Schwarz: $[f,g] \le \|f\| \cdot \|g\|$.
\end{itemize}
We note that an s.i.p. is additive in its second argument iff it is an inner product (by simply verifying the parellelogram law).
\citet{Lumer61} proved that s.i.p. does exist on every normed space. Indeed, let the subdifferential $J = \partial \tfrac12\|\cdot\|^2_\Bcal : \Bcal \rightrightarrows \Bcal^*$ be the (multi-valued) duality mapping. Then, any selection $j: \Bcal \to \Bcal^*, f\mapsto j(f)\in J(f)$ with the convention that $j(\zero) = \zero$ leads to a s.i.p.: 
%
\begin{align}
\label{eq:sip}
    [f, g] := \dual{f}{j(g)}.
\end{align}
%
Indeed, from definition, for any $f \ne \zero$, $j(f) = \|f\| F$, where $\|F\|^* = 1$ and $\dual{f}{F} = \|f\|$.
A celebrated result due to \citet{Giles67} revealed the uniqueness of s.i.p. if the norm $\|\cdot\|$ is G\^ateaux differentiable, and later \citet{Faulkner77} proved that the (unique) mapping $j$ is onto iff $\Bcal$ is reflexive. Moreover, $j$ is 1-1 if $\Bcal$ is strictly convex (like in \eqref{eq:norm_sublinear}), as was shown originally in \citet{Giles67}. 

Let us summarize the above results. 
\begin{definition}
    A Banach space $\Bcal$ is called a s.i.p. space iff it is reflexive, strictly convex, and G\^ateaux differentiable. Clearly, the dual $\Bcal^*$ of a s.i.p. space is s.i.p. too.
\end{definition}
\begin{theorem}[Riesz representation]
\label{thm:representer_operator}
    Let $\Bcal$ be a s.i.p. space. Then, for any continuous linear functional $f^* \in \Bcal^*$, there exists a unique $f \in \Bcal$ such that 
    %
    \begin{align}
        f^* = [\cdot, f] = j(f), ~~ \mbox{ and } ~~ \|f\| = \|f^*\|_{\Bcal^*}.
    \end{align}
    %
\end{theorem}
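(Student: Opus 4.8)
The plan is to deduce the theorem from the structural facts already recalled --- reflexivity, strict convexity, and G\^ateaux differentiability of $\Bcal$ --- but rather than merely quoting Faulkner's surjectivity result I would exhibit the representer $f$ directly by the variational method. First I would dispose of the trivial case $f^* = \zero$, where $f = \zero$ works since $j(\zero) = \zero$ by convention and $\|\zero\| = 0 = \|f^*\|_{\Bcal^*}$; so assume $f^* \ne \zero$.

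For existence, I would introduce the energy functional $\phi(f) := \tfrac12 \|f\|_\Bcal^2 - \dual{f}{f^*}$ on $\Bcal$. It is convex and norm-continuous, hence weakly lower semicontinuous, and it is coercive because $\phi(f) \ge \tfrac12\|f\|_\Bcal^2 - \|f\|_\Bcal \|f^*\|_{\Bcal^*}$ by the Cauchy--Schwarz inequality for the duality pairing. Since $\Bcal$ is reflexive, bounded sets are relatively weakly compact, so the direct method of the calculus of variations yields a minimizer $f \in \Bcal$ of $\phi$. By Fermat's rule and the subdifferential sum rule, $\zero \in \partial\phi(f) = \partial(\tfrac12\|\cdot\|_\Bcal^2)(f) - f^*$; because $\Bcal$ is G\^ateaux differentiable the subdifferential $\partial(\tfrac12\|\cdot\|_\Bcal^2)$ is exactly the single-valued duality mapping $j$, so $j(f) = f^*$, and by the definition \eqref{eq:sip} of the s.i.p.\ this reads $f^* = \dual{\cdot}{j(f)} = [\cdot, f]$. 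The norm identity then follows from the shape of $j$: for $f \ne \zero$ one has $j(f) = \|f\|_\Bcal F$ with $\|F\|_{\Bcal^*} = 1$ and $\dual{f}{F} = \|f\|_\Bcal$, whence $\|f^*\|_{\Bcal^*} = \|j(f)\|_{\Bcal^*} = \|f\|_\Bcal$.

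For uniqueness I would use strict convexity: if $j(f_1) = j(f_2) = f^*$, then both $f_1$ and $f_2$ satisfy $\zero \in \partial\phi$ and hence minimize the strictly convex functional $\phi$ (the square of the strictly convex norm $\|\cdot\|_\Bcal$ is itself strictly convex), so $f_1 = f_2$; alternatively one may simply cite Giles' theorem that $j$ is injective whenever $\Bcal$ is strictly convex. I do not expect a genuine obstacle here --- the statement is in essence a corollary of the cited facts about the duality mapping --- and the only places that need mild care are the identification $\partial(\tfrac12\|\cdot\|^2) = j$ and the weak lower semicontinuity invoked in the direct method, both of which are standard on reflexive Banach spaces.
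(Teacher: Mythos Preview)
Your proof is correct. The paper does not give an independent proof of this theorem; it presents it as a summary of the preceding discussion, citing \citet{Giles67} for the uniqueness of the s.i.p.\ and the injectivity of $j$ under strict convexity, and \citet{Faulkner77} for the surjectivity of $j$ under reflexivity. You instead establish surjectivity directly by the variational method---minimizing $\phi(f)=\tfrac12\|f\|_\Bcal^2-\dual{f}{f^*}$ over the reflexive space $\Bcal$---and then read off $j(f)=f^*$ from Fermat's rule together with the identification $\partial(\tfrac12\|\cdot\|_\Bcal^2)=\{j\}$ under G\^ateaux differentiability. This is a genuinely different, more self-contained route: it avoids importing Faulkner's result as a black box and makes the existence argument explicit, at the cost of invoking weak compactness and the direct method. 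The paper's approach is shorter (one sentence of citations) but less transparent; yours would fit well in a self-contained appendix.
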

From now on, we identify the duality mapping $j$ with the star operator $f^* := j(f)$. 
Thus, we have a unique way to represent all continuous functionals on a s.i.p. space.  Conveniently, the unique s.i.p. on the dual space follows from \eqref{eq:sip}: for all $ f^*, g^* \in \Bcal^*$, 
%
\begin{align}
\label{eq:dual_sip_connection}
    [f^*, g^*] := [g, f] = \dual{g}{f^*},
\end{align}
from which one easily verifies all properties of an s.i.p.
Some literature writes $[f^*, g^*]_{\Bcal^*}$, $[g, f]_\Bcal$, $\dual{g}{f^*}_{\Bcal^*}$, and
$\dual{f}{g^*}_\Bcal$ to explicitize where the operations take place.
We simplify these notations by omitting subscripts when the context is clear,
but still write $\nbr{f}_\Bcal$ and $\nbr{f^*}_{\Bcal^*}$.

Finally, fix $x \in \Xcal$ and consider the evaluation (linear) functional $\ev_x: \Bcal \to \RR$, $f \mapsto f(x)$. When $\ev_x$ is continuous (which indeed holds for our norm \eqref{eq:norm_sublinear}), \Cref{thm:representer_operator} implies the existence of a unique $G_x \in \Bcal$ such that 
%
\begin{align}
\label{eq:eval_fx_rkbs}
    f(x) = \ev_x(f) = [f, G_x] = [G_x^*, f^*].
\end{align}
Varying $x \in \Xcal$ we obtain a unique \textit{s.i.p. kernel} $G: \Xcal \times \Xcal \to \RR$ such that 
$G_x := G(\cdot, x) \in \Bcal$. Thus, using s.i.p. we obtain the reproducing property:
%
\begin{align}
    f(x) = [f, G(\cdot, x)], ~ G(x,y) = [G(\cdot, y), G(\cdot, x)].
\end{align}
Different from a reproducing kernel in RKHS, $G$ is not necessarily symmetric or positive semi-definite.



\section{Regularized Risk Minimization}
\label{sec:RRM}

In this section we aim to provide a computational device for the following regularized risk minimization (RRM) problem:
%
\begin{align}
    \min_{f \in \Hcal} \ \ \ell(f) + \nbr{f}_\Hcal^2 + R(f)^2.
\end{align}
%
where $\ell(f)$ is the empirical risk depending on discriminant function values $\{f(x_j)\}_{j=1}^n$ for training examples $\{x_j\}$.
Clearly, this objective is equivalent to
%
\begin{align}
\label{eq:obj_rrm_rkbs}
    \min_{f \in \Bcal} \ \ \ell(f) + \nbr{f}_\Bcal^2.
\end{align}
%

\begin{remark}
    Unlike the usual treatment in reproducing kernel Banach spaces (RKBS) \citep[\eg][]{ZhaXuZha09}, we only require $\Bcal$ to be reflexive, strictly convex and G\^ateaux differentiable, instead of the much more demanding uniform convexity and smoothness. This more general condition not only suffices for our subsequent results but also simplifies the presentation. A similar definition like ours was termed pre-RKBS in \citet{CombettesSV18}.
\end{remark}

\citet[][Theorem 2]{ZhaXuZha09} established the  representer theorem for RKBS:
the optimal $f$ for \eqref{eq:obj_rrm_rkbs} has its dual form
%
\begin{align}
\label{eq:representer_thm}
    f^* 
    = \sum\nolimits_j c_j G_{x_j}^*,
\end{align}
%
where $\{c_j\}$ are real coefficients. 
To optimize $\{c_j\}$, we need to substitute \eqref{eq:representer_thm} into \eqref{eq:obj_rrm_rkbs},
which in turn requires evaluating 
i) $\nbr{f}_\Bcal^2$, which equals $\nbr{f^*}^2_{\Bcal^*}$;
ii) $f(x)$, which, can be computed through inverting the star operator as follows:
%
%
\begin{align*}
\nonumber
    \nbr{f^*}_{\Bcal^*} &= \max_{\|h\|_{\Bcal}\leq 1} \dual{h}{f^*} \\
    &= \max_{\|h\|_{\Bcal} \leq 1} \sum\nolimits_j c_j \dual{h}{ G^*_{x_j}} \\
    &= \max_{h: \nbr{h}_\Hcal^2 + R(h)^2 \le 1} \sum\nolimits_j c_j h(x_j),
\end{align*}
where the last equality is due to \eqref{eq:eval_fx_rkbs} and \eqref{eq:sip}.
The last maximization step operates in the RKHS $\Hcal$, 
and thanks to the strict convexity of $\|\cdot\|_{\Bcal}$,
it admits the unique solution
\begin{align}
    h = f / \|f\|_{\Bcal} = f / \|f^*\|_{\Bcal^*},
\end{align}
because $\dual{f}{f^*} = \|f\|_{\Bcal} \|f^*\|_{\Bcal^*}$, and $\Bcal$ is a s.i.p. space.

%

We summarize this computational inverse below:
\begin{theorem}
	\label{them:fx_unique}
	If $f^* =  \sum_j c_j G_{x_j}^*$,
	then $f = \nbr{f}_\Bcal f^{\circ}$,
	where
	%
	\begin{align}
	\label{eq:def_hopt}
		f^{\circ} &:= \argmax_{h: \nbr{h}_\Hcal^2 + R(h)^2 \le 1} \sum_j c_j h(x_j), \\
		\nbr{f}_\Bcal &= \sum\nolimits_j c_j f^{\circ}(x_j) = \inner{f^\circ}{\sum\nolimits_j c_j k(x_j, \cdot)}_\Hcal.
	\end{align}
	In addition, the argmax in \eqref{eq:def_hopt} is attained uniquely.
\end{theorem}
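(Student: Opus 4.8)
The plan is to recognize the maximization defining $f^{\circ}$ as the computation of the dual norm $\nbr{f^*}_{\Bcal^*}$, to obtain existence of a maximizer from reflexivity and uniqueness from strict convexity, and then to verify directly that $f/\nbr{f}_\Bcal$ is that (unique) maximizer. First I would rewrite the constraint: by \eqref{eq:norm_sublinear} the set $\{h : \nbr{h}_\Hcal^2 + R(h)^2 \le 1\}$ is exactly the closed unit ball of $\Bcal$. Next, combining \eqref{eq:eval_fx_rkbs} with \eqref{eq:dual_sip_connection} gives $h(x_j) = \dual{h}{G^*_{x_j}}$ for every $h \in \Bcal$, so with $f^* = \sum_j c_j G^*_{x_j}$ the objective in \eqref{eq:def_hopt} is the continuous linear functional $h \mapsto \dual{h}{f^*}$, and \eqref{eq:def_hopt} becomes $\max_{\nbr{h}_\Bcal \le 1}\dual{h}{f^*} = \nbr{f^*}_{\Bcal^*} = \nbr{f}_\Bcal$, the last equality being the isometry in \Cref{thm:representer_operator}. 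If $f = \zero$ then $f^* = j(f) = \zero$ (since $j$ is a bijection by \Cref{thm:representer_operator}), the objective is identically zero, and the claim $f = \nbr{f}_\Bcal f^{\circ}$ holds trivially; so I assume $f \ne \zero$ hereafter.

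For existence, I would invoke reflexivity of $\Bcal$: its closed unit ball is weakly compact and $\dual{\cdot}{f^*}$ is weakly continuous, so the supremum is attained, say at $f^{\circ}$. A short but essential observation is that any such maximizer satisfies $\nbr{f^{\circ}}_\Bcal = 1$: the duality (Cauchy--Schwarz) inequality gives $\nbr{f}_\Bcal = \dual{f^{\circ}}{f^*} \le \nbr{f^{\circ}}_\Bcal\,\nbr{f^*}_{\Bcal^*} = \nbr{f^{\circ}}_\Bcal\,\nbr{f}_\Bcal$, and since $\nbr{f}_\Bcal > 0$ this forces $\nbr{f^{\circ}}_\Bcal \ge 1$, hence $\nbr{f^{\circ}}_\Bcal = 1$ together with feasibility.

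Uniqueness is the one step that needs genuine input, and it is precisely where strict convexity of $\nbr{\cdot}_\Bcal$ enters. Suppose $h_1 \ne h_2$ are both maximizers; each has $\Bcal$-norm $1$ by the previous paragraph, and $h := \tfrac12(h_1 + h_2)$ is feasible (triangle inequality) and attains the optimum (linearity), so it too is a maximizer and therefore also has $\nbr{h}_\Bcal = 1$. Then $\nbr{h_1 + h_2}_\Bcal = 2 = \nbr{h_1}_\Bcal + \nbr{h_2}_\Bcal$, and strict convexity of $\nbr{\cdot}_\Bcal$ (established in the paragraph following the definition of strict convexity) yields $h_2 = \alpha h_1$ with $\alpha \ge 0$; equating norms forces $\alpha = 1$, contradicting $h_1 \ne h_2$. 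Hence the argmax in \eqref{eq:def_hopt} is attained uniquely.

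It remains to identify the maximizer. I would check that $h := f/\nbr{f}_\Bcal$ is admissible ($\nbr{h}_\Bcal = 1$) and, using \eqref{eq:sip} together with the norm-inducing property $[f,f] = \nbr{f}_\Bcal^2$, that $\dual{h}{f^*} = \nbr{f}_\Bcal^{-1}\dual{f}{j(f)} = \nbr{f}_\Bcal^{-1}[f,f] = \nbr{f}_\Bcal = \nbr{f^*}_{\Bcal^*}$, which is the optimal value; so by uniqueness $f^{\circ} = f/\nbr{f}_\Bcal$, i.e. $f = \nbr{f}_\Bcal f^{\circ}$. The remaining identities are bookkeeping: $\nbr{f}_\Bcal$ equals the optimal value $\sum_j c_j f^{\circ}(x_j)$ by the reformulation above, and since $f^{\circ}$ lives in the common underlying space $\Fcal$, the ordinary RKHS reproducing property rewrites this as $\inner{f^{\circ}}{\sum_j c_j k(x_j,\cdot)}_\Hcal$. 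I expect no substantive obstacle beyond the uniqueness step; the only easy-to-miss point is the sphere claim $\nbr{f^{\circ}}_\Bcal = 1$, without which the strict-convexity argument would not apply.
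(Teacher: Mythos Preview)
Your proof is correct and follows essentially the same approach as the paper, which (in the paragraph immediately preceding the theorem) rewrites the maximization as the dual-norm computation $\nbr{f^*}_{\Bcal^*}$, invokes strict convexity for uniqueness, and identifies the optimizer as $f/\nbr{f}_\Bcal$ from $\dual{f}{f^*} = \nbr{f}_\Bcal \nbr{f^*}_{\Bcal^*}$. You have simply filled in details the paper leaves implicit---existence via reflexivity, the sphere claim $\nbr{f^\circ}_\Bcal = 1$, the explicit strict-convexity uniqueness argument, and the degenerate case $f=\zero$.
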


In practice, we first compute $f^{\circ}$ by solving \eqref{eq:def_hopt},
and then $f$ can be evaluated at different $x$ without redoing any optimization.
As a special case, setting $f^* \! = G_x^*$ allows us to evaluate the kernel 
$    G_x = G^\circ_x(x) G_x^\circ$.

\paragraph{Specialization to RKHS.}
When $R(f)^2 = \sum_{i} \inner{z_i}{f}^2_\Hcal$,
$\nbr{\cdot}_\Bcal$ is induced by an inner product,
making $\Bcal$ an RKHS.
Now we can easily recover \eqref{eq:rkhs_warp_kernel} by applying Theorem \ref{them:fx_unique},
because the optimization in \eqref{eq:def_hopt} with $f^* = G^*_x$ is
%
\begin{align}
	\max_{h \in \Hcal} ~h(x), \quad \mathrm{s.t.} \quad \nbr{h}_\Hcal^2 + \sum\nolimits_{k} \inner{z_k}{h}^2_\Hcal \le 1,
\end{align}
and its unique solution can be easily found in closed form:
%
\begin{align}
	G_x^\circ = \frac{k(\cdot, x)- (z_1, \ldots, z_m) (I + K_z)^{-1} z(x)}{(k(x,x) - z(x)^\top (I+K_z)^{-1} z(x))^{1/2}}.
\end{align}
Plugging into $G_x = G^\circ_x(x) G_x^\circ$,
we recover \eqref{eq:rkhs_warp_kernel}.

Overall, the optimization of \eqref{eq:obj_rrm_rkbs} may no longer be convex in $\{c_j\}$,
because $f(x)$ is generally not linear in $\{c_j\}$ even though $f^*$ is (since the star operator is not linear).
In practice, we can initialize $\{c_j\}$ by training without $R(f)$ 
(\ie, setting $R(f)$ to 0).
Despite the nonconvexity,
we have achieved a new solution technique for a broad class of inverse problems,
where the regularizer is a semi-norm.

\section{Convex Representation Learning by Euclidean Embedding}
\label{sec:euc_embed}

Interestingly, our framework---which so far only learns a predictive model---can be directly extended to learn structured \textit{representations} in a \emph{convex} fashion.
In representation learning, one identifies an ``object'' for each example $x$,
which, in our case, can be a function in $\Fcal$ or a vector in Euclidean space.
Such a representation is supposed to have incorporated the prior invariances in $R$,
and can be directly used for other (new) tasks such as supervised learning without further regularizing by $R$.
This is different from the RRM in Section \ref{sec:RRM},
which, although still enjoys the representer theorem in the applications we consider,
only seeks a discriminant function $f$ without providing a new representation for each example.

Our approach to convex representation learning is based on Euclidean embeddings (a.k.a. finite approximation or linearization) of the kernel representers in a s.i.p. space,
which is analogous to the use of RKHS in extracting useful features.
However, different from RKHS, 
$G_x$ and $G^*_x$ play different roles in a \sip\ space,
hence require \emph{different} embeddings in $\RR^d$.
For any $f \in \Bcal$ and $g^* \in \Bcal^*$,
we will seek their Euclidean embeddings $\iota(f)$ and $\iota^*(g^*)$, respectively.
Note $\iota^*$ is just a notation, not to be interpreted as ``the adjoint of $\iota$.''


We start by identifying the properties that a reasonable Euclidean embedding should satisfy intuitively.
Motivated by the bilinearity of $\dual{\cdot}{\cdot}_\Bcal$,
it is natural to require
\begin{align}
\label{eq:target_FA}
    \dual{f}{g^*}_\Bcal \approx \inner{\iota(f)}{\iota^*(g^*)}, \quad \forall f \in \Bcal, g^* \in \Bcal^*,
\end{align}
where $\inner{\cdot}{\cdot}$ stands for Euclidean inner product.
As $\dual{\cdot}{\cdot}_\Bcal$ is bilinear,
$\iota$ and $\iota^*$ should be linear on $\Bcal$ and $\Bcal^*$ respectively.
Also note $\iota^*((f+g)^*) \neq \iota^*(f^*) + \iota^*(g^*)$ in general.

Similar to the linearization of RKHS kernels,
we can apply invertible transformations to $\iota$ and $\iota^*$.
For example, doubling $\iota$ while halving $\iota^*$ makes no difference.
We will just choose one representation out of them.
It is also noteworthy that in general, 
$\nbr{\iota(f)}$ (Euclidean norm) approximates $\nbr{f}_\Hcal$ instead of
$\nbr{f}_\Bcal$.
\eqref{eq:target_FA} is the only property that our Euclidean embedding needs to satisfy.

We start by embedding the unit ball $\Bsf \! := \{f \in \Fcal : \nbr{f}_\Bcal \le 1\}$.
Characterizing $R$ by support functions as in Proposition~ \ref{prop:reg_R_supfun},
a natural Euclidean approximation of $\nbr{\cdot}_\Bcal$ is
%
\begin{align}
\label{eq:def_B1_FA}
	\nbr{v}_\Bcaltil^2 &:= \nbr{v}^2 
	+ \max\nolimits_{g \in S} \inner{v}{\gtil}^2, \quad \forall\ v \in \RR^d, 
\end{align}
where $\gtil$ is the Euclidean embedding of $g$ in the original RKHS,
designed to satisfy that $\langle \ftil, \gtil \rangle \approx \inner{f}{g}_\Hcal$ for all $f, g \in \Hcal$ (or a subset of interest).
Commonly used embeddings include Fourier \citep{RahRec08}, hash \citep{ShiPetDroetal09}, Nystr\"om \citep{WilSee00b}, etc.
For example, given landmarks $\{z_i\}_{i=1}^n$ sampled from $\Xcal$,
the Nystr\"om approximation for a function $f \in \Hcal$ is
%
\begin{align}
\label{eq:Nystrom}
    \ftil &= K_z^{-1/2} (f(z_1), \ldots, f(z_n))^\top \\
    \where K_z &:= [k(z_i, z_j)]_{i,j} \in \RR^{n \times n}.
\end{align}
Naturally, the dual norm of $\nbr{\cdot}_\Bcaltil$ is
%
\begin{align}
    \nbr{u}_{\tilde{\Bcal}^*} &:= \max\nolimits_{v : \nbr{v}_\Bcaltil \le 1} \inner{u}{v}, \quad \forall\ u \in \RR^d. 
\end{align}
Clearly the unit ball of $\nbr{\cdot}_\Bcaltil$ and $\nbr{\cdot}_{\Bcaltil^*}$ are also symmetric,
and we denote them as $\Bsftil$ and $\Bsftil^*$, respectively.


\begin{figure}
\centering
\begin{tikzcd}[column sep=huge,row sep=huge]
\Bcal \arrow[r,scale=2,"\iota"] \arrow[d,shift left=.75ex,"j"] &
  \Bcaltil \arrow[d,shift left=.75ex,"\jtil"] \\
\Bcal^* \arrow[r,"\iota^*"] \arrow[u,,shift left=.75ex, "j^{-1}"] 
& \Bcaltil^* \arrow[u,shift left=.75ex,"\jtil^{-1}"]
\end{tikzcd}
\caption{The commutative diagram for our embeddings.}
    \label{fig:diagram}
\end{figure}
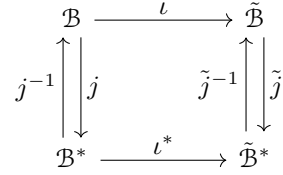

As shown in \Cref{fig:diagram}, we have the following commutative diagram. Let $j: \Bcal \to \Bcal^*$ be the star operator and $j^{-1}$ its inverse, and similarly for $\jtil: \Bcaltil \to \Bcaltil^*$ and its inverse $\jtil^{-1}$. Then, it is natural to require
%
\begin{align}
\label{eq:def_primal_FA}
    \iota = \jtil^{-1}\circ \iota^* \circ j,
\end{align}
where $\jtil^{-1}$ can be computed for any $u := \iota^*(f^*)$ via a Euclidean counterpart of \Cref{them:fx_unique}:
%
\begin{align}
    \jtil^{-1}(u) := \nbr{u}_{\Bcaltil^*} \cdot  
    \argmax\nolimits_{v \in \Bsftil} \inner{v}{u}.
\end{align}
The argmax is unique because $\nbr{\cdot}_\Bcaltil$ is strictly convex.

At last, how can we get $\iota^*(f^*)$ in the first place?
We start from the simpler case where $f^*$ has a kernel expansion as in 
\eqref{eq:representer_thm}.%
\footnote{We stress that although the kernel expansion \eqref{eq:representer_thm} is leveraged to \textit{motivate} the design of $\iota^*$, 
the underpinning foundation is that the span of $\{G_x^*: x \in \Xcal\}$ is dense in $\Bcal^*$ (Theorem \ref{thm:iotastar_linear}).
The representer theorem \citep[][Theorem 2]{ZhaXuZha09},
which showed that the solution to \eqref{eq:obj_rrm_rkbs} must be in the form of \eqref{eq:representer_thm},
is \textit{not} relevant to our construction.}
Here, by the linearity of $\iota^*$,
it will suffice to compute $\iota^*(G_x^*)$.
By Theorem \ref{them:fx_unique}, 
%
\begin{align*}
   G_x = G^{\circ}_x(x) G^{\circ}_x, \where G^{\circ}_x := \argmax\nolimits_{h \in \Bsf} h(x)
\end{align*} 
is uniquely attained.
Denoting $k_y := k(\cdot, y)$, it follows
%
\begin{align*}
    \dual{G_x}{G_y^*}_\Bcal \! \overset{\text{by }\eqref{eq:eval_fx_rkbs}}{=} \! 
    G(x, y) = 
    \inner{G_x}{k_y}_\Hcal \! 
    = \inner{G^{\circ}_x(x) G^{\circ}_x}{k_y}_\Hcal.
\end{align*}
So by comparing with \eqref{eq:target_FA},
it is natural to introduce
%
\begin{align}
    \iota^*(G_y^*) &:= \ktil_y, \\
\label{eq:def_primal_FA_rep}
    \iota(G_x) &:= G^{\circ}_x(x) \Gtil^{\circ}_x \approx \inner{\Gtil^{\circ}_x}{\ktil_x} \Gtil^{\circ}_x, \\
    \where \Gtil^{\circ}_x &:= \argmax\nolimits_{v \in \tilde{\Bsf}} \inner{v}{\ktil_x}.
\end{align}
The last optimization is \emph{convex} and can be solved very efficiently because, thanks to the positive homogeneity of $R$,
it is equivalent to
%
\begin{align}
\label{eq:po_equivalent}
	\min_{v} \big\{\nbr{v}^2 + \max\nolimits_{g \in S} \inner{v}{\gtil}^2 \big\} \quad s.t. \quad  v^\top \ktil_x = 1.
\end{align}
Detailed derivation and proof are relegated to Appendix \ref{app:solve_po}.
To solve \eqref{eq:po_equivalent},
LBFGS with projection to a hyperplane (which has a trivial closed-form solution) turned out to be very efficient in our experiment.
%
%
Overall, the construction of $\iota(f)$ and $\iota^*(f^*)$ for $f^*$ from \eqref{eq:representer_thm} proceeds as follows:
\begin{enumerate}[topsep=1pt,itemsep=3pt]
    \item Define $\iota^*(G_x^*) = \ktil_x$;
    \item Define $\iota^*(f^*) = \sum_i \alpha_i \ktil_{x_i}$ for $f^* = \sum_i \alpha_i G^*_{x_i}$;
    \item Define $\iota(f)$ based on $\iota^*(f^*)$ by using \eqref{eq:def_primal_FA}.
\end{enumerate}

In the next subsection, we will show that these definitions are sound,
and both $\iota$ and $\iota^*$ are linear.
However, the procedure may still be inconvenient in computation,
because $f$ needs to be first dualized to $f^*$,
which in turn needs to be expanded into the form of \eqref{eq:representer_thm}.
Fortunately, our representation learning only needs to compute the embedding of $G_x$,
bypassing all these computational challenges.


\subsection{Analysis of Euclidean Embeddings}
\label{sec:analysis_embedding}

The previous derivations are based on the necessary conditions for \eqref{eq:target_FA} to hold.
We now show that $\iota$ and $\iota^*$ are well-defined,
and are linear.
To start with,
denote the base Euclidean embedding on $\Hcal$ by $T: \Hcal \to \RR^d$,
where $T(f) = \ftil$.
Then by assumption, $T$ is linear and $\ktil_x = T(k(\cdot, x))$.

\begin{theorem}	
	\label{thm:iotastar_linear}
	$\iota^*(f^*)$ is well defined for all $f^* \in \Bcal^*$,
	and $\iota^*: \Bcal^* \to \RR^d$ is linear.
	That is, 
	\begin{enumerate}[label=\alph*),topsep=0pt,itemsep=3pt]
		\item If $f^* = \sum_i \alpha_i G_{x_i}^* = \sum_j \beta_j G_{z_j}^*$ are two different expansions of $f^*$,
		then $\sum_i \alpha_i \ktil_{x_i} = \sum_j \beta_j \ktil_{z_j}$.
		\item The linear span of $\{G_x^* : x \in \Xcal\}$ is dense in $\Bcal^*$.
		So extending the above to the whole $\Bcal^*$ is straightforward thanks to the linearity of $T$.
	\end{enumerate}
\end{theorem}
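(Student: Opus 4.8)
The plan is to establish the claim in three moves: first show that $\iota^*$ is well defined and linear on the (non-closed) linear span $V := \myspan{G_x^* : x \in \Xcal}$, which is precisely part~(a); then prove part~(b), that $V$ is dense in $\Bcal^*$; and finally extend $\iota^*$ from $V$ to all of $\Bcal^*$ by continuity. The single observation that drives everything is that $\Hcal$ and $\Bcal$ have the \emph{same} underlying vector space $\Fcal$, together with the identity $\dual{h}{G_x^*} = [h, G_x] = h(x)$ for every $h \in \Fcal$, which follows from \eqref{eq:sip} and \eqref{eq:eval_fx_rkbs}.

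For part~(a), by linearity it suffices to show that $\sum_\ell \gamma_\ell G_{y_\ell}^* = \zero$ in $\Bcal^*$ forces $\sum_\ell \gamma_\ell \ktil_{y_\ell} = \zero$ in $\RR^d$. Pairing the vanishing functional against an arbitrary $h \in \Fcal$ gives $\sum_\ell \gamma_\ell h(y_\ell) = 0$; specializing to $h = k(\cdot, w)$ and using the symmetry and reproducing property of $k$, this becomes $\inner{\sum_\ell \gamma_\ell k(\cdot, y_\ell)}{k(\cdot, w)}_\Hcal = 0$ for every $w \in \Xcal$. Since the kernel sections are total in $\Hcal$, we conclude $\sum_\ell \gamma_\ell k(\cdot, y_\ell) = \zero$ as an element of $\Hcal$; applying the \emph{linear} base embedding $T$ and recalling $\ktil_y = T(k(\cdot, y))$ finishes this step, and the same computation shows $\iota^*$ is linear on $V$.

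For part~(b), I would use that $\Bcal$ is reflexive. By the Hahn--Banach theorem the closure of $V$ in $\Bcal^*$ is the set of functionals annihilated by every element of $\Bcal^{**}$ that annihilates $V$; reflexivity identifies $\Bcal^{**}$ with $\Bcal$, so it is enough to check that the only $f \in \Bcal$ with $\dual{f}{G_x^*} = 0$ for all $x \in \Xcal$ is $f = \zero$. But $\dual{f}{G_x^*} = f(x)$, so this condition says $f$ vanishes pointwise on $\Xcal$, i.e. $f = \zero$ in $\Fcal$. Hence $V$ is dense in $\Bcal^*$. (This step quietly uses that $\ev_x$ is continuous on $\Bcal$, so that $G_x^*$ genuinely lies in $\Bcal^*$, which is already guaranteed for the norm \eqref{eq:norm_sublinear}.)

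Finally, to extend $\iota^*$ to all of $\Bcal^*$ it suffices to bound it on $V$. Given $f^* = \sum_i \alpha_i G_{x_i}^*$, put $g := \sum_i \alpha_i k(\cdot, x_i) \in \Hcal$; the computation of part~(a) gives $\inner{h}{g}_\Hcal = \dual{h}{f^*}$ for every $h \in \Fcal$. Taking $h = g$ and using the norm equivalence $\nbr{\cdot}_\Hcal \le \nbr{\cdot}_\Bcal \le C\nbr{\cdot}_\Hcal$ from \eqref{eq:norm_sublinear} together with Cauchy--Schwarz yields $\nbr{g}_\Hcal^2 = \dual{g}{f^*} \le \nbr{g}_\Bcal \nbr{f^*}_{\Bcal^*} \le C \nbr{g}_\Hcal \nbr{f^*}_{\Bcal^*}$, hence $\nbr{g}_\Hcal \le C \nbr{f^*}_{\Bcal^*}$. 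Since $\iota^*(f^*) = T(g)$ and $T$ is bounded, $\nbr{\iota^*(f^*)} \le C \nbr{T} \nbr{f^*}_{\Bcal^*}$, so a bounded linear map on the dense subspace $V$ extends uniquely to a bounded linear map on $\Bcal^*$. Equivalently, and more transparently, one may define $\iota^* := T \circ \Psi$ where $\Psi : \Bcal^* \to \Hcal$ sends $f^*$ to the Riesz representer in $\Hcal$ of the bounded functional $h \mapsto \dual{h}{f^*}$, so that $\Psi(G_x^*) = k(\cdot, x)$ and $\iota^*$ agrees with the kernel-expansion formula on $V$ by construction. The step needing genuine care is part~(b): the density claim really uses reflexivity of $\Bcal$, so that testing against $\Bcal^{**}$ collapses to testing against point evaluations by elements of $\Bcal$; everything else is bookkeeping once the shared-vector-space identity $\dual{h}{G_x^*} = h(x)$ is in hand.
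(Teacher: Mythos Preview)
Your proof is correct and follows essentially the same route as the paper's: part~(a) via pairing against arbitrary $h$ and applying the linear map $T$, and part~(b) via Hahn--Banach plus the point-evaluation identity $\dual{f}{G_x^*}=f(x)$. Your treatment is in fact more careful than the paper's in two places: you make the use of reflexivity in part~(b) explicit (the paper's Hahn--Banach step tacitly relies on $\Bcal^{**}\cong\Bcal$), and you supply the boundedness of $\iota^*$ on $V$ needed to extend by density, which the paper glosses over with ``straightforward thanks to the linearity of $T$.''
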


We next analyze the linearity of $\iota$.
To start with, we
make two assumptions on the Euclidean embedding of $\Hcal$.

\begin{assumption}[surjectivity]
	\label{assum:surjective}
	For all $v \in \RR^d$, there exists a $g_v \in \Hcal$ such that $\gtil_v = v$.
\end{assumption}
Assumption \ref{assum:surjective} does not cost any generality,
because it is satisfied whenever the $d$ coordinates of the embedding are linearly independent.
Otherwise, this can still be enforced easily by projecting to an orthonormal basis of $\{\gtil: g \! \in \Hcal\}$.
 
\begin{assumption}[lossless]
	\label{assum:lossless}	
	 $\inner{\ftil}{\gtil} = \inner{f}{g}_\Hcal$ for all $f, g \in \Hcal$.
	This is possible when, \eg, $\Hcal$ is finite dimensional.
\end{assumption}
 
\begin{theorem}	
	\label{thm:iot_linear}
	$\iota \! :\! \Bcal \! \to \! \RR^d$ is linear under Assumptions \ref{assum:surjective} \& \ref{assum:lossless}.
\end{theorem}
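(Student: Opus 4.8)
The plan is to trace through the defining relation $\iota = \jtil^{-1} \circ \iota^* \circ j$ and reduce the linearity of $\iota$ to a statement purely about the duality mappings, which we then verify using the s.i.p.\ structure. Since $\iota^*$ is already linear (Theorem \ref{thm:iotastar_linear}) and $j: \Bcal \to \Bcal^*$ is the star operator (which is \emph{not} linear), the whole burden is to show that the nonlinearities of $j$ and $\jtil^{-1}$ cancel. Concretely, I want to establish that $\iota$ intertwines the two Banach-space structures, i.e.\ that $\nbr{\iota(f)}_\Bcaltil = \nbr{f}_\Bcal$ and, more importantly, that $\iota$ sends the s.i.p.\ of $\Bcal$ to the Euclidean-approximation s.i.p.\ of $\Bcaltil$. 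Under Assumption \ref{assum:lossless}, the base map $T$ is an isometric embedding of $\Hcal$ into $\RR^d$, and under Assumption \ref{assum:surjective} it is onto; combining, $T$ identifies $\Hcal$ with $(\RR^d, \inner{\cdot}{\cdot})$ as Hilbert spaces, and hence identifies the warped norm $\nbr{\cdot}_\Bcal$ with $\nbr{\cdot}_\Bcaltil$ via $\nbr{Tf}_\Bcaltil^2 = \nbr{f}_\Hcal^2 + \max_{g \in S}\inner{Tf}{\gtil}^2 = \nbr{f}_\Hcal^2 + \max_{g\in S}\inner{f}{g}_\Hcal^2 = \nbr{f}_\Bcal^2$.

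The key steps, in order, are: (i) Show $\iota = T$ on $\Bcal$ as a set map. For this, take $f \in \Bcal$ and use that by Theorem \ref{thm:iotastar_linear}(b) we may approximate $f^* = j(f)$ by finite kernel expansions $\sum_i \alpha_i G_{x_i}^*$; the construction gives $\iota^*(f^*) = \sum_i \alpha_i \ktil_{x_i} = \sum_i \alpha_i T(k_{x_i}) = T(\sum_i \alpha_i k_{x_i})$. Now identify $\sum_i \alpha_i k_{x_i}$: in the RKHS it is the Riesz representer of the functional $h \mapsto \sum_i \alpha_i h(x_i) = \dual{h}{f^*}$, i.e.\ it equals $j^{-1}$ applied to $f^*$ \emph{viewed as an element of $\Hcal^*$}, which is not the same as $f = j^{-1}_\Bcal(f^*)$. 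So care is needed here: I should instead argue at the level of $\jtil^{-1}$. Apply $\jtil^{-1}$ to $u := \iota^*(f^*) = T(\sum_i \alpha_i k_{x_i})$. Because $T$ is an isometry of Hilbert spaces, the optimization $\argmax_{v \in \Bsftil}\inner{v}{u}$ is the $T$-image of $\argmax_{h \in \Bsf}\inner{h}{\sum_i \alpha_i k_{x_i}}_\Hcal = \argmax_{h \in \Bsf}\sum_i\alpha_i h(x_i)$, which by Theorem \ref{them:fx_unique} is exactly $f^\circ = f/\nbr{f}_\Bcal$; and $\nbr{u}_{\Bcaltil^*} = \nbr{f^*}_{\Bcal^*} = \nbr{f}_\Bcal$ by the isometry. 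Hence $\jtil^{-1}(u) = \nbr{f}_\Bcal \cdot T(f^\circ) = T(f)$. (ii) Conclude $\iota = T$, which is linear by hypothesis; pass from finite expansions to all of $\Bcal$ by density (Theorem \ref{thm:iotastar_linear}(b)) together with continuity of all maps involved (the norms are equivalent, $\jtil^{-1}$ is continuous since $\nbr{\cdot}_\Bcaltil$ is strictly convex and Gâteaux differentiable, so the argmax is single-valued and continuous).

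The main obstacle I anticipate is step (i): making rigorous the claim that the Euclidean argmax problem defining $\jtil^{-1}$ is the faithful image under $T$ of the RKHS argmax problem defining $f^\circ$. This requires Assumption \ref{assum:lossless} to guarantee $\inner{Tf}{Tg} = \inner{f}{g}_\Hcal$ (so the constraint sets $\Bsf$ and $\Bsftil$ correspond and the objectives match), and Assumption \ref{assum:surjective} to ensure that the supremum over $g \in S$ inside $\nbr{\cdot}_\Bcaltil$ really is over the $T$-images of the same set $S$ that defines $R$ — i.e.\ that nothing in $\RR^d$ "outside the image of $\Hcal$" spoils the correspondence. Once the two optimization problems are identified, uniqueness of both argmaxes (from strict convexity, already in hand) forces $\jtil^{-1}\circ\iota^*\circ j = T$ on the dense set of finitely-supported $f$, and a routine density/continuity argument extends it to all of $\Bcal$, giving linearity of $\iota$.
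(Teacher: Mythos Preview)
Your proposal is correct and takes a genuinely different, more direct route than the paper. The paper proceeds by first establishing five auxiliary lemmas (norm preservation $\nbr{\iota(g)}_\Bcaltil = \nbr{g}_\Bcal$, the exact duality $\inner{\iota(f)}{\iota^*(g^*)} = \dual{f}{g^*}$, the identifications $\Bsftil = \iota(\Bsf)$ and $\Bsftil^* = \iota^*(\Bsf^*)$, and a matching of the two argmax problems), and then verifies homogeneity and additivity of $\iota$ separately, the latter by a somewhat delicate argument showing $\iota(f_1)+\iota(f_2)$ and $\iota(f_1+f_2)$ both realise the same unique polar-operator point. You instead cut straight to the identification $\iota = T$: once Assumptions~\ref{assum:surjective} and~\ref{assum:lossless} make $T$ a Hilbert-space isomorphism $\Hcal \cong \RR^d$, the two constrained optimisations defining $j^{-1}$ and $\jtil^{-1}$ are literally $T$-images of one another, so $\jtil^{-1}\circ\iota^*\circ j = T$. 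This is shorter and conceptually cleaner; the paper's approach, on the other hand, yields the five lemmas as byproducts, and those are exactly the properties highlighted in the main text after the theorem statement, so the longer proof is doing double duty.

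One small remark on your closing density/continuity step: your justification that ``$\jtil^{-1}$ is continuous since $\nbr{\cdot}_\Bcaltil$ is strictly convex and G\^ateaux differentiable'' is not quite enough in infinite dimensions (G\^ateaux differentiability alone does not force norm-to-norm continuity of the duality map). However, this is moot here: Assumptions~\ref{assum:surjective} and~\ref{assum:lossless} together force $\dim\Hcal = d$, so $\Bcal$ and $\Bcal^*$ are finite-dimensional, the dense span of $\{G_x^*\}$ is all of $\Bcal^*$, and no limiting argument is needed at all. It would tighten your write-up to note this explicitly rather than invoking continuity.
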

 
Although Theorems \ref{thm:iotastar_linear} and \ref{thm:iot_linear} appear intuitive,
the proof for the latter is rather nontrivial and is deferred to Appendix \ref{app:proof}.
Some lemmas there under Assumptions \ref{assum:surjective} and \ref{assum:lossless} may be of interest too,
hence highlighted here.
\begin{enumerate}[topsep=0pt,itemsep=0pt]
    \item $\inner{\iota(f)}{\iota^*(g^*)} = \dual{f}{g^*}$, 
	$\forall\ f \in \Bcal, g^* \in \Bcal^*$.
    \item $\nbr{g}_{\Bcal} = \nbr{g^*}_{\Bcal^*} = \nbr{\iota^*(g^*)}_{\Bcaltil^*}
	= \nbr{\iota(g)}_{\Bcaltil}$, $\forall\ g \in \Bcal$.
	\item $\Bsftil = \iota(\Bsf) := \{\iota(f) : \nbr{f}_\Bcal \le 1\}$.
	\item $\Bsftil^* = \iota^*(\Bsf^*) := \{\iota^*(g^*) : \nbr{g^*}_{\Bcal^*} \le 1\}$.
	\item $\max_{v \in \Bsftil} \inner{v}{\iota^*(g^*)}
	= 
	\max_{f \in \Bsf} \dual{f}{g^*},
	\ \ \forall g^* \in \Bcal^*$.
\end{enumerate}

\subsection{Analysis under Inexact Euclidean Embedding}

When Assumption \ref{assum:lossless} is unavailable,
Theorem \ref{thm:iotastar_linear} still holds,
but the linearity of $\iota$ has to be relaxed to an approximate sense.
To analyze it,
we first rigorously quantify the inexactness of the Euclidean embedding $T$.
Consider a subspace based embedding, such as Nystr\"om approximation.
Here $T$ satisfies that there exists a countable set of orthonormal bases $\{e_i\}_{i=1}^\infty$ of $\Hcal$,
such that 
\begin{enumerate}[topsep=0pt,itemsep=0pt]
    \item $T e_k = 0$ for all $k > d$,
    \item $\inner{Tf}{Tg} = \inner{f}{g}_\Hcal$, \ \ $\forall f, g \in V \! := \! \text{span} \{e_1, \ldots, e_d\}$.
\end{enumerate}
Clearly the Nystr\"om approximation in \eqref{eq:Nystrom} satisfies these conditions, 
where $d = n$,
and $\{e_1, \ldots, e_d\}$ is any orthornormal basis of $\{k_{z_1}, \ldots, k_{z_d}\}$ 
(assuming $d$ is no more than the dimensionality of $\Hcal$).
\begin{definition}
\label{def:eps_approx}
    $f \in \Hcal$ is called $\epsilon$-approximable by $T$ if 
    \begin{align}
        \nbr{f - \sum\nolimits_{i=1}^d \inner{f}{e_i}_\Hcal e_i}_\Hcal \le \epsilon.
    \end{align}
    In other words, the component of $f$ in $V^\perp$ is at most $\epsilon$.
\end{definition}

\begin{theorem}[The proof is in Appendix \ref{sec:app_inexact}]
\label{thm:gap_inexact}
    Let $f, g \in \Fcal$  and $\alpha \in \RR$.
    Then $\iota(\alpha f_1) = \alpha \iota(f_1)$.
    If $f$, $g$, and all elements in $S$ are $\epsilon$-approximable by $T$,
    then 
    \begin{align}
    \label{eq:gap_bilinear_inexact}
        \abr{\inner{\iota(f)}{\iota^*(g^*)} - \dual{f}{g^*}} &= O(\sqrt{\epsilon}) \\
    \label{eq:gap_linear_inexact}
        \nbr{\iota(f+ g) - \iota(f) - \iota(g)} &= O(\sqrt{\epsilon}).
    \end{align}
\end{theorem}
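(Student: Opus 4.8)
\emph{Proof idea.} The claim bundles three statements: homogeneity $\iota(\alpha f)=\alpha\iota(f)$, the approximate pairing \eqref{eq:gap_bilinear_inexact}, and the approximate additivity \eqref{eq:gap_linear_inexact}. Homogeneity is unconditional and immediate, since the star operator obeys $(\alpha f)^*=\alpha f^*$ (using symmetry of the unit ball for $\alpha<0$), $\iota^*$ is linear by \Cref{thm:iotastar_linear}, and $\jtil^{-1}$ is homogeneous because $\Bsftil$ is symmetric; hence $\iota=\jtil^{-1}\circ\iota^*\circ j$ is homogeneous. For the other two I would reduce everything to the single estimate
\[
  \nbr{\iota(f)-Tf}=O(\epsilon)\quad\text{whenever }f\text{ and every }g\in S\text{ are }\epsilon\text{-approximable by }T,
\]
which already implies the (weaker) stated $O(\sqrt\epsilon)$. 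Granting it: \eqref{eq:gap_linear_inexact} follows from $\iota(f+g)-\iota(f)-\iota(g)=[\iota(f+g)-T(f+g)]-[\iota(f)-Tf]-[\iota(g)-Tg]$, using linearity of $T$ and that $f+g$ is $2\epsilon$-approximable; and, writing $\zeta_{g^*}:=\sum_i\alpha_i k(\cdot,x_i)$ for any expansion $g^*=\sum_i\alpha_i G_{x_i}^*$, so that $\iota^*(g^*)=T\zeta_{g^*}$ and $\dual f{g^*}=\sum_i\alpha_i f(x_i)=\inner f{\zeta_{g^*}}_\Hcal$ by \eqref{eq:eval_fx_rkbs}, \eqref{eq:gap_bilinear_inexact} follows from the decomposition $\inner{\iota(f)}{\iota^*(g^*)}-\dual f{g^*}=\inner{\iota(f)-Tf}{T\zeta_{g^*}}-\inner{P_{V^\perp}f}{P_{V^\perp}\zeta_{g^*}}_\Hcal$, whose first term is $O(\epsilon)$ by the estimate and $\nbr{T\zeta_{g^*}}=O(\nbr g_\Hcal)$, and whose second term is $O(\epsilon^2)$ once $\zeta_{g^*}$ is seen to be $O(\epsilon)$-approximable.

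Two facts drive the estimate. First, a closed form for $\zeta_{f^*}$: it is the $\Hcal$-Riesz representative of $h\mapsto\dual h{f^*}=[h,f]_\Bcal$, which equals $\tfrac12 D(\nbr{\cdot}_\Bcal^2)(f)[h]$ by G\^ateaux differentiability of $\Bcal$; since $\tfrac12\nbr{\cdot}_\Bcal^2=\tfrac12\nbr{\cdot}_\Hcal^2+\tfrac12 R^2$, taking $\Hcal$-gradients and applying Danskin's theorem to the support function $R=\sup_{g\in S}\inner{\cdot}{g}_\Hcal$ yields $\zeta_{f^*}=f+R(f)\,\bar g_f$ with $\bar g_f\in\overline{\conv}(S)$, hence $\nbr{P_{V^\perp}\zeta_{f^*}}_\Hcal\le\nbr{P_{V^\perp}f}_\Hcal+R(f)\nbr{P_{V^\perp}\bar g_f}_\Hcal\le(1+C\nbr f_\Hcal)\epsilon$ with $C$ the RKHS-norm bound on $S$ from \Cref{prop:reg_R_supfun}; thus $\zeta_{f^*}$ inherits $\epsilon$-approximability. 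Second, variational descriptions: by \Cref{them:fx_unique}, $f=j^{-1}(f^*)=\argmin_{h\in\Hcal}\{\tfrac12\nbr h_\Bcal^2-\inner h{\zeta_{f^*}}_\Hcal\}$ (the $\Hcal$-gradient of this objective at $h$ is $\zeta_{h^*}-\zeta_{f^*}$, vanishing only at $h=f$), and $\iota(f)=\jtil^{-1}(T\zeta_{f^*})=\argmin_{v\in\RR^d}\{\tfrac12\nbr v_\Bcaltil^2-\inner v{T\zeta_{f^*}}\}$.

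Now the crux: on $V:=\mathrm{span}\{e_1,\dots,e_d\}$ the base embedding is lossless and $Te_k=0$ for $k>d$, so for $h\in V$ one has $\nbr{Th}_\Bcaltil=\nbr h_\Bcal$ and $\inner{Th}{T\zeta_{f^*}}=\inner h{\zeta_{f^*}}_\Hcal$; since $T$ restricts to a bijection $V\to\RR^d$, the Euclidean problem defining $\iota(f)$ \emph{is} the $\Hcal$-problem defining $f$ restricted to $V$, so $\iota(f)=Th^\star_V$ with $h^\star_V:=\argmin_{h\in V}\{\tfrac12\nbr h_\Bcal^2-\inner h{\zeta_{f^*}}_\Hcal\}$, and therefore $\nbr{\iota(f)-Tf}=\nbr{h^\star_V-P_Vf}_\Hcal\le\nbr{h^\star_V-f}_\Hcal$. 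Writing $B(h):=\tfrac12\nbr h_\Bcal^2-\inner h{\zeta_{f^*}}_\Hcal$, which is $1$-strongly convex with respect to $\nbr{\cdot}_\Hcal$ (it is $\tfrac12\nbr{\cdot}_\Hcal^2$ plus a convex term) and minimized at $f$, strong convexity and $P_Vf\in V$ give $\tfrac12\nbr{h^\star_V-f}_\Hcal^2\le B(h^\star_V)-B(f)\le B(P_Vf)-B(f)$; expanding the last difference and using $\nbr{P_{V^\perp}f}_\Hcal\le\epsilon$, the $\epsilon$-approximability of $S$ (to compare $R(P_Vf)$ with $R(f)$) and of $\zeta_{f^*}$, one finds $B(P_Vf)-B(f)\le\tfrac12\big((1+2C\nbr f_\Hcal)\epsilon\big)^2$, whence the estimate. (The objective gap here is already quadratic in the relevant displacement, which is why the square root from strong convexity cancels; a cruder comparison of two near-optimal feasible points would still give the stated $O(\sqrt\epsilon)$.)

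The main obstacle is the first fact above. Because $\iota^*$ is built through a non-canonical kernel expansion of $f^*$, the dependence of $\iota(f)$ on $f$ is opaque until one recognizes $\zeta_{f^*}$ as the $\Hcal$-Riesz representative of $[\cdot,f]_\Bcal$, equivalently $f+\grad(\tfrac12 R^2)(f)$; that identity is what makes the approximation error controllable. The rest is bookkeeping: propagating the two perturbations — the $V^\perp$-components of $f$ and of elements of $S$ — through the strongly convex variational problems defining $\iota$, keeping track of the equivalence constant $C$ between $\nbr{\cdot}_\Bcal$ and $\nbr{\cdot}_\Hcal$.
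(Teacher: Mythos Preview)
Your proposal is correct and in fact proves a stronger estimate than the paper. Both arguments hinge on the same identification: writing $\zeta_{f^*}$ (the paper's $a_f$) for the $\Hcal$-Riesz representative of $[\cdot,f]_\Bcal$, one has $\zeta_{f^*}=f+\grad(\tfrac12 R^2)(f)$ and $\iota^*(f^*)=T\zeta_{f^*}$, and both routes reduce \eqref{eq:gap_bilinear_inexact}--\eqref{eq:gap_linear_inexact} to bounding $\nbr{\iota(f)-Tf}$. The difference is how that bound is obtained. The paper works with the constrained (polar operator) form of the duality map, introduces an auxiliary mixed problem with solution $w^\circ$, and shows $\nbr{v^\circ-Tw^\circ}=O(\epsilon^2)$ and $\nbr{u^\circ-w^\circ}_\Hcal=O(\sqrt\epsilon)$ separately; the second step uses only that the two objectives differ pointwise by $O(\epsilon)$, which via strong convexity yields the square-root rate. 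You instead cast $\jtil^{-1}$ as the unconstrained minimization $\argmin_v\{\tfrac12\nbr v_\Bcaltil^2-\inner{v}{T\zeta_{f^*}}\}$ and observe that, since $T|_V$ is an isometry onto $\RR^d$ and $\nbr{Th}_\Bcaltil=\nbr h_\Bcal$, $\inner{Th}{T\zeta_{f^*}}=\inner h{\zeta_{f^*}}_\Hcal$ for $h\in V$, this is exactly the restriction to $V$ of $B(h)=\tfrac12\nbr h_\Bcal^2-\inner h{\zeta_{f^*}}_\Hcal$, whose global minimizer is $f$. Testing the feasible point $P_Vf$ and using $1$-strong convexity of $B$ gives $\tfrac12\nbr{h^\star_V-f}_\Hcal^2\le B(P_Vf)-B(f)$; crucially, every term in $B(P_Vf)-B(f)$ pairs two $V^\perp$-components (of $f$, of $\zeta_{f^*}$, or of some $g\in S$), each $O(\epsilon)$ under the approximability hypothesis, so the gap is $O(\epsilon^2)$ and hence $\nbr{\iota(f)-Tf}=O(\epsilon)$. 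This sidesteps the intermediate $w^\circ$ and sharpens the theorem to $O(\epsilon)$; the paper's route makes the orthogonal decomposition more explicit but loses a factor in Step~2 through the generic ``close objectives $\Rightarrow$ close minimizers'' argument.
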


To summarize, the primal embedding $\iota(G_x)$ as defined in \eqref{eq:def_primal_FA_rep} provides a new feature representation that incorporates structures in the data.
Based on it, a simple linear model can be trained to achieve the desired regularities in prediction.
We now demonstrate its flexibility and effectiveness on two example applications.

\section{Application 1: Mixup}
\label{sec:mixup}

Mixup is a data augmentation technique \citep{ZhaCisDauLop18}, 
where a pair of training examples $x_i$ and $x_j$ are randomly selected, 
and their convex interpolation is postulated to yield the same interpolation of output labels.
In particular, when $y_i \in \{0,1\}^m$ is the one-hot vector encoding the class that $x_i$ belongs to,
the loss for the pair is 
\begin{align}
\label{eq:mixup_vanilla}
	\EE_\lambda [\ell(f(\underbrace{\lambda x_i + (1-\lambda) x_j}_{=: \ \xtil_\lambda}), \
	\underbrace{\lambda y_i + (1-\lambda) y_j}_{=: \ \ytil_\lambda})].
\end{align}
Existing literature relies on stochastic optimization,
with a probability pre-specified on $\lambda$.
This is somewhat artificial.
Changing expectation to maximization appears more appealing,
but no longer amenable to stochastic optimization.

To address this issue and to learn representations that incorporate mixup prior while also accommodating classification with multiclass or even structured output,
we resort to a joint kernel $k((x, y), (x',y'))$,
whose simplest form is decomposed as $k^x(x,x') k^y(y,y')$.
Here $k^x$ and $k^y$ are separate kernels on input and output respectively.
Now a function $f(x,y)$ learned from the corresponding RKHS quantifies the ``compatibility'' between $x$ and $y$,
and the prediction can be made by $\argmax_y f(x,y)$.
In this setting, 
the $R(f)$ for mixup regularization can leverage the $\ell_p$ norm of $g_{ij}(\lambda) := \frac{\partial}{\partial \lambda} f(\xtil_\lambda, \ytil_\lambda)$ over $\lambda \in [0,1]$,
effectively accounting for an infinite number of invariances.


\begin{theorem}
\label{thm:mixup_admissible}
	$R_{ij}(f) := \nbr{g_{ij}(\lambda)}_p$ satisfies Assumption \ref{assump:Reg} for all $p \in (1, \infty)$.
	The proof is in Appendix \ref{app:proof}.
\end{theorem}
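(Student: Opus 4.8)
The plan is to show that $R_{ij}$ is a norm precomposed with a bounded linear operator $\Hcal \to L^p([0,1])$; once this structure is in place, every clause of Assumption \ref{assump:Reg} is routine. The only substantive step is representing the scalar derivative $g_{ij}$ as a continuous linear functional of $f$.

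First I would use the reproducing property of the joint RKHS $\Hcal$: writing $\phi(\lambda) := k((\cdot,\cdot),(\xtil_\lambda,\ytil_\lambda)) \in \Hcal$, we have $f(\xtil_\lambda,\ytil_\lambda) = \inner{f}{\phi(\lambda)}_\Hcal$ for all $f \in \Hcal$. Since $\xtil_\lambda = \lambda x_i + (1-\lambda)x_j$ and $\ytil_\lambda = \lambda y_i + (1-\lambda)y_j$ are affine in $\lambda$ and $k = k^x k^y$, the bivariate map $(s,t)\mapsto \inner{\phi(s)}{\phi(t)}_\Hcal = k((\xtil_s,\ytil_s),(\xtil_t,\ytil_t))$ is $C^2$ under the mild smoothness of $k^x,k^y$ that is implicitly needed for $g_{ij}$ to even be defined (e.g., kernels that are $C^2$, such as the Gaussian or polynomial kernels). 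By the classical criterion for mean-square (equivalently, $\Hcal$-norm) differentiability of a kernel curve, $\lambda \mapsto \phi(\lambda)$ is then continuously differentiable as a map $[0,1]\to\Hcal$; write $w_\lambda := \phi'(\lambda)\in\Hcal$, so $\lambda\mapsto w_\lambda$ is $\Hcal$-continuous and $M := \sup_{\lambda\in[0,1]}\nbr{w_\lambda}_\Hcal < \infty$. Interchanging the scalar derivative with the inner product (legitimate precisely because $\phi$ is $\Hcal$-differentiable) gives
\[
g_{ij}(\lambda) = \tfrac{\partial}{\partial\lambda}\inner{f}{\phi(\lambda)}_\Hcal = \inner{f}{w_\lambda}_\Hcal ,
\]
which is linear in $f$ and continuous (hence measurable and bounded) in $\lambda$ on $[0,1]$.

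The main obstacle is exactly this $\Hcal$-valued $C^1$ claim: one expands $\nbr{\phi(\lambda+h)-\phi(\lambda)}_\Hcal^2$ into a difference of kernel values, verifies it is $O(h^2)$, and identifies the limit through the mixed partial $\partial_s\partial_t k((\xtil_s,\ytil_s),(\xtil_t,\ytil_t))|_{s=t=\lambda}$ — the one place where smoothness of $k^x,k^y$ is genuinely used. Granting it, the rest is immediate. From $g_{ij}(\lambda)=\inner{f}{w_\lambda}_\Hcal$ and $\abr{g_{ij}(\lambda)}\le M\nbr{f}_\Hcal$ for all $\lambda$, together with the fact that $[0,1]$ carries unit Lebesgue measure, the operator $L\colon\Hcal\to L^p([0,1])$, $Lf := g_{ij}(\cdot)$, is well-defined, linear, and bounded with $\nbr{Lf}_{L^p}\le M\nbr{f}_\Hcal$. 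Now $R_{ij} = \nbr{\cdot}_{L^p}\circ L$: as a norm composed with a linear map it is convex and satisfies $R_{ij}(\alpha f)=\abr{\alpha}R_{ij}(f)$, i.e. a semi-norm, and it is finite on all of $\Fcal$. Closedness w.r.t. the $\Hcal$-topology is in fact stronger than needed: $\abr{R_{ij}(f_1)-R_{ij}(f_2)}\le R_{ij}(f_1-f_2)\le M\nbr{f_1-f_2}_\Hcal$, so $R_{ij}$ is $M$-Lipschitz, hence continuous, hence lower semicontinuous. This verifies Assumption \ref{assump:Reg}. (The hypothesis $p\in(1,\infty)$ is not needed for this part — $p\in[1,\infty]$ suffices — but it is what subsequently makes the warped norm G\^ateaux differentiable via Proposition \ref{prop:reg_R_supfun} and Theorem \ref{thm:ref}; indeed Assumption \ref{assump:Reg} can alternatively be read straight off Proposition \ref{prop:reg_R_supfun} by taking $S = \{\int_0^1\psi(\lambda)\,w_\lambda\,\rmd\lambda : \nbr{\psi}_{L^q([0,1])}\le 1\}$, which is $\Hcal$-bounded by $M$ and symmetric, using the duality $\nbr{h}_{L^p}=\sup_{\nbr{\psi}_{L^q}\le 1}\int_0^1 h\psi\,\rmd\lambda$.)
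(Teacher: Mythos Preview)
Your proposal is correct and follows essentially the same route as the paper: both identify $g_{ij}(\lambda)=\inner{f}{z_{ij}(\lambda)}_{\Hcal}$ with $z_{ij}(\lambda)=\tfrac{\partial}{\partial\lambda}k((\xtil_\lambda,\ytil_\lambda),(\cdot,\cdot))$ (your $w_\lambda$), then observe that $R_{ij}=\nbr{\cdot}_{L^p}\circ L$ for the linear map $L:f\mapsto\inner{f}{z_{ij}(\cdot)}_{\Hcal}$, which immediately yields the semi-norm properties and, for $p\in(1,\infty)$, G\^ateaux differentiability. Your write-up is more thorough than the paper's---you justify the $\Hcal$-differentiability of $\phi(\lambda)$ via the $C^2$ kernel criterion, explicitly bound $\nbr{L}$, verify lower semicontinuity through Lipschitz continuity, and supply the alternative via Proposition~\ref{prop:reg_R_supfun} with an explicit $S$---whereas the paper simply assumes $z_{ij}\in L_p$ and invokes the chain rule.
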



Clearly taking \emph{expectation} or \emph{maximization} over all pairs of $n$ training examples still satisfies Assumption \ref{assump:Reg}.
%
In our experiment, we will use the $\ell_\infty$ norm,
which despite not being covered by \Cref{thm:mixup_admissible},
is directly amenable to the embedding algorithm.
More specifically, 
for each pair $(x, y)$ we need to embed $k((\cdot,\cdot), (x,y))$ as a $d \times m$ matrix.
This is different from the conventional setting where each example $x$ employs one feature representation shared for all classes;
here the representation changes for different classes $y$.
To this end, we need to first embed each invariance 
$g_{ij}(\lambda)$ by
\begin{align*}
	Z^{ij}_\lambda := \smallfrac{\partial}{\partial \lambda} \big(\ktil_{\xhat_\lambda} \ytil_\lambda^\top\big)
	= \Big(\smallfrac{\partial}{\partial \lambda} \ktil_{\xhat_\lambda}\Big) \ytil_\lambda^\top
	+ \ktil_{\xhat_\lambda} (y_i - y_j)^\top.
\end{align*}
%
Letting $\inner{A}{B} = \tr(A^\top B)$ and $\nbr{V}^2_{\Fsf} \! = \! \inner{V}{V}$,
the Euclidean embedding $\iota(G_{x,y})$ can be derived by solving \eqref{eq:po_equivalent}:
\begin{align}
\label{eq:obj_mixup_kernel_embed}
	\min_{V \in \RR^{d \times m}} &\bigg\{\alpha\nbr{V}_{\Fsf}^2 + 
	\frac{1}{n^2} \sum\limits_{ij} \max_{\lambda \in [0,1]} \inner{V}{Z^{ij}_\lambda}^2 \bigg\} \\
	s.t. \ & \inner{V}{\ktil_x y^\top} = 1.
\end{align}
Although the maximization over $\lambda$ in \eqref{eq:obj_mixup_kernel_embed} is not concave, 
it is 1-D and a grid style search can solve it globally with 
$O(\frac{1}{\epsilon})$ 
complexity.
In practice, a local solver like L-BFGS almost always found its global optimum in 10 iterations.

%

\section{Application 2: Embedding Inference for Structured Multilabel Prediction}
\label{sec:multilabel}

In output space, there is often prior knowledge about pairwise or multi-way relationships between labels/classes.
For example, if an image represents a cat, then it must represent an animal, but not a dog (assuming there is at most one object in an image).
Such logic relationships of implication and exclusion can be highly useful priors for learning \citep{MirRavDinSch15,Dengetal12}.
One way to leverage it is to perform inference at test time so that the predicted multilabel conforms to these logic.
However, this can be computation intensive at test time,
and it will be ideal if the predictor has already accounted for these logic,
and at test time, one just needs to make binary decisions (relevant/irrelevant) for each individual category separately.
We aim to achieve this by learning a representation that embeds this structured prior.

To this end, it is natural to employ the joint kernel framework.
We model the implication relationship of $y_1 \to y_2$ by enforcing 
$f(x, y_2) \ge  f(x, y_1)$,
which translates to a penalty on the amount by which $f(x, y_1)$ is above $f(x, y_2)$
%
\begin{align}
	[f(x, y_1) - f(x, y_2)]_+,
	\ \text{ where } [z]_+ = \max \{0, z\}.
\end{align}
%
To model the mutual exclusion relationship of $y_1 \leftrightsquigarrow y_2$,
intuitively we can encourage that $f(x, y_1) + f(x, y_2)\le 0$,
\ie, a higher likelihood of being a cat demotes the likelihood of being a dog.
It also allows both $y_1$ and $y_2$ to be irrelevant,
\ie, both $f(x, y_1)$ and $f(x, y_2)$ are negative.
This amounts to another sublinear penalty on $f$:
$	[f(x, y_1) + f(x, y_2)]_+$.
To summarize, letting $\ptil$ be the empirical distribution,
we can define $R(f)$ by
%
\begin{align}
\label{eq:invstrucml}
	R(f)^2 := \expunder{x \sim \ptil} &\Big[\max_{y_1 \to y_2} [f(x, y_1) - f(x, y_2)]_+^2  \\
	& \quad + \max_{y_1 \leftrightsquigarrow y_2} [f(x, y_1) + f(x, y_2)]_+^2 \Big].
\end{align}
%
It is noteworthy that although $R(f)$ is positively homogeneous and convex (hence sublinear),
it is no longer absolutely homogeneous and therefore not satisfying Assumption \ref{assump:Reg}.
However, the embedding algorithm is still applicable without change.
It will be interesting to study the presence of kernel function $G$ in spaces ``normed'' by sublinear functions.
We leave it for future work.

\section{Experiments}

Here we highlight the major results and experiment setup.
Details on data preprocessing, experiment setting, optimization, and
additional results are given in Appendix \ref{app:experiment}.

\subsection{Sanity check for \sip\ based methods}

Our first experiment tries to test the effectiveness of optimizing the regularized risk \eqref{eq:obj_rrm_rkbs} with respect to the dual coefficients $\{c_j\}$ in \eqref{eq:representer_thm}. We compared 4 algorithms: 
{\sf SVM} with Gaussian kernel; 
{\sf Warping} which incorporates transformation invariance by kernel warping as described in \citet{MaGanZha19};
{\sf Dual} which trains the dual coefficients $\{c_j\}$ by LBFGS to minimize empirical risk as in \eqref{eq:obj_rrm_rkbs};
{\sf Embed} which finds the Euclidean embeddings by convex optimization as in \eqref{eq:po_equivalent}, followed by a linear classifier. 
The detailed derivation of the gradient in $\{c_j\}$ for {\sf Dual} is relegated to Appendix \ref{app:gradient_c}.

\renewcommand{\arraystretch}{1.2}
\begin{table}[t]
\caption{Test accuracy of minimizing empirical risk on binary classification tasks.}
\label{tab:opt_c}
\centering
\begin{tabular}{l|c|c|c|l}
\hline
         & {\sf SVM} & {\sf Warping} & {\sf Dual} & {\sf Embed} \\ \hline
4 v.s. 9 & 97.1         & 98.0           & 97.6           & 97.8                \\
2 v.s. 3 & 98.4         & 99.1           & 98.7           & 98.9                \\ \hline
\end{tabular}
\end{table}

Four transformation invariances were considered,
including rotation, scaling, and shifts to the left and upwards.
{\sf Warping} summed up the square of $\frac{\partial}{\partial \alpha} |_{\alpha=0} f(I(\alpha))$ over the four transformations,
while {\sf Dual} and {\sf Embed} took their max as the $R(f)^2$.
To ease the computation of derivative,
we resorted to finite difference for all methods,
with two pixels for shifting,
10 degrees for rotation, and 0.1 unit for scaling.
No data augmentation was applied.

All algorithms were evaluated on two binary classification tasks:
4 v.s. 9 and 2 v.s. 3, both sampling 1000 training and 1000 test examples from the MNIST dataset.

Since the square loss on the invariances used by {\sf Warping} makes good sense, 
the purpose of this experiment is \emph{not} to show that the \sip\ based methods are better in this setting.
Instead we aim to perform a sanity check on
a) good solutions can be found for the nonconvex optimization over the dual variables in {\sf Dual},
b) the Euclidean embedding of \sip\ representers performs competitively.
As Table \ref{tab:opt_c} shows,
both checks turned out affirmative,
with {\sf Dual} and {\sf Embed} delivering similar accuracy as {\sf Warping}.
In addition, {\sf Embed} achieved higher accuracy than dual optimization,
suggesting that the learned representations have well captured the invariances and possess better predictive power.

\subsection{Mixup}
\label{sec:exp_mixup}
We next investigated the performance of \sipsf\ on mixup.

\begin{table*}[t!]
\caption{Test accuracy on mixup classification task based on 10 random runs.}
\label{tab:mixup}
\centering
\scalebox{0.94}{
\begin{tabular}{r|c|ccc|ccc}
\hline
Dataset      &         & \multicolumn{3}{c|}{$n = 500$} & \multicolumn{3}{c}{$n = 1000$} \\ \hline
             & $p$       & $n$        & $2n$      & $4n$      & $n$        & $2n$      & $4n$      \\ \hline
MNIST        & \vanilla & $90.16\scriptstyle\pm1.40$      & $90.93\scriptstyle\pm1.01$     & $91.40\scriptstyle\pm1.04$     & $91.00\scriptstyle\pm1.17$      & $92.01\scriptstyle\pm1.21$     & $92.48\scriptstyle\pm1.03$     \\ \cline{2-8}
& \sipsf  & $\mathbf{91.36\scriptstyle\pm1.41}$      & $\mathbf{91.90\scriptstyle\pm1.08}$     & $\mathbf{92.11\scriptstyle\pm1.01}$     & $\mathbf{92.51\scriptstyle\pm1.01}$      & $\mathbf{92.79\scriptstyle\pm0.98}$     & $\mathbf{93.03\scriptstyle\pm1.00}$     \\ \hline

USPS         & \vanilla  & $90.54\scriptstyle\pm1.28$      & $91.76\scriptstyle\pm1.14$     & $92.40\scriptstyle\pm1.25$     & $93.87\scriptstyle\pm1.19$      & $94.72\scriptstyle\pm1.12$     & $95.32\scriptstyle\pm1.13$     \\ \cline{2-8}
             & \sipsf  & $\mathbf{92.46\scriptstyle\pm1.24}$      & $\mathbf{93.02\scriptstyle\pm1.12}$     & $\mathbf{93.21\scriptstyle\pm1.14}$     & $\mathbf{94.74\scriptstyle\pm0.97}$      & $\mathbf{95.11\scriptstyle\pm0.94}$     & $\mathbf{95.67\scriptstyle\pm0.96}$     \\ \hline
             
Fashion MNIST & \vanilla & $79.37\scriptstyle\pm3.11$     & $81.15\scriptstyle\pm2.08$    & $81.72\scriptstyle\pm1.96$    & $82.53\scriptstyle\pm1.49$     & $83.13\scriptstyle\pm1.36$    & $83.69\scriptstyle\pm1.31$    \\ \cline{2-8}
             & \sipsf  & $\mathbf{81.56\scriptstyle\pm2.27}$     & $\mathbf{82.16\scriptstyle\pm1.56}$    & $\mathbf{82.52\scriptstyle\pm1.49}$    & $\mathbf{83.28\scriptstyle\pm1.48}$     & $\mathbf{84.07\scriptstyle\pm1.32}$    & $\mathbf{84.34\scriptstyle\pm1.31}$    \\ \hline
\end{tabular}}

\vspace{1.5em}

  \caption{Test accuracy on  multilabel prediction with logic relationship}
  \label{tab:mlsvm}
  \centering
  %

\scalebox{0.96}{
   \begin{tabular}{l|ccc|ccc|ccc}
    \hline
    \multirow{2}{*}{Dataset}
            & \multicolumn{3}{c|}{\sipsf} & \multicolumn{3}{c|}{\mlsvm} & \multicolumn{3}{c}{\hrsvm}                                                                                 \\ \cline{2-10}
            & $ 100 $                     & $ 200 $                     & $ 500 $                    & $ 100 $  & $ 200 $  & $ 500 $  & $ 100 $  & $ 200 $         & $ 500 $         \\ \hline
    Enron   & $\mathbf{96.2} \dev{0.3} $             & $\mathbf{95.7} \dev{0.2}$             & $\mathbf{94.7} \dev{0.2}$            & $ 92.7 \dev{0.4}$ & $ 91.8 \dev{0.4} $ & $ 91.0 \dev{0.3} $ & $ 93.1 \dev{0.3}$ & $ 92.5 \dev{0.3}$  & $ 92.0 \dev{0.2} $
    \\ \hline
    Reuters & $\mathbf{95.7} \dev{1.4}$             & $\mathbf{97.2}\dev{1.2}$             & $\mathbf{98.0} \dev{0.4}$            & $ 94.2 \dev{1.4} $ & $ 95.1 \dev{1.3} $ & $ 95.2 \dev{1.2} $ & $ 95.1 \dev{1.2}$ & $ \mathbf{97.3} \dev{1.3} $        & $ 97.7 \dev{1.3}$                                        \\
    \hline
    WIPO    & $\mathbf{98.6}\dev{0.1}$             & $ \mathbf{98.4}\dev{0.1} $                    & $ 98.4 \dev{0.1} $                   & $ 98.1 \dev{0.3}$ & $ 98.2 \dev{0.2} $ & $ 98.3 \dev{0.1} $ & $ 98.3 \dev{0.1} $ & $\mathbf{98.5} \dev{0.1}$ & $\mathbf{98.7} \dev{0.2}$ \\
    \hline
  \end{tabular}}
\end{table*}

\vspace{-0.8em}
\paragraph{Datasets.}

We experimented with three image datasets: 
MNIST, USPS, and Fashion MNIST, each containing 10 classes. 
From each dataset, we drew $n \in \{500, 1000\}$ examples for training and $n$ examples for testing.
Based on the training data,
$p$ number of pairs were drawn from it.

Both \vanilla\ and \sipsf\ used Gaussian RKHS, 
along with Nystr\"om approximation whose landmark points consisted of the entire training set.
The vanilla mixup optimizes the objective \eqref{eq:mixup_vanilla} averaged over all sampled pairs. 
Following \citet{ZhaCisDauLop18}, 
The $\lambda$ was generated from a Beta distribution,
whose parameter was tuned to optimize the performance.  
Again, \sipsf\ was trained with a linear classifier.

\vspace{-0.75em}
\paragraph{Algorithms.}

We first ran mixup with stochastic optimization where pairs were drawn on the fly.
Then we switched to batch training of mixup (denoted as \vanilla),
with the number of sampled pair increased from $p = n$, $2n$, up to $5n$.
It turned out when $p = 4n$, the performance already matches the best test accuracy of the online stochastic version,
which generally witnesses much more pairs.
Therefore we also varied $p$ in $\{n, 2n, 4n\}$ when training \sipsf. each setting was evaluated 10 times with randomly sampled training and test data. The mean and standard deviation are reported in Table \ref{tab:mixup}.

\vspace{-0.75em}
\paragraph{Results.}

As Table \ref{tab:mixup} shows, 
\sipsf\ achieves higher accuracy than \vanilla\ on almost all datasets
and combinations of $n$ and $p$.
The margin tends to be higher when the training set size ($n$ and $p$) is smaller.
Besides, \vanilla\ achieves the highest accuracy at $p=4n$.

\subsection{Structured multilabel prediction}
\label{sec:expML}
Finally, we validate the performance of \sipsf\ on \textit{structured multilabel prediction} as described in Section
\ref{sec:multilabel},
showing that it is able to capture the structured relationships between the
class labels (implication and exclusion) in a hierarchical multilabel prediction task.

\vspace{-0.75em}
\paragraph{Datasets.}

We conducted experiments on three multilabel datasets where additional information is available about the hierarchy in its class labels \citep{Multilabel_dataset}:
Enron \cite{KliYan2004},
WIPO \cite{RouCraSaw06},
Reuters \cite{Davidetal04}.
Implication constraints were trivially derived from the hierarchy,
and we took siblings (of the same parent) as \textit{exclusion} constraints.
For each dataset, we
experimented with $100/100, 200/200, 500/500$ randomly drawn train/test examples.

\vspace{-0.6em}
\paragraph{Algorithms.}

We compared \sipsf\ with two baseline algorithms for  multilabel classification:
a multilabel SVM with RBF kernel (\mlsvm),
and an SVM that incorporates the hierarchical label constraints (\hrsvm) \cite{VatKubSar12}.
No inference is conducted at test time,
such as removing violations of implications or exclusions known a priori.

\vspace{-0.6em}
\paragraph{Results.}

Table \ref{tab:mlsvm} reports the accuracy
on the three train/test splits for each of the datasets.
Clearly, \sipsf\ outperforms both the baselines
in most of the cases.

\section{Conclusions and Future Work}

In this paper, we introduced a new framework of representation learning where an RKHS is turned into a semi-inner-product space via a semi-norm regularizer,
broadening the applicability of kernel warping to \textit{generalized} invariances,
\ie, relationships that hold irrespective of certain changes in data.
For example, 
the mixup regularizer enforces smooth variation irrespective of the interpolation parameter $\lambda$,
and the structured multilabel regularizer enforces logic relationships between labels regardless of input features.
Neither of them can be modeled convexly by conventional methods in transformation invariance,
and the framework can also be directly applied to non-parametric transformations \citep{PalKanAraSav17}.
An efficient Euclidean embedding algorithm was designed and its theoretical properties are analyzed.
Favorable experimental results were demonstrated for the above two applications.

This new framework has considerable potential of being applied to other invariances and learning scenarios.
For example, it can be directly used in maximum mean discrepancy  and the Hilbert–Schmidt independence criterion,
providing efficient algorithms that complement the mathematical analysis in \citet{FukLanSri11}.
It can also be applied to convex deep neural networks \citep{Ganapathiraman18,GanZhaYuWen16},
which convexify multi-layer neural networks through kernel matrices of the hidden layer outputs.

Other examples of generalized invariance include \emph{convex} learning of: 
a) node representations in large networks that are robust to topological perturbations \citep{ZugAkbGun18}. 
The exponential number of perturbation necessitates max instead of sum; 
b) equivariance based on the largest deviation under swapped transformations over the input domain \citep{RavSchPoc17}; 
and
c) co-embedding multiway relations that preserve co-occurrence and affinity between groups \citep{MirWhiGyoSch15}.

\section*{Acknowledgements}

We thank the reviewers for their constructive comments.
This work is supported by Google Cloud and NSF grant RI:1910146. YY thanks NSERC and the Canada CIFAR AI Chairs program for funding support.

{
\bibliographystyle{abbrvnat}
\bibliography{bibfile}
}
\clearpage


\appendix
\begin{center}
	\Large \textbf{Appendix}
\end{center}

The appendix has two major parts: 
proof for all the theorems and more detailed experiments 
(Appendix \ref{app:experiment}).
\section{Proofs}
\label{app:proof}
{\bf Proposition 1.}
	$R(f)$ satisfies Assumption \ref{assump:Reg} if, and only if,
  $R(f) = \sup_{g \in S} \inner{f}{g}_\Hcal$,
    where $S \subseteq \Hcal$ is bounded in the RKHS norm
    and is symmetric ($g \in S \Leftrightarrow -g \in S$).    
    
Recall

{\bf Assumption 1.}    
    We assume that $R: \Fcal \to \RR$ is a  semi-norm.
Equivalently, $R: \Fcal \to \RR$ is convex and $R(\alpha f) = \abr{\alpha} R(f)$ for all $f \in \Fcal$ and $\alpha \in \RR$ (absolute homogeneity).
Furthermore, we assume $R$ is closed (i.e., lower semicontinuous) \wrt the topology in $\Hcal$.

Proposition \ref{prop:reg_R_supfun} (in a much more general form), to our best knowledge, is due to \citet{Hormander54}. We give a ``modern'' proof below for the sake of completeness.

\begin{proof}[\underline{Proof for Proposition \ref{prop:reg_R_supfun}}]
$\phantom{.}$ \newline	
    The ``if'' part: convexity and absolute homogeneity are trivial.
    To show the lower semicontinuity, we just need to show the epigraph is closed. Let $(f_n, t_n)$ be a convergent sequence in the epigraph of $R$,
    and the limit is $(f, t)$.
    Then $\inner{f_n}{g}_\Hcal \le t_n$ for all $n$ and $g \in S$.
    Tending $n$ to infinty, we get $\inner{f}{g}_\Hcal \le t$.
    Take supremum over $g$ on the left-hand side, 
    and we obtain $R(f) \le t$,
    i.e., $(f, t)$ is in the epigraph of $R$.
    
    The ``only if'' part: 
    A sublinear function $R$ vanishing at the origin is a support function if, and only if, it is closed. Indeed, if $R$ is closed, then its conjugate function
    \begin{align}
        \lambda R^*(f^*) &= \lambda \rbr{\sup_{f} \inner{f}{f^*}_{\Hcal} - R(f)} \\
        &= \sup_f \inner{\lambda f}{f^*}_{\Hcal} - R(\lambda f) \\
        &= R^*(f^*), 
    \end{align}
    is scaling invariant for any positive $\lambda$, i.e., $R^*$ is an indicator function. Conjugating again we have $R = (R^*)^*$ is a support function. 
    So, $R$ is the support function of
    \[
    S = \dom(R^*) = \{g : \inner{f}{g}_\Hcal \le R(f) \text{ for all } f \in \Hcal\},
    \]
    which is obviously closed.
    $S$ is also symmetric, because the symmetry of $R$ implies the same for its conjugate function $R^*$, hence its domain $S$.

    To see $S$ is bounded, assume to the contrary we have $\lambda_n g_n \in S$ with $\|g_n\|_{\Hcal} = 1$ and $\lambda_n \to \infty$. Since $R$ is finite-valued and closed, it is continuous, see \citep[e.g.][Proposition 4.1.5]{BorwenVanderwerff10}. Thus, for any $\delta > 0$ there exists some $\epsilon > 0$ such that $\|f\|_{\Hcal} \leq \epsilon \implies R(f) \leq \delta$. Choose $f = \epsilon g_n$ in the definition of $S$ above we have:
    \begin{align}
    \epsilon \lambda_n = \inner{ \epsilon g_n}{\lambda_n g_n}_{\Hcal} \leq R(\epsilon g_n) \leq \delta,
    \end{align}
    which is impossible as $\lambda_n \to \infty$.
    %
    %
\end{proof}

\begin{proof}[\underline{Proof of Theorem \ref{thm:iotastar_linear}}]
	$\phantom{.}$ \newline	
	a):
	since $\sum_i \alpha_i G_{x_i}^* \! = \! \sum_j \beta_j G_{z_j}^*$,
	it holds that
	\begin{align}
	\dual{h}{\sum_i \alpha_i G_{x_i}^*}
	=
	\dual{h}{\sum_j \beta_j G_{z_j}^*},
	\ \forall\ h \in \Fcal
	\end{align}
	which implies that
	\begin{align}
	\sum_i \alpha_i h(x_i) = \sum_j \beta_i h(z_j), \quad \forall\, h \in \Fcal.
	\end{align}
	Therefore 
	\begin{align}
	\sum_i \alpha_i k(x_i, \cdot) = \sum_j \beta_j k(z_j, \cdot).
	\end{align}
	Then apply the linear map $T$ on both sides, and we immediately get 
	$\sum_i \alpha_i \ktil_{x_i} = \sum_j \beta_j \ktil_{z_j}$.
	
	b): suppose otherwise that the completion of $\text{span}\{G^*_x:x \in \Xcal\}$ is not $\Bcal^*$.
	Then by the Hahn-Banach theorem, there exists a nonzero function $f \in \Bcal$ such that 
	$\dual{f}{G_x^*} = 0$ for all $x \in \Xcal$.
	By \eqref{eq:eval_fx_rkbs}, this means $f(x) = 0$ for all $x$.
	Since $\Bcal$ is a Banach space of functions on $\Xcal$, $f = 0$ in $\Bcal$.
	Contradiction.
	
	The linearity of $\iota^*$ follows directly from a) and b).
\end{proof}

To prove Theorem \ref{thm:iot_linear}, we first introduce five lemmas.
To start with, we set up the concept of \emph{polar operator} that will be used extensively in the proof:
\begin{align}
\label{eq:def_grad_dual_FA}
	\mathop{\text{PO}}\nolimits_{\Bsftil}(u) 
	:=
	\argmax_{v \in \Bsftil} \inner{v}{u}, \quad
	\forall\, u \in \RR^d.
\end{align}

Here the optimization is convex,
and the argmax is uniquely attained because $\Bsftil$ is strictly convex.
So $\nbr{\cdot}_{\tilde{\Bcal}^*}$ is differentiable at all $u$,
and the gradient is
\begin{align}
	\grad \nbr{u}_{\tilde{\Bcal}^*} 
	=
	\mathop{\text{PO}}\nolimits_{\Bsftil}(u).
\end{align}
\begin{lemma}
	\label{lem:norm_presever_dual}
	Under Assumptions \ref{assum:surjective} and \ref{assum:lossless},
	\begin{align}
	\label{eq:lemma_norm_presever_dual}
	\nbr{g}_{\Bcal} = \nbr{g^*}_{\Bcal^*} = \nbr{\iota^*(g^*)}_{\Bcaltil^*}
	= \nbr{\iota(g)}_{\Bcaltil}, \quad \forall\ g \in \Bcal.
	\end{align} 
\end{lemma}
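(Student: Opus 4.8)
The idea is to dispatch the two outer equalities quickly and then spend the effort on the middle one. The first equality $\nbr{g}_\Bcal = \nbr{g^*}_{\Bcal^*}$ is exactly the norm-preservation clause of the Riesz representation theorem for s.i.p.\ spaces (\Cref{thm:representer_operator}) applied to $g$ and its star image $g^* = j(g)$. The last equality $\nbr{\iota^*(g^*)}_{\Bcaltil^*} = \nbr{\iota(g)}_\Bcaltil$ follows straight from the definition $\iota(g) = \jtil^{-1}(\iota^*(g^*))$ together with the formula $\jtil^{-1}(u) = \nbr{u}_{\Bcaltil^*}\cdot\argmax_{v \in \Bsftil}\inner{v}{u}$: since $\nbr{\cdot}_\Bcaltil$ is strictly convex, for $u \ne 0$ the maximizer $w := \argmax_{v \in \Bsftil}\inner{v}{u}$ lies on the unit sphere, i.e.\ $\nbr{w}_\Bcaltil = 1$, whence $\nbr{\jtil^{-1}(u)}_\Bcaltil = \nbr{u}_{\Bcaltil^*}$; the degenerate case $u = 0$ (equivalently $g^* = 0$, hence $g = 0$) is trivial.

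So the substance is the middle equality $\nbr{g^*}_{\Bcal^*} = \nbr{\iota^*(g^*)}_{\Bcaltil^*}$. First I would observe that under Assumptions~\ref{assum:surjective} and \ref{assum:lossless} the base embedding $T$ is an injective linear isometry of $\Hcal$ into $\RR^d$, so $\Hcal$ (hence $\Bcal$ and $\Bcal^*$) is finite dimensional; combined with Theorem~\ref{thm:iotastar_linear}(b) this means every $g^* \in \Bcal^*$ already has a finite kernel expansion $g^* = \sum_i \alpha_i G_{x_i}^*$, so no density/limiting argument is needed. For such a $g^*$, the reproducing identity $\dual{f}{G_x^*} = f(x) = \inner{f}{k(\cdot,x)}_\Hcal$ (from \eqref{eq:eval_fx_rkbs} and the RKHS reproducing property) shows that, as a linear functional on $\Fcal$, $g^*$ coincides with $\inner{\cdot}{k_{g^*}}_\Hcal$, where $k_{g^*} := \sum_i \alpha_i k(\cdot,x_i) \in \Hcal$. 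Consequently $\iota^*(g^*) = \sum_i \alpha_i \ktil_{x_i} = T(k_{g^*})$, and $\nbr{g^*}_{\Bcal^*} = \sup_{\nbr{h}_\Bcal \le 1}\inner{h}{k_{g^*}}_\Hcal$ while $\nbr{\iota^*(g^*)}_{\Bcaltil^*} = \sup_{\nbr{v}_\Bcaltil \le 1}\inner{v}{T(k_{g^*})}$.

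It then remains to match these two suprema, and the crux is that $T$ is simultaneously an isometry for the warped norms: for every $h \in \Fcal = \Hcal$, equation~\eqref{eq:def_B1_FA}, Assumption~\ref{assum:lossless} and Proposition~\ref{prop:reg_R_supfun} (using symmetry of $S$) give $\nbr{Th}_\Bcaltil^2 = \nbr{Th}^2 + \max_{g \in S}\inner{Th}{Tg}^2 = \nbr{h}_\Hcal^2 + \max_{g \in S}\inner{h}{g}_\Hcal^2 = \nbr{h}_\Hcal^2 + R(h)^2 = \nbr{h}_\Bcal^2$. Feeding in $\inner{Th}{Tk_{g^*}} = \inner{h}{k_{g^*}}_\Hcal$ (again Assumption~\ref{assum:lossless}) and the surjectivity of $T$ (Assumption~\ref{assum:surjective}), which guarantees that $v$ runs over precisely $\{Th : h \in \Hcal\}$ as it runs over $\RR^d$, the two feasible sets map onto each other with equal objective values, so the suprema agree. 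Chaining the four identities then finishes the proof.

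The main obstacle is bookkeeping rather than any deep analytic fact: one must carefully identify the three avatars of the same object --- $g^*$ via the $\Bcal$-duality pairing, $k_{g^*}$ via the Hilbert inner product, and $\iota^*(g^*) = T(k_{g^*})$ via the Euclidean inner product --- and verify that $T$ transports not merely $\nbr{\cdot}_\Hcal$ but the full warped norm $\nbr{\cdot}_\Bcal$ onto $\nbr{\cdot}_\Bcaltil$. Once that isometry is established the dual-norm statement is automatic, and no further structural input (reflexivity, G\^ateaux differentiability) is needed for this particular lemma.
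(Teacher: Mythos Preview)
Your proof is correct and follows essentially the same approach as the paper: both dispatch the first and third equalities from the Riesz theorem and the definition of $\iota$, then establish the middle equality by showing that $T$ carries the $\Bcal$-unit ball onto $\Bsftil$ with matching objective values (the paper phrases this as the set equality $\Bsftil = \{\ftil : f \in \Bsf\}$, you as the isometry $\nbr{Th}_\Bcaltil = \nbr{h}_\Bcal$). Your one streamlining --- observing that Assumptions~\ref{assum:surjective} and~\ref{assum:lossless} force $\Hcal$ to be finite dimensional, so the density step in Theorem~\ref{thm:iotastar_linear}(b) becomes an exact span and no continuity/limiting argument is needed --- is valid and slightly cleaner than the paper's closure argument, but the core mechanism is identical.
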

\begin{proof}
	The first equality is trivial,
	and the third equality is by the definition of $\iota(g)$ in \eqref{eq:def_primal_FA}.
	To prove the second equality, 
	let us start by considering $g^* = \sum_i \alpha_i G_{x_i}^*$.
	Then
	\begin{align}
	\nbr{\iota^*(g^*)}_{\Bcaltil^*} 
	&= \max_{v \in \Bsftil} \inner{v}{\iota^*(g^*)} \\
	&= \max_{v \in \Bsftil}
	\sum_i \alpha_i \inner{v}{\ktil_{x_i}} \\
	\nbr{g^*}_{\Bcal^*} &= \max_{f \in \Bsf} \dual{f}{g^*}
	= \max_{f \in \Bsf} \sum_i \alpha_i f(x_i) \\
	&= \max_{f \in \Bsf} \sum_i \alpha_i \inner{f}{k(x_i,\cdot)}_\Hcal \\
	&= \max_{f \in \Bsf} \sum_i \alpha_i \inner{\ftil}{\ktil_{x_i}},
	\end{align}
	where the last equality is by Assumption \ref{assum:lossless}.
	So it suffices to show that $\Bsftil = \{\ftil: f \in \Bsf\}$.
	
	``$\supseteq$'' is trivial because for all $f \in \Bsf$, 
	by Assumption \ref{assum:lossless},
	\begin{align}
	\nbr{\ftil}^2 + \max\limits_{z \in S} \inner{\ztil}{\ftil}^2
	=
	\nbr{f}^2_\Hcal + \max\limits_{z \in S} \inner{z}{f}^2_\Hcal \le 1.\!
	\end{align}
	``$\subseteq$'': for any $v \in \Bsftil$,
	Assumption \ref{assum:surjective} asserts that there exists $h_v \in \Hcal$ such that $\htil_v = v$.
	Then by Assumption \ref{assum:lossless},
	\begin{align}
	\nbr{h_v}_\Hcal^2 + \max\limits_{z \in S} \inner{z}{h_v}^2_\Hcal
	=
	\nbr{v}^2 + \max\limits_{z \in S} \inner{\ztil}{v}^2 \le 1.
	\end{align}
	Since both $\nbr{\cdot}_{\Bcal^*}$ and $\nbr{\cdot}_{\Bcaltil^*}$ are continuous,
	applying the denseness result in part b) of Theorem \ref{thm:iotastar_linear} completes the proof of the second equality in \eqref{eq:lemma_norm_presever_dual}.
\end{proof}
\begin{lemma}
	\label{lem:exact_FA}
	Under Assumptions \ref{assum:surjective} and \ref{assum:lossless},
	\begin{align}
	\label{eq:exact_FA}
	\inner{\iota(f)}{\iota^*(g^*)} 
	= 
	\dual{f}{g^*}, \quad 
	\forall\ f \in \Bcal, g^* \in \Bcal^*.
	\end{align}
\end{lemma}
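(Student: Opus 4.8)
The plan is to reduce the claimed bilinearity to a single ``transfer'' identity relating the base embedding $T$ to $\iota^*$, and then to identify the primal embedding $\iota$ explicitly. \textbf{Step 1 (transfer identity).} First I would prove that $\inner{T(h)}{\iota^*(g^*)} = \dual{h}{g^*}$ for every $h \in \Fcal$ and every $g^* \in \Bcal^*$. For $g^* = G_x^*$ this is immediate: the left-hand side equals $\inner{T(h)}{\ktil_x} = \inner{h}{k(\cdot,x)}_\Hcal = h(x)$ by Assumption~\ref{assum:lossless} and the reproducing property of $\Hcal$, while the right-hand side equals $\ev_x(h) = h(x)$ by \eqref{eq:eval_fx_rkbs}. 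Both sides are linear in $g^*$, and both are continuous with respect to $\nbr{\cdot}_{\Bcal^*}$ (the left side because $\iota^*$ is linear by Theorem~\ref{thm:iotastar_linear} and norm-preserving by Lemma~\ref{lem:norm_presever_dual}, hence a bounded map into $(\RR^d,\nbr{\cdot})$; the right side because $\abr{\dual{h}{g^*}} \le \nbr{h}_\Bcal\nbr{g^*}_{\Bcal^*}$). Since the linear span of $\{G_x^* : x \in \Xcal\}$ is dense in $\Bcal^*$ (part~b) of Theorem~\ref{thm:iotastar_linear}), the identity propagates to all of $\Bcal^*$; call it $(\star)$.

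\textbf{Step 2 (identify $\iota$).} Next I would show $\iota(f) = \ftil$ for all $f \in \Bcal$. The case $f = \zero$ holds by the convention $j(\zero) = \zero$, so take $f \ne \zero$ and set $u := \iota^*(f^*)$ with $f^* := j(f)$, so that $\iota(f) = \jtil^{-1}(u) = \nbr{u}_{\Bcaltil^*}\cdot\mathop{\text{PO}}\nolimits_{\Bsftil}(u)$. The scalar factor is $\nbr{u}_{\Bcaltil^*} = \nbr{f^*}_{\Bcal^*} = \nbr{f}_\Bcal$ by Lemma~\ref{lem:norm_presever_dual}. For the polar-operator factor I would invoke the set equality $\Bsftil = \{T(h) : h \in \Bsf\}$ that is established inside the proof of Lemma~\ref{lem:norm_presever_dual}; combined with $(\star)$ this gives $\inner{T(h)}{u} = \dual{h}{f^*}$ for $h \in \Bsf$, so maximizing $\inner{\cdot}{u}$ over $\Bsftil$ is the same as maximizing $\dual{h}{f^*}$ over $h \in \Bsf$, whose value is $\nbr{f^*}_{\Bcal^*}$ and is attained at $h = f/\nbr{f}_\Bcal$ because $\dual{f/\nbr{f}_\Bcal}{f^*} = \nbr{f}_\Bcal = \nbr{f^*}_{\Bcal^*}$ (using the s.i.p. identity $\dual{f}{f^*} = \nbr{f}_\Bcal^2$). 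This maximizer is unique because $\Bsf$ is strictly convex and $\dual{\cdot}{f^*}$ is a nonzero linear functional, and $T(f/\nbr{f}_\Bcal) \in \Bsftil$ realizes the maximum of $\inner{\cdot}{u}$ over $\Bsftil$, which is itself unique by strict convexity of $\nbr{\cdot}_{\Bcaltil}$; hence $\mathop{\text{PO}}\nolimits_{\Bsftil}(u) = T(f/\nbr{f}_\Bcal) = T(f)/\nbr{f}_\Bcal$ by linearity of $T$. Multiplying the two factors gives $\iota(f) = \nbr{f}_\Bcal\cdot T(f)/\nbr{f}_\Bcal = T(f) = \ftil$. (As a byproduct this establishes that $\iota$ is linear, i.e. Theorem~\ref{thm:iot_linear}.) Finally, for arbitrary $f \in \Bcal$ and $g^* \in \Bcal^*$, this yields $\inner{\iota(f)}{\iota^*(g^*)} = \inner{T(f)}{\iota^*(g^*)} = \dual{f}{g^*}$ by $(\star)$, which is the claim.

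The main obstacle is Step~2, and specifically pinning down $\mathop{\text{PO}}\nolimits_{\Bsftil}(u)$. This requires stitching together three ingredients: the set identity $\Bsftil = T(\Bsf)$ borrowed from the proof of Lemma~\ref{lem:norm_presever_dual}, the transfer identity $(\star)$ used to carry the Euclidean maximization back to a maximization inside $\Bsf$, and uniqueness of the maximizer, which rests on strict convexity of $\Bsf$ together with the s.i.p. fact $\dual{f}{f^*} = \nbr{f}_\Bcal^2$. Once these are assembled the remainder is just cancelling the factor $\nbr{f}_\Bcal$. Note that for the final conclusion one only needs $(\star)$ and the norm-factor identity, so an inexact analogue of ``$\iota = T$'' (\cf Theorem~\ref{thm:gap_inexact}) would still push the argument through in an approximate sense when Assumption~\ref{assum:lossless} fails.
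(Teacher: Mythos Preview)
Your proof is correct and takes a genuinely different route from the paper's. The paper expresses $\dual{f}{g^*} = [g^*, f^*]_{\Bcal^*}$ via Giles' directional-derivative formula $\lim_{t\to 0}\tfrac{1}{2t}(\nbr{f^*+tg^*}_{\Bcal^*}^2 - \nbr{f^*}_{\Bcal^*}^2)$, transfers the norms to $\Bcaltil^*$ using Lemma~\ref{lem:norm_presever_dual} and the linearity of $\iota^*$, and then reads off the derivative on the Euclidean side as $\inner{\iota(f)}{\iota^*(g^*)}$ directly from the polar-operator definition of $\iota$. You instead first prove the transfer identity $\inner{T(h)}{\iota^*(g^*)} = \dual{h}{g^*}$ from the reproducing property and Assumption~\ref{assum:lossless}, and then identify $\iota = T$ outright. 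The paper's route stays at the level of the abstract s.i.p.\ machinery and never names $\iota$ explicitly; yours is more elementary (no Giles formula) and strictly more informative: the identity $\iota = T$ is not stated anywhere in the paper for the lossless case, and it immediately yields the linearity of $\iota$ that the paper establishes only later as Theorem~\ref{thm:iot_linear} after three further lemmas (Lemmas~\ref{lem:iota_Btil_1}--\ref{lem:max_equal_FA}) and a separate argument. Both proofs lean on the same set identity $\Bsftil = T(\Bsf)$ from inside the proof of Lemma~\ref{lem:norm_presever_dual}, so neither avoids that dependency.
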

\begin{proof}
	\begin{align}
\label{eq:start_derive_FA}
    &\dual{f}{g^*} 
    \overset{\text{by } \eqref{eq:dual_sip_connection}}{=} 
    [g^*, f^*]_{\Bcal^*} \\
    &=\lim_{t \to 0} \frac{1}{2t} \rbr{\nbr{f^* + t g^*}^2_{\Bcal^*} - \nbr{f^*}^2_{\Bcal^*}} \text{ (by \citet{Giles67})}\\
    & =
    \lim_{t \to 0} \frac{1}{2t} \sbr{
    \nbr{\iota^*(f^*) + t \iota^*(g^*)}^2_{\tilde{\Bcal}^*}
    -
    \nbr{\iota^*(f^*)}^2_{\tilde{\Bcal}^*}},
\end{align}
where the last equality is by Lemma \ref{lem:norm_presever_dual} and Theorem \ref{thm:iotastar_linear}.
Now it follows from the polar operator as discussed above that
\begin{align}
    \dual{f}{g^*} 
    &= 
    \inner{\nbr{\iota^*(f^*)}_{\tilde{\Bcal}^*} 
    \cdot \mathop{\text{PO}}\nolimits_{\Bsftil}(\iota^*(f^*))}
    {\iota^*(g^*)} \\
    &= \inner{\iota(f)}{\iota^*(g^*)}. \tag*{\qedhere}
\end{align}

\end{proof}
\begin{lemma}
	\label{lem:iota_Btil_1}
	Under Assumptions \ref{assum:surjective} and \ref{assum:lossless},
	\begin{align}
	\Bsftil = \iota(\Bsf) := \{\iota(f) : \nbr{f}_\Bcal \le 1\}.
	\end{align}
\end{lemma}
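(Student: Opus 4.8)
The plan is to prove the two inclusions separately. The inclusion $\iota(\Bsf)\subseteq\Bsftil$ is immediate from Lemma \ref{lem:norm_presever_dual}: if $\nbr{f}_\Bcal\le 1$ then $\nbr{\iota(f)}_\Bcaltil=\nbr{f}_\Bcal\le 1$, so $\iota(f)\in\Bsftil$. For the reverse inclusion I would prove the sharper statement that $\iota$ agrees with the base embedding $T$ on all of $\Bcal$, i.e.\ $\iota(f)=\ftil$ for every $f\in\Bcal$; the lemma then follows because $\{\ftil:f\in\Bsf\}=\Bsftil$, an identification of unit balls already carried out inside the proof of Lemma \ref{lem:norm_presever_dual} (this is the place where Assumption \ref{assum:surjective} is used: every $v\in\Bsftil$ has a preimage $h\in\Hcal$ with $\htil=v$ and $\nbr{h}_\Bcal=\nbr{v}_\Bcaltil\le 1$).

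To show $\iota(f)=\ftil$ (the case $f=\zero$ being trivial by the convention $j(\zero)=\zero$), I would unfold the definition $\iota(f)=\jtil^{-1}(\iota^*(f^*))=\nbr{\iota^*(f^*)}_{\Bcaltil^*}\cdot\mathop{\text{PO}}\nolimits_{\Bsftil}(\iota^*(f^*))$. By Lemma \ref{lem:norm_presever_dual} the scalar prefactor equals $\nbr{f^*}_{\Bcal^*}=\nbr{f}_\Bcal$, so the task reduces to showing $\mathop{\text{PO}}\nolimits_{\Bsftil}(\iota^*(f^*))=\argmax_{v\in\Bsftil}\inner{v}{\iota^*(f^*)}$ equals $\ftil/\nbr{f}_\Bcal$. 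I would verify (i) feasibility: $\nbr{\ftil}_\Bcaltil^2=\nbr{\ftil}^2+\max_{g\in S}\inner{\ftil}{\gtil}^2=\nbr{f}_\Hcal^2+R(f)^2=\nbr{f}_\Bcal^2$, using Assumption \ref{assum:lossless}, Proposition \ref{prop:reg_R_supfun} and \eqref{eq:norm_sublinear}, so that $\ftil/\nbr{f}_\Bcal\in\Bsftil$; and (ii) optimality: the maximum value $\max_{v\in\Bsftil}\inner{v}{\iota^*(f^*)}=\nbr{\iota^*(f^*)}_{\Bcaltil^*}=\nbr{f}_\Bcal$ is attained at $\ftil/\nbr{f}_\Bcal$, because $\inner{\ftil}{\iota^*(f^*)}=\dual{f}{f^*}=[f,f]=\nbr{f}_\Bcal^2$ and hence $\inner{\ftil/\nbr{f}_\Bcal}{\iota^*(f^*)}=\nbr{f}_\Bcal$. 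Strict convexity of $\nbr{\cdot}_\Bcaltil$ makes the maximizer unique, so $\iota(f)=\nbr{f}_\Bcal\cdot(\ftil/\nbr{f}_\Bcal)=\ftil$.

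Step (ii) depends on the bilinear identity $\inner{\gtil}{\iota^*(h^*)}=\dual{g}{h^*}$ for all $g\in\Bcal$ and $h^*\in\Bcal^*$, which I would establish first: when $h^*=\sum_i\alpha_i G_{x_i}^*$ we have $\iota^*(h^*)=\sum_i\alpha_i\ktil_{x_i}$, hence $\inner{\gtil}{\iota^*(h^*)}=\sum_i\alpha_i\inner{\gtil}{\ktil_{x_i}}=\sum_i\alpha_i\inner{g}{k_{x_i}}_\Hcal=\sum_i\alpha_i g(x_i)=\dual{g}{h^*}$, by Assumption \ref{assum:lossless}, the reproducing property of $\Hcal$, and \eqref{eq:eval_fx_rkbs}; the general case follows from the denseness of $\mathrm{span}\{G_x^*:x\in\Xcal\}$ in $\Bcal^*$ (part (b) of Theorem \ref{thm:iotastar_linear}), the continuity of $\iota^*$ (norm-preserving by Lemma \ref{lem:norm_presever_dual}), and the continuity of $\dual{g}{\cdot}$.

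The main obstacle I expect is the reverse inclusion, and within it the identification of the polar-operator value in step (ii): one must produce the explicit candidate maximizer $\ftil/\nbr{f}_\Bcal$ and verify both its feasibility and its optimality, the latter resting on the bilinear identity together with Lemma \ref{lem:norm_presever_dual}; everything else is bookkeeping with the definitions. As a bonus, once $\iota=T$ is in hand the linearity of $\iota$ (Theorem \ref{thm:iot_linear}) follows at once from that of $T$.
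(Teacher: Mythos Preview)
Your proof is correct and takes a genuinely different route from the paper. For the reverse inclusion $\Bsftil\subseteq\iota(\Bsf)$, the paper constructs a preimage for each $v\in\Bsftil$ explicitly: it chooses $u\in\RR^d$ with $v/\nbr{v}_\Bcaltil=\mathop{\text{PO}}\nolimits_{\Bsftil}(u)$, lifts $u$ to $h_u\in\Hcal$ via surjectivity, expands $h_u=\sum_i\alpha_i k_{x_i}$, sets $q^*=\sum_i\alpha_i G_{x_i}^*$ so that $\iota^*(q^*)=u$, and then scales $q$ to obtain $f_v$ with $\iota(f_v)=v$. You instead prove the sharper statement $\iota(f)=\ftil$ for every $f$, by first establishing the bilinear identity $\inner{\gtil}{\iota^*(h^*)}=\dual{g}{h^*}$ directly from Assumption~\ref{assum:lossless} and the definition of $\iota^*$, and then identifying the unique polar-operator value; the lemma then follows from the set identity $\{\ftil:f\in\Bsf\}=\Bsftil$ already established inside Lemma~\ref{lem:norm_presever_dual}. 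Your approach buys considerably more: once $\iota=T$ is known, Lemma~\ref{lem:exact_FA} and the linearity of $\iota$ (Theorem~\ref{thm:iot_linear}) are immediate from the linearity of $T$, whereas the paper derives these through the longer chain Lemma~\ref{lem:exact_FA} $\to$ Lemma~\ref{lem:iota_Btil_1} $\to$ Lemma~\ref{lem:iotastar_Btilstar} $\to$ Lemma~\ref{lem:max_equal_FA} $\to$ Theorem~\ref{thm:iot_linear}. The paper's construction, on the other hand, is self-contained at the level of the unit-ball statement and does not need to pin down $\iota$ globally.
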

\begin{proof}
	``$\text{LHS} \supseteq \text{RHS}$'': by Lemma \ref{lem:norm_presever_dual},
	it is obvious that $\nbr{f}_\Bcal \le 1$ implies $\nbr{\iota(f)}_\Bcaltil \le 1$.
	
	``$\text{LHS} \subseteq \text{RHS}$'': 
	we are to show that for all $v \in \Bsftil$,
	there must exist a $f_v \in \Bsf$ such that $v = \iota(f)$.
	If $v = 0$, then trivially set $f_v = 0$.
	In general, due to the polar operator definition \eqref{eq:def_grad_dual_FA},
	there must exist $u \in \RR^d$ such that 
	\begin{align}
	v / \nbr{v}_\Bcaltil = \text{PO}_{\Bsftil}(u).
	\end{align}
	We next reverse engineer a $q^* \in \Bcal^*$ so that $\iota^*(g^*) = u$.
	By Assumption \ref{assum:surjective}, there exists $h_u \in \Hcal$ such that $\htil_u = u$.
	Suppose $h_u = \sum_i \alpha_i k_{x_i}$.
	Then define $q^* = \sum_i \alpha_i G_{x_i}^*$,
	and we recover $u$ by 
	\begin{align}
	\iota^*(q^*) = \sum_i \alpha_i \ktil_i = \htil_u = u.
	\end{align}
	Apply Lemma \ref{lem:norm_presever_dual} and we obtain
	\begin{align}
	\label{eq:qnorm_equals_unorm}
	\nbr{q}_\Bcal = \nbr{\iota^*(q^*)}_{\Bcaltil^*}
	= \nbr{u}_{\Bcaltil^*}.
	\end{align}
	Now construct
	\begin{align}
	f_v = \frac{\nbr{v}_\Bcaltil}{\nbr{q}_\Bcal} \ q.
	\end{align}
	We now verify that $v = \iota(f_v)$.  By linearity of $\iota^*$,
	\begin{align}
	\iota^*(f_v^*) = \frac{\nbr{v}_\Bcaltil}{\nbr{q}_\Bcal} \ \iota^*(q^*)
	= \frac{\nbr{v}_\Bcaltil}{\nbr{q}_\Bcal} \ u.
	\end{align}
	So $\text{PO}_{\Bsftil} (\iota^*(f_v^*)) = v / \nbr{v}_\Bcaltil$ and plugging into \eqref{eq:def_primal_FA},
	\begin{align}
	\iota(f_v) &= \nbr{\iota^*(f_v^*)}_{\Bcaltil^*} \text{PO}_{\Bsftil} (\iota^*(f_v^*)) \\
	&= \frac{\nbr{v}_\Bcaltil}{\nbr{q}_\Bcal} \nbr{u}_{\Bcaltil^*} \frac{1}{\nbr{v}_\Bcaltil} v \\
	&= v. \quad \text{(by \eqref{eq:qnorm_equals_unorm})} \qedhere
	\end{align}
\end{proof}
\begin{lemma}
	\label{lem:iotastar_Btilstar}
	Under Assumptions \ref{assum:surjective} and \ref{assum:lossless},
	\begin{align}
	\Bsftil^* = \iota^*(\Bsf^*) := \{\iota^*(g^*) : \nbr{g^*}_{\Bcal^*} \le 1\}.
	\end{align}
\end{lemma}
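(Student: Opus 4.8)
The plan is to follow the same two-inclusion template as the proof of Lemma~\ref{lem:iota_Btil_1}, establishing $\iota^*(\Bsf^*) \subseteq \Bsftil^*$ and $\Bsftil^* \subseteq \iota^*(\Bsf^*)$ separately; in fact the argument here should be shorter than for Lemma~\ref{lem:iota_Btil_1}, since no polar-operator construction is needed. The only ingredients I expect to use are the norm-preservation identity $\nbr{g^*}_{\Bcal^*} = \nbr{\iota^*(g^*)}_{\Bcaltil^*}$ from Lemma~\ref{lem:norm_presever_dual}, the surjectivity of the base embedding (Assumption~\ref{assum:surjective}), and the linearity of $\iota^*$ from Theorem~\ref{thm:iotastar_linear}.

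The easy inclusion $\iota^*(\Bsf^*) \subseteq \Bsftil^*$ is immediate: for any $g^* \in \Bcal^*$ with $\nbr{g^*}_{\Bcal^*} \le 1$, Lemma~\ref{lem:norm_presever_dual} gives $\nbr{\iota^*(g^*)}_{\Bcaltil^*} = \nbr{g^*}_{\Bcal^*} \le 1$, so $\iota^*(g^*) \in \Bsftil^*$. For the reverse inclusion, I would fix $u \in \RR^d$ with $\nbr{u}_{\Bcaltil^*} \le 1$ and reverse-engineer a suitable $g^* \in \Bcal^*$. By Assumption~\ref{assum:surjective} there is $h_u \in \Hcal$ with $\htil_u = u$; writing $h_u = \sum_i \alpha_i k_{x_i}$ as a finite linear combination of kernel representers, set $g^* := \sum_i \alpha_i G_{x_i}^*$. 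Then by the definition of $\iota^*$ (and linearity of the base embedding $T$), $\iota^*(g^*) = \sum_i \alpha_i \ktil_{x_i} = \htil_u = u$, and by Lemma~\ref{lem:norm_presever_dual}, $\nbr{g^*}_{\Bcal^*} = \nbr{\iota^*(g^*)}_{\Bcaltil^*} = \nbr{u}_{\Bcaltil^*} \le 1$. Hence $g^* \in \Bsf^*$ and $u = \iota^*(g^*) \in \iota^*(\Bsf^*)$, which finishes the proof.

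The one point that needs care is the claim that $h_u$ can be written as a \emph{finite} linear combination $\sum_i \alpha_i k_{x_i}$. This is where I would invoke the finiteness implicit in Assumption~\ref{assum:lossless}: a lossless embedding $T : \Hcal \to \RR^d$ forces $\dim\Hcal \le d < \infty$, and in a finite-dimensional RKHS the span of $\{k_x : x \in \Xcal\}$ is already the whole space, since its orthogonal complement consists of functions vanishing everywhere. (Alternatively, one may replace $h_u$ by its projection onto a fixed finite-dimensional subspace carrying the embedding, which does not change $\htil_u$.) Apart from this bookkeeping, the proof is purely formal, so I do not anticipate any real obstacle.
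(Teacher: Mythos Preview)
Your proof is correct and takes a genuinely simpler route than the paper's. For the easy inclusion $\iota^*(\Bsf^*)\subseteq\Bsftil^*$, you invoke the norm-preservation identity of Lemma~\ref{lem:norm_presever_dual} directly, whereas the paper re-derives that identity via Lemmas~\ref{lem:exact_FA} and~\ref{lem:iota_Btil_1}. For the reverse inclusion, the paper argues indirectly: given $u$ on the dual unit sphere it takes the unique $v=\mathop{\text{PO}}\nolimits_{\Bsftil}(u)$, pulls $v$ back to some $f\in\Bsf$ via Lemma~\ref{lem:iota_Btil_1}, and then uses strict convexity (uniqueness of the maximizer of $\inner{v}{\cdot}$ over $\Bsftil^*$) together with Lemma~\ref{lem:exact_FA} to conclude $u=\iota^*(f^*)$. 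Your construction bypasses this: you exhibit a preimage of $u$ directly from Assumption~\ref{assum:surjective} and the definition of $\iota^*$, then read off $\nbr{g^*}_{\Bcal^*}\le 1$ from Lemma~\ref{lem:norm_presever_dual}. This is essentially the same reverse-engineering step the paper already uses inside the proof of Lemma~\ref{lem:iota_Btil_1}, so you are recycling that idea rather than introducing a new one; the payoff is that your argument for Lemma~\ref{lem:iotastar_Btilstar} depends only on Lemma~\ref{lem:norm_presever_dual} and avoids Lemmas~\ref{lem:exact_FA} and~\ref{lem:iota_Btil_1} entirely. Your justification that $h_u$ admits a finite kernel expansion (Assumption~\ref{assum:lossless} forces $T$ to be an isometry into $\RR^d$, hence $\dim\Hcal\le d$, and $\{k_x\}$ spans since its orthogonal complement is zero) is correct and in fact more careful than the paper, which in the proof of Lemma~\ref{lem:iota_Btil_1} simply writes ``Suppose $h_u=\sum_i\alpha_i k_{x_i}$'' without comment.
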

\begin{proof}
	``$\text{LHS} \supseteq \text{RHS}$'': 
	By definition of dual norm, any $g^* \in \Bsf^*$ must satisfy
	\begin{align}
	\dual{f}{g^*} \le 1, \quad \forall\ f \in \Bsf.
	\end{align}
	Again, by the definition of dual norm, we obtain
	\begin{align}
	\nbr{\iota^*(g^*)}_{\Bcaltil^*} 
	&= 
	\sup_{v \in \Bsftil} \inner{v}{\iota^*(g^*)} \\
	&= 
	\sup_{f \in \Bsf} \inner{\iota(f)}{\iota^*(g^*)} \quad \text{(Lemma \ref{lem:iota_Btil_1})}\\
	&=
	\sup_{f \in \Bsf} \dual{f}{g^*}  \quad \text{(by Lemma \ref{lem:exact_FA})} \\
	&\le 1.
	\end{align}
	
	``$\text{LHS} \subseteq \text{RHS}$'': 
	Any $u \in \RR^d$ with $\nbr{u}_{\Bcaltil^*} = 1$ must satisfy
	\begin{align}
	\max_{v \in \Bsftil} \inner{u}{v} = 1.
	\end{align}
	Denote $v = \argmax_{v \in \Bsftil} \inner{u}{v}$ which must be uniquely attained.
	So $\nbr{v}_\Bcaltil = 1$.
	Then Lemma \ref{lem:iota_Btil_1} implies that
	there exists a $f \in \Bsf$ such that $\iota(f) = v$.
	By duality,
	\begin{align}
	\label{eq:u_is_maximizer}
	\max_{u \in \Bsftil^*}\inner{v}{u} = 1,
	\end{align}	
	and $u$ is the unique maximizer.
	Now note
	\begin{align}
	\inner{v}{\iota^*(f^*)} = \inner{\iota(f)}{\iota^*(f^*)}
	= \dual{f}{f^*} = 1,
	\end{align}
	where the last equality is derived from Lemma \ref{lem:norm_presever_dual} with
	\begin{align}
	\nbr{f}_\Bcal = \nbr{\iota(f)}_\Bcaltil = \nbr{v}_\Bcaltil = 1.
	\end{align}
	Note from Lemma \ref{lem:norm_presever_dual} that 
	$\nbr{\iota^*(f^*)}_{\Bcaltil^*} = \nbr{f}_\Bcal = 1$.
	So $\iota^*(f^*)$ is a maximizer in \eqref{eq:u_is_maximizer},
	and as a result, $u = \iota^*(f^*)$.
	
	If $\nbr{u}_{\Bcaltil^*} < 1$,
	then just construct $f$ as above for $u / \nbr{u}_{\Bcaltil^*}$,
	and then multiply it by $\nbr{u}_{\Bcaltil^*}$.
	The result will meet our need thanks to the linearity of $\iota^*$ from Theorem \ref{thm:iotastar_linear}.
\end{proof}
\begin{lemma}
	\label{lem:max_equal_FA}
	Under Assumptions \ref{assum:surjective} and \ref{assum:lossless},
	\begin{align}
	\label{eq:max_equal_FA}
	\max_{v \in \Bsftil} \inner{v}{\iota^*(g^*)}
	= 
	\max_{f \in \Bsf} \dual{f}{g^*},
	\ \ \forall g^* \in \Bcal^*.
	\end{align}
	Moreover, by Theorem \ref{them:fx_unique},
	the argmax of the RHS is uniquely attained at $f = g / \nbr{g}_{\Bcal}$,
	and the argmax of the LHS is uniquely attained at 
	$v = \iota(g) / \nbr{\iota(g)}_\Bcaltil$.
\end{lemma}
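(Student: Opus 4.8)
The plan is to reduce everything to the three preceding lemmas. First I would rewrite the left-hand side using Lemma~\ref{lem:iota_Btil_1}: since $\Bsftil = \iota(\Bsf)$, every $v \in \Bsftil$ is of the form $\iota(f)$ for some $f \in \Bsf$, so
\begin{align*}
\max_{v \in \Bsftil} \inner{v}{\iota^*(g^*)} = \max_{f \in \Bsf} \inner{\iota(f)}{\iota^*(g^*)}.
\end{align*}
Then Lemma~\ref{lem:exact_FA} replaces $\inner{\iota(f)}{\iota^*(g^*)}$ by $\dual{f}{g^*}$ in each term, which immediately yields the claimed equality $\max_{v\in\Bsftil}\inner{v}{\iota^*(g^*)} = \max_{f\in\Bsf}\dual{f}{g^*}$. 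This step also shows that $f \mapsto \iota(f)$ carries maximizers of the right-hand side to maximizers of the left-hand side, which I will use afterwards to reconcile the two stated argmax formulas.

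For the right-hand side argmax, I would note that $\max_{f\in\Bsf}\dual{f}{g^*} = \nbr{g^*}_{\Bcal^*}$ by definition of the dual norm. Writing $g := j^{-1}(g^*)$ for the unique preimage of $g^*$ under the duality mapping (which exists since $\Bcal$ is reflexive, \Cref{thm:representer_operator}), the identities $\dual{\cdot}{g^*} = [\cdot, g]_\Bcal$ and $[g,g]_\Bcal = \nbr{g}_\Bcal^2$ give $\dual{g/\nbr{g}_\Bcal}{g^*} = \nbr{g}_\Bcal = \nbr{g^*}_{\Bcal^*}$, so $f = g/\nbr{g}_\Bcal$ attains the max; and since $\Bcal$ is strictly convex (equivalently, by the uniqueness assertion in \Cref{them:fx_unique}), this $f$ is the \emph{unique} maximizer. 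For the left-hand side argmax, the problem $\max_{v\in\Bsftil}\inner{v}{\iota^*(g^*)}$ is exactly the polar-operator problem $\text{PO}_{\Bsftil}(\iota^*(g^*))$, whose solution is unique because $\nbr{\cdot}_\Bcaltil$ is strictly convex; and by the very definition of $\iota$ in \eqref{eq:def_primal_FA}, i.e.\ $\iota(g) = \nbr{\iota^*(g^*)}_{\Bcaltil^*}\cdot\text{PO}_{\Bsftil}(\iota^*(g^*))$, this unique maximizer is $\iota(g)/\nbr{\iota^*(g^*)}_{\Bcaltil^*}$, which equals $\iota(g)/\nbr{\iota(g)}_\Bcaltil$ by Lemma~\ref{lem:norm_presever_dual}.

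Finally I would verify that the two descriptions agree: transporting the right-hand side maximizer $f = g/\nbr{g}_\Bcal$ through $\iota$ as in the first paragraph gives $\iota(f) = \iota(g)/\nbr{g}_\Bcal$, and Lemma~\ref{lem:norm_presever_dual} gives $\nbr{g}_\Bcal = \nbr{\iota(g)}_\Bcaltil$, so $\iota(f) = \iota(g)/\nbr{\iota(g)}_\Bcaltil$, which is precisely the left-hand side maximizer; uniqueness on both sides then closes the argument. I do not expect a genuine obstacle here: the only points that need care are to invoke strict convexity of $\Bcal$ (for right-hand-side uniqueness) and of $\Bcaltil$ (for left-hand-side uniqueness) separately rather than conflating the two norms, and to make sure the normalization constants line up through Lemma~\ref{lem:norm_presever_dual}.
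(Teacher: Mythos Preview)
Your proposal is correct and follows essentially the same approach as the paper: both arguments combine Lemma~\ref{lem:iota_Btil_1} ($\Bsftil=\iota(\Bsf)$) with Lemma~\ref{lem:exact_FA} (the pointwise identity $\inner{\iota(f)}{\iota^*(g^*)}=\dual{f}{g^*}$), the only cosmetic difference being that the paper writes out the two inequalities LHS\,$\ge$\,RHS and LHS\,$\le$\,RHS separately while you substitute the set equality in one stroke. Your treatment of the argmax claims is more explicit than the paper's (which just defers to Theorem~\ref{them:fx_unique}); the only caveat is that your final consistency check invokes the homogeneity $\iota(g/\nbr{g}_\Bcal)=\iota(g)/\nbr{g}_\Bcal$, which in the paper's ordering is established inside Theorem~\ref{thm:iot_linear} (though independently of this lemma)---but since your steps~2 and~3 already pin down both unique maximizers directly, that paragraph is redundant and the argument stands without it.
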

\begin{proof}
	LHS $\ge$ RHS: 
	Let $f^{opt}$ be an optimal solution to the RHS.
	Then by Lemma \ref{lem:iota_Btil_1},
	$\iota(f^{opt}) \in \Bsftil$,
	and so	
	\begin{align}
	\text{RHS} &= \dual{f^{opt}}{g^*} \\
	&= \inner{\iota(f^{opt})}{\iota^*(g^*)} \quad \text{(by Lemma \ref{lem:exact_FA})}\\
	&\le 	\max_{v \in \Bsftil} \inner{v}{\iota^*(g^*)} \\
	&= \text{LHS}.
	\end{align}
	
	LHS $\le$ RHS: let $v^{opt}$ be an optimal solution to the LHS.
	Then by Lemma \ref{lem:iota_Btil_1},
	there is $f_{v^{opt}} \in \Bsf$ such that $\iota(f_{v^{opt}}) = v^{opt}$.
	So
	\begin{align}
	\text{LHS} &= \inner{v^{opt}}{\iota^*(g^*)} \\
	&= \inner{\iota(f_{v^{opt}})}{\iota^*(g^*)}  \\
	&= \dual{f_{v^{opt}}}{g^*}  \quad \text{(by Lemma \ref{lem:exact_FA})} \\
	&\le \max_{f \in \Bsf} \dual{f}{g^*} \quad (\text{since } f_{v^{opt}} \in \Bsf)\\
	&= \text{RHS}. \tag*{\qedhere}
	\end{align}
\end{proof}

\begin{proof}[\underline{Proof of Theorem \ref{thm:iot_linear}}]
	Let $f \in \Bcal$ and $\alpha \in \RR$.
	Then $(\alpha f)^* = \alpha f^*$, and by \eqref{eq:def_primal_FA} and Theorem \ref{thm:iotastar_linear},
	\begin{align}
	\iota(\alpha f) &= \nbr{\iota^*(\alpha f^*)}_{\Bcaltil^*} \cdot \text{PO}_{\Bsftil}(\iota^*(\alpha f^*)) \\
	&= \abr{\alpha} \nbr{\iota^*(f^*)}_{\Bcaltil^*} \cdot
	\text{PO}_{\Bsftil}(\alpha \iota^*(f^*)).
	\end{align}
	By the symmetry of $\Bsftil$, 
	\begin{align}
	\iota(\alpha f) &= \abr{\alpha} \nbr{\iota^*(f^*)}_{\Bcaltil^*} \cdot \text{sign}(\alpha) \, \text{PO}_{\Bsftil}(\iota^*(f^*)) \\
	&= \alpha \, \iota(f).
	\end{align}
	Finally we show $\iota(f_1+f_2) = \iota(f_1) + \iota(f_2)$ for all $f_1, f_2 \in \Bcal$.
	Observe
	\begin{align}
	&\inner{\iota(f_1) + \iota(f_2)}{\iota^*((f_1+f_2)^*)} \\
	= \ &\inner{\iota(f_1)}{\iota^*((f_1+f_2)^*)} + \inner{\iota(f_2)}{\iota^*((f_1+f_2)^*)} \\
	= \ &\dual{f_1}{(f_1+f_2)^*} + 
	\dual{f_2}{(f_1+f_2)^*} \\
	= \ & \dual{f_1+f_2}{(f_1+f_2)^*}.
	\end{align}
	Therefore
	\begin{align}		
	\label{eq:f1_plus_f2_duality}
	\inner{v}{\iota^*((f_1+f_2)^*)} 	
	&= \dual{\frac{f_1+f_2}{\nbr{f_1+f_2}_\Bcal}}{ (f_1+f_2)^*}, \\
	\where v &= \frac{\iota(f_1) + \iota(f_2)}{\nbr{f_1+f_2}_\Bcal}.
	\end{align}
	
	We now show $\nbr{v}_\Bcaltil = 1$,
	which is equivalent to 
	\begin{align}
	\nbr{\iota(f_1) + \iota(f_2)}_\Bcaltil = \nbr{f_1+f_2}_\Bcal.
	\end{align}
	Indeed, this can be easily seen from
	\begin{align}
	\text{LHS} &= \sup_{u \in \Bsftil^*} \inner{\iota(f_1) + \iota(f_2)}{u} \\
	&= \sup_{g^* \in \Bsf^*} \inner{\iota(f_1) + \iota(f_2)}{\iota^*(g^*)} \ \ \text{(Lemma \ref{lem:iotastar_Btilstar})} \\
	&= \sup_{g^* \in \Bsf^*} \dual{f_1 + f_2}{g^*}
	\quad \text{(by Lemma \ref{lem:exact_FA})} \\
	&= \text{RHS}.
	\end{align}
	
	By Lemma	\ref{lem:max_equal_FA},
	\begin{align}
	\max_{v \in \Bsftil} \inner{v}{\iota^*((f_1+f_2)^*)}
	= 
	\max_{f \in \Bsf} \dual{f}{(f_1+f_2)^*}.
	\end{align}
	Since the right-hand side is optimized at $f = (f_1+f_2)/\nbr{f_1+f_2}_\Bcal$,
	we can see from \eqref{eq:f1_plus_f2_duality} and $\nbr{v}_\Bcaltil = 1$ that
	$v = \text{PO}_{\Bsftil} (\iota^*((f_1+f_2)^*))$.
	Finally by definition \eqref{eq:def_primal_FA}, we conclude
	\begin{align}
	\iota(f_1+f_2) &= \nbr{\iota^*((f_1+f_2)^*)}_{\Bcaltil^*} \cdot
	\text{PO}_{\Bsftil} (\iota^*((f_1+f_2)^*)) \\
	&= \nbr{f_1+f_2}_\Bcal \, v \quad \text{(by Lemma \ref{lem:norm_presever_dual})} \\
	&= \iota(f_1) + \iota(f_2). \tag*{\qedhere}
	\end{align}
\end{proof}

\begin{proof}[\underline{Proof of Theorem \ref{thm:mixup_admissible}}]
    We assume that the kernel $k$ is smooth and the function
    \[
    z_{ij}(\lambda) = \tfrac{\partial}{\partial \lambda} k((\xtil_\lambda, \ytil_\lambda), (\cdot, \cdot)).
    \]
    is in $L_p$ so that $R_{ij}$ is well-defined and finite-valued.
    
    Clearly, using the representer theorem we can rewrite 
    \begin{align}
    R_{ij}(f) = \| \inner{f}{z_{ij}(\lambda)}_{\Hcal} \|_p.
    \end{align} 
    Thus, $R_{ij}$ is the composition of the linear map $f \mapsto g(\lambda; f) := \inner{f}{z_{ij}(\lambda)}_{\Hcal}$ and the $L_p$ norm $g \mapsto \|g(\lambda)\|_p$. 
    It follows from the chain rule that $R_{ij}$ is convex, absolutely homogeneous, and G\^ateaux differentiable (recall that the $L_p$ norm is G\^ateaux differentiable for $p\in (1, \infty)$).
    %
    %
\end{proof}
    

\section{Analysis under Inexact Euclidean Embedding}
\label{sec:app_inexact}

We first rigorously quantify the inexactness in the Euclidean embedding $T$:
$\Hcal \to \RR^d$, where $Tf = \ftil$.
To this end, let us consider a subspace based embedding, such as Nystr\"om approximation.
Here let $T$ satisfy that there exists a countable set of orthonormal bases $\{e_i\}_{i=1}^\infty$ of $\Hcal$,
such that 
\begin{enumerate}
    \item $T e_k = 0$ for all $k > d$,
    \item $\inner{Tf}{Tg} = \inner{f}{g}_\Hcal$, \ \ $\forall f, g \in V \! := \! \text{span} \{e_1, \ldots, e_d\}$.
\end{enumerate}

Clearly the Nystr\"om approximation in \eqref{eq:Nystrom} satisfies these conditions, 
where $d = n$,
and $\{e_1, \ldots, e_d\}$ is any orthornormal basis of $\{k_{z_1}, \ldots, k_{z_d}\}$ 
(assuming $d$ is no more than the dimensionality of $\Hcal$).

As an immediate consequence,
$\{Te_1, \ldots, Te_d\}$ forms an orthonormal basis of $\RR^d$:
$\inner{Te_i}{Te_j} = \inner{e_i}{e_j}_\Hcal = \delta_{ij}$ for all $i, j \in [d]$.
Besides, $T$ is contractive because for all $f \in \Fcal$,
\begin{align}
    \nbr{Tf}^2 
    &= \nbr{\sum_{i=1}^d \inner{f}{e_i}_\Hcal Te_i}^2 \\
    &= \sum_{i=1}^d \inner{f}{e_i}^2_\Hcal 
    \le \nbr{f}^2_\Hcal.
\end{align}


By Definition \ref{def:eps_approx},
obviously $k_{z_i}$ is $0$-approximable under the Nystr\"om approximation.
If both $f$ and $g$ are $\epsilon$-approximable,
then $f+g$ must be $(2\epsilon)$-approximable.

\begin{lemma}
\label{lem:gap_uf_TuTf}
    Let $f \in \Hcal$ be $\epsilon$-approximable by $T$,
    then for all $u \in \Hcal$,
    \begin{align}
        \abr{\inner{u}{f}_\Hcal - \inner{Tu}{Tf}} \le \epsilon \nbr{u}_\Hcal.
    \end{align}
\end{lemma}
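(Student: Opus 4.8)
The plan is to decompose both $f$ and $u$ along the subspace $V := \mathrm{span}\{e_1,\ldots,e_d\}$ and its orthogonal complement $V^\perp$, and then track how $T$ acts on each piece. Concretely, write $f = f_V + f_\perp$ with $f_V := \sum_{i=1}^d \inner{f}{e_i}_\Hcal e_i \in V$ and $f_\perp := f - f_V \in V^\perp$; by hypothesis $f$ is $\epsilon$-approximable, so by Definition \ref{def:eps_approx} we have $\nbr{f_\perp}_\Hcal \le \epsilon$. Similarly put $u = u_V + u_\perp$ with $u_V \in V$, $u_\perp \in V^\perp$, and note $\nbr{u_\perp}_\Hcal \le \nbr{u}_\Hcal$.

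The first key step is to observe that $T$ kills the $V^\perp$-components. Since $T$ is linear and continuous (it is contractive, as established just above the lemma) and $Te_k = 0$ for all $k > d$ by property (1), applying $T$ to the convergent partial sums $\sum_{k=d+1}^{N} \inner{f}{e_k}_\Hcal e_k$ and letting $N \to \infty$ gives $Tf_\perp = 0$, and likewise $Tu_\perp = 0$. Hence $Tf = Tf_V$ and $Tu = Tu_V$. Then property (2), namely that $T$ is lossless on $V$, yields $\inner{Tu}{Tf} = \inner{Tu_V}{Tf_V} = \inner{u_V}{f_V}_\Hcal$.

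The second step is the matching expansion on the Hilbert side: because $V \perp V^\perp$, the two cross terms vanish and $\inner{u}{f}_\Hcal = \inner{u_V}{f_V}_\Hcal + \inner{u_\perp}{f_\perp}_\Hcal$. Subtracting the two displays, the discrepancy is exactly $\inner{u_\perp}{f_\perp}_\Hcal$, so by Cauchy--Schwarz
\[
\abr{\inner{u}{f}_\Hcal - \inner{Tu}{Tf}} = \abr{\inner{u_\perp}{f_\perp}_\Hcal} \le \nbr{u_\perp}_\Hcal \nbr{f_\perp}_\Hcal \le \epsilon \nbr{u}_\Hcal,
\]
which is the claim.

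There is no real obstacle here; the argument is an orthogonal splitting plus one application of Cauchy--Schwarz. The only point deserving care is the continuity argument justifying $Tf_\perp = 0$ (since $f_\perp$ is generally an infinite series, not a finite combination of the $e_k$ with $k>d$), but this follows immediately from boundedness of $T$ and is the same reasoning already implicitly used in the contractivity estimate for $T$ preceding the lemma.
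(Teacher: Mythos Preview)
Your proof is correct and follows essentially the same approach as the paper's: both identify the discrepancy as the inner product of the $V^\perp$-components and bound it by Cauchy--Schwarz. The paper works in explicit coordinates $f = \sum_i \alpha_i e_i$, $u = \sum_i \beta_i e_i$ and writes the gap as $\sum_{i>d} \alpha_i \beta_i$, while you phrase it via the orthogonal decomposition $f = f_V + f_\perp$; these are the same computation.
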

\begin{proof}
    Let $f = \sum_{i=1}^\infty \alpha_i e_i$ and $u = \sum_{i=1}^\infty \beta_i e_i$. Then
    \begin{align}
        &\abr{\inner{u}{f}_\Hcal - \inner{Tu}{Tf}} \\
        = &\abr{\sum_{i=1}^\infty \alpha_i \beta_i - 
        \inner{\sum_{i=1}^d \alpha_i T e_i}{\sum_{j=1}^d \beta_j T e_j}} \\
        = &\abr{\sum_{i=d+1}^\infty \alpha_i \beta_i} \\
        \le &\rbr{\sum_{i=d+1}^\infty \alpha_i^2}^{1/2} \rbr{\sum_{j=d+1}^\infty \beta_j^2}^{1/2} \\
        \le &\ \epsilon \nbr{u}_\Hcal. \tag*{\qedhere}
    \end{align}
\end{proof}

%
\begin{proof}[Proof of Theorem \ref{thm:gap_inexact}]
    We first prove \eqref{eq:gap_bilinear_inexact}.
    Note for any $u \in \Fcal$,
    \begin{align}
        \dual{u}{g^*} &= [u, g] \\
        &= \lim_{t \to 0} \frac{1}{2} \sbr{\nbr{tu+g}_\Bcal^2 - \nbr{g}_\Bcal^2} \\
        &= \inner{u}{g + \grad R^2(g)}_\Hcal.
    \end{align}
    The differentiability of $R^2$ is guaranteed by the G\^ateaux differentiability.
    Letting $g^* = \sum_i \alpha_i G_{v_i}^*$,
    it follows that 
    \begin{align}
        \dual{u}{g^*} &= \sum_i \alpha_i u(v_i) = \inner{u}{\sum_i \alpha_i k_{v_i}}_\Hcal.
    \end{align}
    So $\sum_i \alpha_i k_{v_i} = g + \grad R^2(g)$, and by the definition of $\iota^*$
    \begin{align}
    \label{eq:iotag_Tag}
        \iota^*(g^*) &= \sum_i \alpha_i T k_{v_i} = T a_g \\
        \where a_g &:= \sum_i \alpha_i k_{v_i} = g + \grad R^2(g).
    \end{align}
    Similarly,
        \begin{align}
        \iota^*(f^*) &= Ta_f, \where a_f := f + \grad R^2(f).
    \end{align}
    By assumption $\argmax_{h \in S} \inner{h}{g}_\Hcal$ is $\epsilon$-approximable,
    and hence $a_g$ is $O(\epsilon)$-approximable.
    Similarly, $a_f$ is also $O(\epsilon)$-approximable.
    
    Now let us consider
    \begin{align}
    \label{eq:def_v_proof}
        v^\circ &:= \argmax_{v \in \RR^d: \nbr{v}^2 + \sup_{h \in S} \inner{v}{Th}^2 \le 1} \inner{v}{Ta_f} \\
        u^\circ &:= \argmax_{u \in \Fcal: \nbr{u}_\Hcal^2 + \sup_{h \in S} \inner{u}{h}_\Hcal^2 \le 1} \inner{u}{a_f}_\Hcal.
    \end{align}
    By definition, $\iota(f) = v^\circ$.
    Also note that $u^\circ = f$ because  
    $\inner{u}{a_f}_\Hcal = \dual{u}{f^*}$ for all $u \in \Fcal$.
    We will then show that 
    \begin{align}
    \label{eq:gap_vu}
        \nbr{\iota(f) - T f} = \nbr{v^\circ - T u^\circ} = O(\sqrt{\epsilon}),
    \end{align}
    which allows us to derive that
    \begin{align}
        \dual{f}{g^*} &= \inner{f}{a_g}_\Hcal \\
        &= \inner{Tf}{Ta_g} + O(\epsilon) \ \ \text{ (by Lemma \ref{lem:gap_uf_TuTf})} \\
        &= \inner{Tu^\circ}{Ta_g} + O(\epsilon) \\
        &= \inner{v^\circ}{Ta_g} + O(\sqrt{\epsilon}) \quad \text{(by \eqref{eq:gap_vu})}\\
        &= \inner{\iota(f)}{\iota^*(g^*)} + O(\sqrt{\epsilon}). \quad \text{(by \eqref{eq:iotag_Tag})}
    \end{align}
    
    Finally, we prove \eqref{eq:gap_vu}.
    Denote
    \begin{align}
        w^\circ := \argmax_{w \in \Fcal: \nbr{w}_\Hcal^2 + \sup_{h \in S} \inner{Tw}{Th}^2 \le 1} \inner{w}{a_f}_\Hcal.
    \end{align}
    We will prove that $\nbr{v^\circ - T w^\circ} = O(\epsilon^2)$ and 
    $\nbr{u^\circ - w^\circ}_\Hcal = O(\sqrt{\epsilon})$.
    They will imply \eqref{eq:gap_vu} because by the contractivity of $T$,
    $\nbr{T(u^\circ - w^\circ)} \le \nbr{u^\circ - w^\circ}_\Hcal$.
    
    \textbf{Step 1}: $\nbr{v^\circ - T w^\circ} = O(\epsilon^2)$.
    Let $w = w_1 + w_2$ where $w_1 \in V$ and $w_2 \in V^\perp$.
    So $Tw = Tw_1$ and $\nbr{Tw} = \nbr{w_1}_\Hcal$.
    Similarly decompose $a_f$ as $a_1 + a_2$, where $a_1 = T a_f \in V$ and $a_2 \in V^\perp$.
    Now the optimization over $w$ becomes
    \begin{align}
    \label{eq:obj_w1w2}
        &\max_{w_1 \in V, w_2 \in V^\perp} \quad \inner{w_1}{a_1}_\Hcal + \inner{w_2}{a_2}_\Hcal\\
        &s.t. \quad \nbr{w_1}_\Hcal^2 + \nbr{w_2}_\Hcal^2 + \sup_{h \in S} \inner{Tw_1}{Th}^2 \le 1.
    \end{align}
    Let $\nbr{w_2}^2 = 1-\alpha$ where $\alpha \in [0,1]$.
    Then the optimal value of $\inner{w_2}{a_2}_\Hcal$ is 
    $\sqrt{1-\alpha} \nbr{a_2}_\Hcal$.
    Since $\inner{w_1}{a_1}_\Hcal = \inner{Tw_1}{Ta_1}$, 
    the optimization over $w_1$ can be written as
    \begin{align}
        \min_{w_1 \in V} &\inner{Tw_1}{Ta_1} \\
        s.t. \ & \nbr{Tw_1}^2 + \sup_{h \in S} \inner{Tw_1}{Th}^2 \le \alpha.
    \end{align}
    Change variable by $v = T w_1$.
    Then compare with the optimization of $v$ in \eqref{eq:def_v_proof},
    and we can see that $v^\circ = T w^\circ_1 / \sqrt{\alpha}$.
    Overall the optimal objective value of \eqref{eq:obj_w1w2} under $\nbr{w_2}^2 = 1-\alpha$ is
    $\sqrt{1-\alpha} \nbr{a_2}_\Hcal + \sqrt{\alpha} p$
    where $p$ is the optimal objective value of \eqref{eq:def_v_proof}.
    So the optimal $\alpha$ is $\frac{p^2}{p^2 + \nbr{a_2}_\Hcal^2}$,
    and hence
    \begin{align}
        \nbr{v^\circ - T w^\circ} &= \nbr{v^\circ - T w^\circ_1} = \nbr{v^\circ - \sqrt{\alpha} v^\circ}\\
        &= (1-\sqrt{\alpha}) \nbr{v^\circ} \le 1- \sqrt{\alpha}.
    \end{align}
    Since $a_f$ is $O(\epsilon)$-approximable,
    so $\nbr{a_2}_\Hcal = O(\epsilon)$ and 
    \begin{align}
        1-\sqrt{\alpha} = \frac{1 - \alpha}{1+\sqrt{\alpha}} 
        = O(\nbr{a_2}_\Hcal^2) = O(\epsilon^2).
    \end{align}

    \textbf{Step 2}: $\nbr{u^\circ - w^\circ}_\Hcal = O(\sqrt{\epsilon})$.
    Motivated by Theorem \ref{thm:flip_opt},
    we consider two equivalent problems:
    \begin{align}
        \uhat^\circ &= \argmax_{u \in \Fcal: \inner{u}{a_f}_\Hcal = 1}
        \cbr{\nbr{u}_\Hcal^2 + \sup_{h \in S} \inner{u}{h}_\Hcal^2} \\
        \what^\circ &= \argmax_{w \in \Fcal: \inner{w}{a_f}_\Hcal = 1}
        \cbr{\nbr{w}_\Hcal^2 + \sup_{h \in S} \inner{Tw}{Th}^2}.
    \end{align}
    Again we can decompose $u$ into $U := \text{span} \{a_f\}$ and its orthogonal space $U^\perp$.
    Since $\inner{u}{a_f}_\Hcal = 1$,
    the component of $u$ in $U$ must be $\abar_f := a_f / \nbr{a_f}_\Hcal^2$.
    So
    \begin{align}
    \label{eq:obj_u1circ}
        \uhat^\circ = \ \abar_f + 
        \argmax_{u^\perp \in U^\perp} \cbr{\nbr{u^\perp}_\Hcal^2 + 
        \sup_{h \in S} \inner{u^\perp + \abar_f}{h}_\Hcal^2 }.
    \end{align}
    Similarly,
     \begin{align}
     \label{eq:obj_w1circ}
        w^\circ = \ \abar_f + 
        \argmax_{w^\perp \in U^\perp} &\bigg\{\nbr{w^\perp}_\Hcal^2  \\ &+ 
        \sup_{h \in S} \inner{T(w^\perp + \abar_f)}{Th}_\Hcal^2 \bigg\}.
    \end{align}   
    We now compare the objective in the above two argmax forms.
    Since any $h \in S$ is $\epsilon$-approximable,
    so for any $x \in \Fcal$:
    \begin{align}
        \abr{\inner{x}{h}_\Hcal - \inner{Tx}{Th}_\Hcal} = O(\epsilon).
    \end{align}
    Therefore tying $u^\perp = w^\perp = x$,
    the objectives in the argmax of \eqref{eq:obj_u1circ} and \eqref{eq:obj_w1circ} differ by at most $O(\epsilon)$.
    Therefore their optimal objective values are different by at most $O(\epsilon)$.
    Since both objectives are (locally) strongly convex in $U^\perp$,
    the RKHS distance between the optimal $u^\perp$ and the optimal $w^\perp$ must be $O(\sqrt{\epsilon})$.
    As a result $\nbr{\uhat^\circ - \what^\circ}_\Hcal = O(\sqrt{\epsilon})$.
    
    Finally to see $\nbr{u^\circ - w^\circ}_\Hcal = O(\epsilon)$,
    just note that by Theorem \ref{thm:flip_opt},
    $u^\circ$ and $w^\circ$ simply renormalize $\uhat^\circ$ and $\what^\circ$ to the unit sphere of $\nbr{\cdot}_\Bcal$, respectively.
    So again $\nbr{u^\circ - w^\circ}_\Hcal = O(\sqrt{\epsilon})$.

%

    In the end, we prove \eqref{eq:gap_linear_inexact}.
    The proof of $\iota(\alpha f) = \alpha \iota(f)$ is exactly the same as that for Theorem \ref{thm:iotastar_linear}.
    To prove \eqref{eq:gap_linear_inexact}, 
    note that $f + g$ is $(2\epsilon)$-approximable.
    Therefore applying \eqref{eq:gap_vu} on $f$, $g$, $f + g$, we get
    \begin{align}
        \nbr{\iota(f) - Tf} &= O(\sqrt{\epsilon}), \\
        \nbr{\iota(fg) - Tg} &= O(\sqrt{\epsilon}), \\
        \nbr{\iota(f+g) - T(f+g)} &= O(\sqrt{\epsilon}).
    \end{align}
    Combining these three relations, we conclude \eqref{eq:gap_linear_inexact}.
\end{proof}

\section{Solving the Polar Operator}
\label{app:solve_po}

\begin{theorem}
\label{thm:flip_opt}
	Suppose $J$ is continuous and 
	$J(\alpha x) = \alpha^2 J(x) \ge 0$ for all $x$ and $\alpha \ge 0$.
	Then $x$ is an optimal solution to
	\begin{align}
	P: \quad \max_{x} a^\top x, \quad s.t. \quad J(x) \le 1,
	\end{align}
	if, and only if,
	$J(x) = 1$, $c := a^\top x > 0$, and 
	$
	\xhat := x / c
	$
	is an optimal solution to
	\begin{align}
	Q: \quad\min_x J(x), \quad s.t. \quad a^\top x = 1.
	\end{align}
\end{theorem}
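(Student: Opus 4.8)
The plan is to prove both directions of the equivalence using only the nonnegativity and positive $2$-homogeneity of $J$; continuity plays no essential role beyond ensuring $J$ is finite-valued, and convexity is not needed here. I assume throughout that $a \neq 0$: this is harmless in the application, where $a = \ktil_x$ is a nonzero kernel embedding, and it is genuinely necessary, since for $a = 0$ the constraint of $Q$ is infeasible while $P$ stays solvable, so the stated equivalence would break. Two preliminary observations will be reused. (i) If $a^\top d > 0$ and $J(d) = 0$ for some $d$, then $td$ is feasible for $P$ for every $t \ge 0$ with $a^\top(td) \to \infty$, so $\sup P = +\infty$; since $P$ is assumed to attain its maximum, no such $d$ exists. (ii) Applying (i) with $d = a$ (legitimate as $a^\top a = \|a\|^2 > 0$) gives $J(a) > 0$, so $a/\sqrt{J(a)}$ is feasible for $P$ with objective $\|a\|^2/\sqrt{J(a)} > 0$; hence the optimal value $c^\star := \max P$ is strictly positive.

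\textbf{Forward direction.} Let $x$ solve $P$ and set $c := a^\top x = c^\star > 0$. By (i) with $d = x$ we get $J(x) \neq 0$, while $J(x) \le 1$ by feasibility; if $J(x) \in (0,1)$ then $\lambda x$ with $\lambda := J(x)^{-1/2} > 1$ is feasible ($J(\lambda x) = 1$) and has objective $\lambda c > c$, contradicting optimality, so $J(x) = 1$. Put $\hat x := x/c$; then $a^\top \hat x = 1$, so $\hat x$ is feasible for $Q$, with $J(\hat x) = J(x)/c^2 = 1/c^2$. Suppose some $\hat y$ with $a^\top \hat y = 1$ had $J(\hat y) < 1/c^2$: if $J(\hat y) = 0$ then (i) with $d = \hat y$ is violated; if $J(\hat y) > 0$ then $s\hat y$ with $s := J(\hat y)^{-1/2} > c$ is feasible for $P$ with objective $s > c^\star$, impossible. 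Hence $\hat x$ minimizes $J$ over $\{\, y : a^\top y = 1 \,\}$, i.e. solves $Q$.

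\textbf{Reverse direction.} Assume $J(x) = 1$, $c := a^\top x > 0$, and $\hat x := x/c$ solves $Q$; then $a^\top \hat x = 1$ and $J(\hat x) = 1/c^2$, so $\min Q = 1/c^2$. Now $x$ is feasible for $P$ (as $J(x) = 1 \le 1$) with objective $c$. If some $y$ with $J(y) \le 1$ had $a^\top y > c$, then $a^\top y > 0$ and $\tilde y := y/(a^\top y)$ satisfies $a^\top \tilde y = 1$ with $J(\tilde y) = J(y)/(a^\top y)^2 \le (a^\top y)^{-2} < c^{-2} = \min Q$, contradicting optimality of $\hat x$. Hence $a^\top y \le c$ for every feasible $y$, so $x$ solves $P$.

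\textbf{Main obstacle.} The argument is essentially careful accounting of rescalings by \emph{nonnegative} scalars, so that only the $\alpha \ge 0$ case of the homogeneity hypothesis is ever invoked. The only delicate points are the degenerate ones: excluding feasible ascent directions along which $J$ vanishes (handled through the attainment hypothesis on $P$ via observation (i)) and isolating the $a = 0$ case. I expect stating these edge cases precisely, rather than any conceptual difficulty, to be the bulk of the work.
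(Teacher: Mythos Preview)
Your proof is correct and follows essentially the same rescaling strategy as the paper: pass between $P$ and $Q$ by normalizing either so that $J=1$ or so that $a^\top x=1$, and derive contradictions from a putative improver. You are in fact more careful than the paper about the edge cases (explicitly ruling out $J(\hat y)=0$ via your observation (i), and isolating the $a=0$ degeneracy), but the core argument is the same.
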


\begin{proof}
	We first show the ''only if'' part.
	Since $J(0) = 0$ and $J$ is continuous,
	the optimal objective value of $P$ must be positive.
	Therefore $c > 0$.
	Also note the optimal $x$ for $P$ must satisfy $J(x) = 1$ because otherwise one can scale up $x$ to increase the objective value of $P$.
	To show $\xhat$ optimizes $Q$,
	suppose otherwise there exists $y$ such that
	\begin{align}
	a^\top y = 1, \quad J(y) < J(\xhat).
	\end{align}
	Then letting 
	\begin{align}
	z = J(y)^{-1/2} y,
	\end{align}
	we can verify that
	\begin{align}
	J(z) &= 1,\\
	a^\top z &= J(y)^{-1/2} a^\top y = J(y)^{-1/2} \\
	&> J(\xhat)^{-1/2} = c J(x)^{-1/2} = c = a^\top x.
	\end{align}
	So $z$ is a feasible solution for $P$,
	and is strictly better than $x$.
	Contradiction.
	
	We next show the ``if'' part:
	for any $x$,
	if $J(x) = 1$, $c := a^\top x > 0$, and $\xhat := x/c$ is an optimal solution to $Q$,
	then $x$ must optimize $P$.
	Suppose otherwise there exists $y$, such that $J(y) \le 1$ and $a^\top y > a^\top x > 0$.
	Then consider $z := y / a^\top y$.
	It is obviously feasible for $Q$,
	and
	\begin{align}
	J(z) &= (a^\top y)^{-2} J(y) < (a^\top x)^{-2} J(y) \\
	&\le (a^\top x)^{-2} J(x) = J(\xhat).
	\end{align}
	This contradicts with the optimality of $\xhat$ for $Q$.
\end{proof}

\paragraph{Projection to hyperplane}
To solve problem \eqref{eq:po_equivalent},
we use LBFGS with each step projected to the feasible domain, a hyperplane.
This requires solving, for given $c$ and $a$,
\begin{align}
\min_x \frac{1}{2} \nbr{x - c}^2, \quad s.t. \quad a^\top x = 1.
\end{align}
Write out its Lagrangian and apply strong duality thanks to convexity:
\begin{align}
&\min_x \max_\lambda \frac{1}{2} \nbr{x - c}^2 - \lambda(a^\top x - 1) \\
= \ & \max_\lambda \min_x \frac{1}{2} \nbr{x - c}^2 - \lambda(a^\top x - 1) \\
= \ &\max_\lambda \frac{1}{2} \lambda^2 \nbr{a}^2 - \lambda^2 \nbr{a}^2 - \lambda a^\top c + \lambda,
\end{align}
where $x = c + \lambda a$.
The last step has optimal 
\begin{align}
\lambda = (1 - a^\top c) / \nbr{a}^2.
\end{align}

\section{Gradient in Dual Coefficients}
\label{app:gradient_c}

We first consider the case where $S$ is a finite set,
and denote as $z_i$ the RKHS Nystr\"{o}m approximation of its $i$-th element.
When $f^*$ has the form of \eqref{eq:representer_thm},
we can compute $\iota(f)$ by using the Euclidean counterpart of Theorem \ref{them:fx_unique} as follows:
\begin{align}
    \argmax_u \ \ & u^\top \sum\nolimits_j c_j k_j \\
    s.t. \ &\nbr{u}^2 + (z_i^\top u)^2 \le 1, \quad \forall\ i,
\end{align}
where $k_j$ the the Nystr\"{o}m approximation of $k(x_j,\cdot)$.

Writing out the Lagrangian with dual variables $\lambda_i$:
\begin{align}
    u^\top \sum_j c_j k_j + \sum_i \lambda_i \rbr{\nbr{u}^2 + (z_i^\top u)^2-1},
\end{align}
we take derivative with respect to $u$:
\begin{align}
\label{eq:grad_c_grad_u}
    X^\top c + 2 \one^\top \lambda u + 2 Z \Lambda Z^\top u = 0.
\end{align}
where $X = (k_1, k_2, \ldots)$,
$Z = (z_1, z_2, \ldots)$,
$\lambda = (\lambda_1, \lambda_2, \ldots)$,
$\Lambda = \diag(\lambda_1, \lambda_2, \ldots)$ (diagonal matrix),
and $\one$ is a vector of all ones.
This will hold for $c + \Delta_c$, $\lambda + \Delta_\lambda$ and $u + \Delta_u$:
\begin{align}
    X^\top (c + \Delta_c) &+ 2 \one^\top (\lambda + \Delta_\lambda) (u + \Delta u) \\
    &+ 2 Z (\Lambda + \Delta_\Lambda) Z^\top (u + \Delta_u) = 0.
\end{align}
Subtract it by \eqref{eq:grad_c_grad_u},
we obtain
\begin{align}
\label{eq:grad_c_grad_u_2}
    X^\top \Delta_c &+ 2 (\one^\top \Delta_\lambda) u + 2 (\one^\top \lambda) \Delta_u \\
    &+ 2 Z \Delta_\Lambda Z^\top u + 2 Z \Lambda Z^\top \Delta_u = 0.
\end{align}

The complementary slackness writes
\begin{align}
\label{eq:grad_c_grad_lam}
    \lambda_i (\nbr{u}^2 + (z_i^\top u)^2 - 1) = 0.
\end{align}
This holds for $\lambda + \Delta_\lambda$ and $u + \Delta_u$:
\begin{align}
    (\lambda_i + \Delta_{\lambda_i}) (\nbr{u + \Delta_u}^2 + (z_i^\top u + z_i^\top \Delta_u)^2 - 1) = 0.
\end{align}
Subtract it by \eqref{eq:grad_c_grad_lam},
we obtain
\begin{align}
\label{eq:grad_c_grad_lam_2}
    \Delta_{\lambda_i} (\nbr{u}^2 + (z_i^\top u)^2 - 1)
    + 2 \lambda_i (u + (z_i^\top u) z_i)^\top \Delta_u = 0.
\end{align}
Putting together \eqref{eq:grad_c_grad_u_2} and \eqref{eq:grad_c_grad_lam_2},
we obtain
\begin{align}
\label{eq:grad_c_bridge}
    S 
    \begin{pmatrix}
    \Delta_u \\
    \Delta_\lambda
    \end{pmatrix}
    =
    \begin{pmatrix}
    -X^\top \Delta_c \\
    0
    \end{pmatrix},
\end{align}
where $S$ is
\begin{align}
    \begin{pmatrix}
    2 (\one^\top \lambda) I + 2 Z \Lambda Z^\top 
    & 2 u \one^\top + 2 Z \diag(Z^\top u) \\
    2 \Lambda (\one u^\top + \diag(Z^\top u) Z^\top) 
    & \diag(\nbr{u}^2 + (z_i^\top u)^2 - 1)
    \end{pmatrix}.
\end{align}
Therefore
\begin{align}
\label{eq:grad_c_dudc}
    \frac{\rmd u}{\rmd c} = 
    \begin{pmatrix} I & 0 \end{pmatrix}
    S^{-1} 
    \begin{pmatrix} -X^\top \\ 0 \end{pmatrix} .
\end{align}

Finally we investigate the case when $S$ is not finite.
In such a case, the elements $z$ in $S$ that attain
$\nbr{u}^2 + (z^\top u)^2 = 1$ for the optimal $u$ are still finite in general.
For all other $z$,
the complementary slackness implies the corresponding $\lambda$ element is 0.
As a result,
the corresponding diagonal entry in the bottom-right block of $S$ is nozero,
while the corresponding row in the bottom-left block of $S$ is straight 0.
So the corresponding entry in $\Delta_\lambda$ in \eqref{eq:grad_c_bridge} plays no role, and can be pruned.
In other words,
all $z \in S$ such that $\nbr{u}^2 + (z^\top u)^2 < 1$ can be treated as nonexistent.

The emprirical loss depends on $f(x_j)$,
which can be computed by $\iota(f)^\top k_j$.
Since $\iota(f) = (u^\top \sum\nolimits_j c_j k_j) u$, 
\eqref{eq:grad_c_dudc} allows us to backpropagate the gradient in $\iota(f)$ into the grdient in $\{c_j\}$.

\section{Experiments}
\label{app:experiment}

\subsection{Additional experimental results on mixup}
\paragraph{Results. }
We first present more detailed experimental results for the mixup learning. Following the algorithms described in Section \ref{sec:exp_mixup}, each setting was evaluated 10 times with randomly sampled training and test data. The mean and standard deviation are reported in Table \ref{tab:mixup}. Since the results of \sipsf\ and \vanilla\ have the smallest difference under $n=1000, p=4n$, for each dataset, we show scatter plots of test accuracy under 10 runs  for this setting. In Figure \ref{fig:mxup_scatter}, the $x$-axis represents accuracy of \sipsf\ method, 
and the $y$-axis represents the accuracy of \vanilla. Obviously, most points fall above the diagonal, meaning \sipsf\ method outperforms \vanilla\  most of the time.
\vspace{-0.3cm}
\paragraph{Visualization.}
To show that \sipsf\ learned better representations in mixup, we next visualized the impact of the two different methods. 
Figure \ref{fig:loss_lambda} plots how the loss value of three randomly sampled pairs of test examples changes as a function of $\lambda$ in \eqref{eq:mixup_vanilla}. 
Each subplot here corresponds to a randomly chosen pair. 
By increasing $\lambda$ from 0 to 1 with a step size $0.1$, 
we obtained different mixup representations. We then applied the trained classifiers on these representations to compute the loss value. As shown in Figure \ref{fig:loss_lambda}, \sipsf\ always has a lower loss, especially when \vanilla\ is at its peak loss value. Recall in \eqref{eq:obj_mixup_kernel_embed}, \sipsf\ learns representations by considering the $\lambda$ that maximizes the change; this figure exactly verified this behavior and \sipsf\  learns better representation.
\vspace{-0.3cm}

\subsection{Additional experiments for structured multilabel prediction}
\vspace{-0.2cm}
Here, we provide more detailed results for our method applied to structured multilabel prediction, as described in Section \ref{sec:multilabel}.
\vspace{-0.3cm}
\paragraph{Accuracy on multiple runs.} We repeated the experiment, detailed in Section \ref{sec:expML} and tabulated in Table \ref{tab:mlsvm} ten times for all the three algorithms. Figures \ref{fig:scatter_reuters},\ref{fig:scatter_wipo},\ref{fig:scatter_enron} show the accuracy plot of our method (\sipsf)\ compared with baselines (\mlsvm\ and \hrsvm) on Enron \cite{KliYan2004},
WIPO \cite{RouCraSaw06},
Reuters \cite{Davidetal04} datasets with $100/100, 200/200, 500/500$ randomly drawn train/test examples over $10$ runs.
\vspace{-0.3cm}
\paragraph{Comparing constraint violations.}
In this experiment, we demonstrate the effectiveness of the model's ability to embed structures explicitly. Recall that for the structured multilabel prediction task, we wanted to incorporate two types of constraints (i) \textit{implication}, (ii) \textit{exclusion}. To test if our model (\sipsf)\ indeed learns representations that respect these constraints, we counted the number of test examples that violated the implication and exclusion constraints from the predictions. We repeated the test for \mlsvm\ and \hrsvm. 

We observed that \hrsvm\ and \sipsf\ successfully modeled implications on all the datasets. This is not surprising as \hrsvm\ takes the class hierarchy into account. The exclusion constraint, on the other hand, is a ``derived'' constraint and is not directly modeled by \hrsvm. Therefore, on datasets where \sipsf\ performed significantly better than \hrsvm, we might expect fewer exclusion violations by \sipsf\ compared to \hrsvm. 
To verify this intuition,
we considered the Enron dataset with $200/200$ train/test split where \sipsf\ performed better than \hrsvm. The constraint violations are shown as a line plot in Figure \ref{fig:const_vio}, with the constraint index on the $x$-axis and number of examples violating the constraint on the $y$-axis. 

Recall again that predictions in \sipsf\ for multilabel prediction are made using a linear classifier. Therefore the superior performance of \sipsf\, in this case, can be attributed to accurate representations learned by the model.

\clearpage

\begin{figure*}[t]
	\centering
		\includegraphics[width=0.9\linewidth]{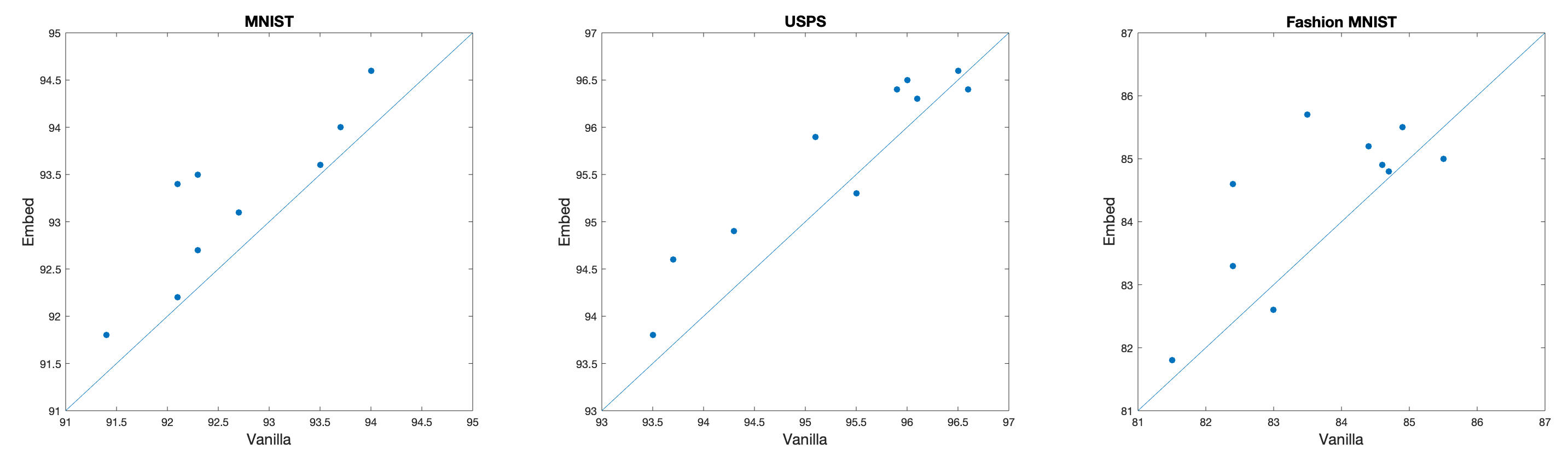}
		
		\caption{Scatter plot of test accuracy for mixup: $n=1000$, $p=4n$}
		\label{fig:mxup_scatter}
\end{figure*}

\begin{figure*}[t]
	\centering
		\includegraphics[width=0.9\linewidth]{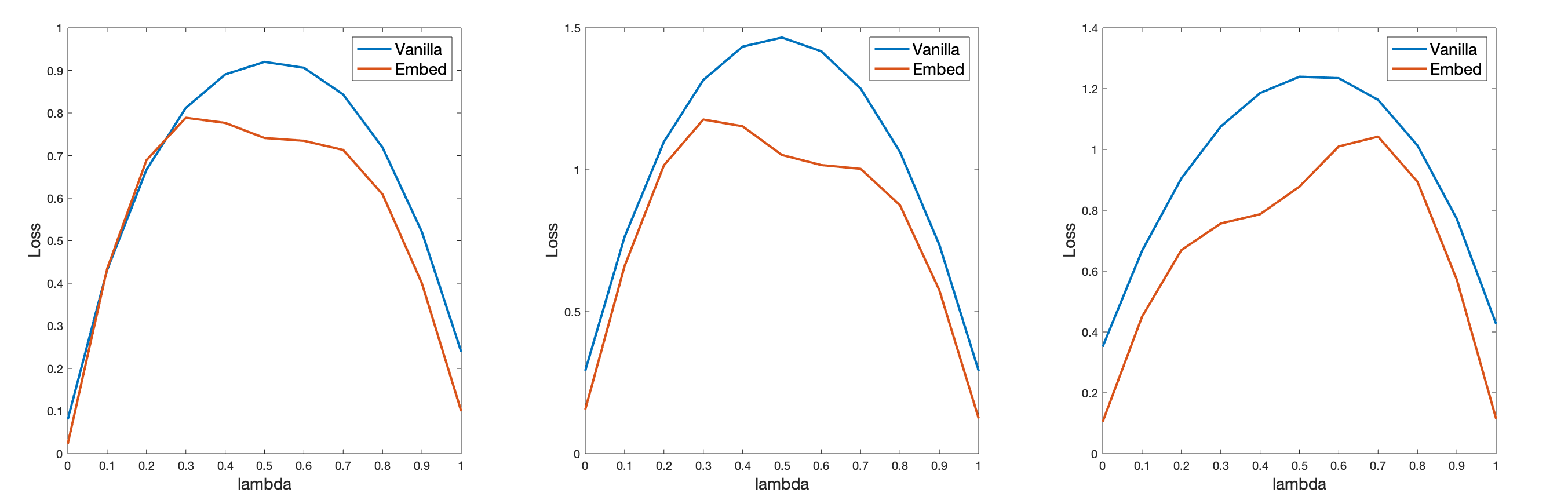}
		
		\caption{Plots of three different pairs of test examples, showing how loss values change as a function of $\lambda$}
		\label{fig:loss_lambda}
\end{figure*}

\begin{figure*}[ht]
    \subfloat[$100/100$ train/test split]{%
       \includegraphics[width=0.30\textwidth]{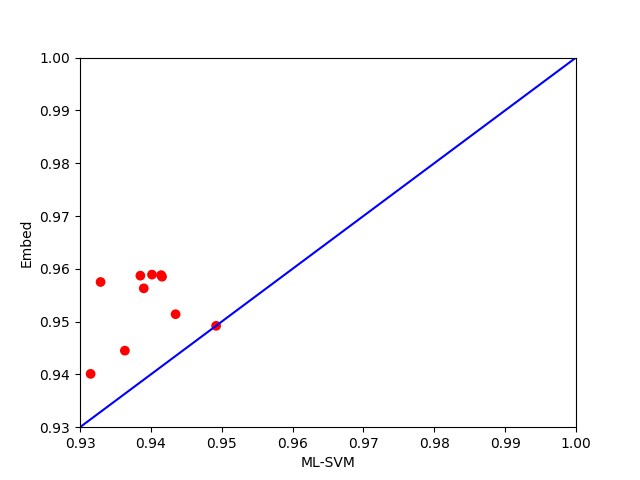}
     }\hfill
     \subfloat[ $200/200$ train/test split]{%
       \includegraphics[width=0.30\textwidth]{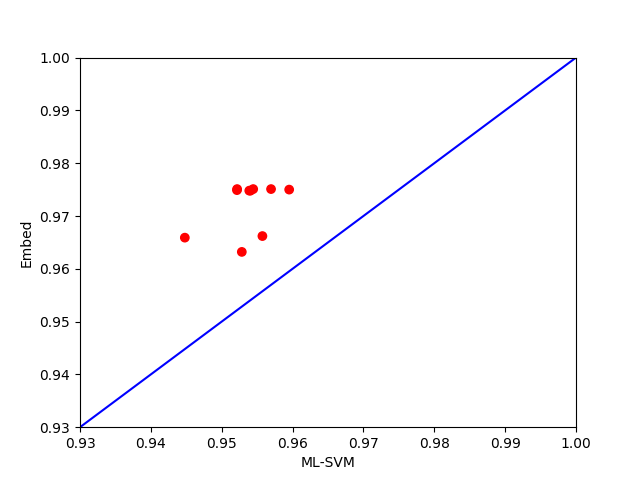}
     }\hfill
    \subfloat[$500/500$ train/test split]{%
       \includegraphics[width=0.30\textwidth]{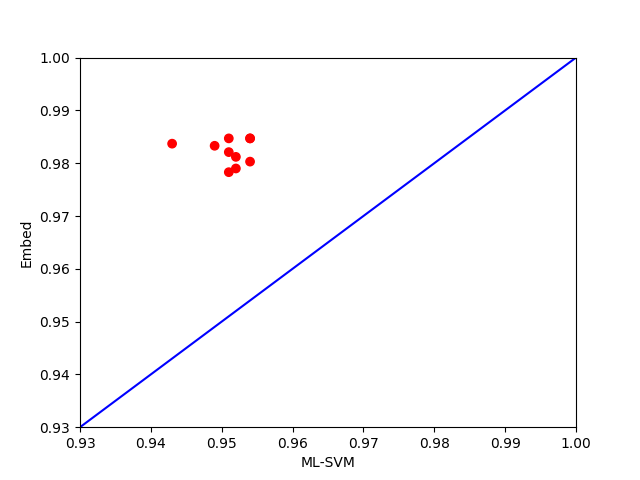}
     }\hfill \\
     \subfloat[$100/100$ train/test split]{%
       \includegraphics[width=0.30\textwidth]{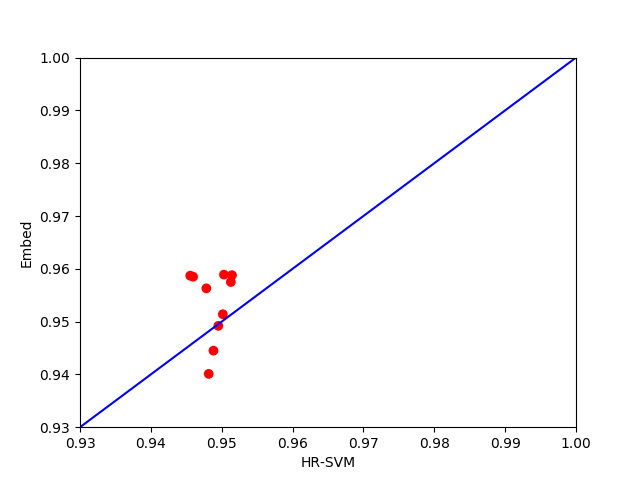}
     }\hfill
     \subfloat[$200/200$ train/test split]{%
       \includegraphics[width=0.30\textwidth]{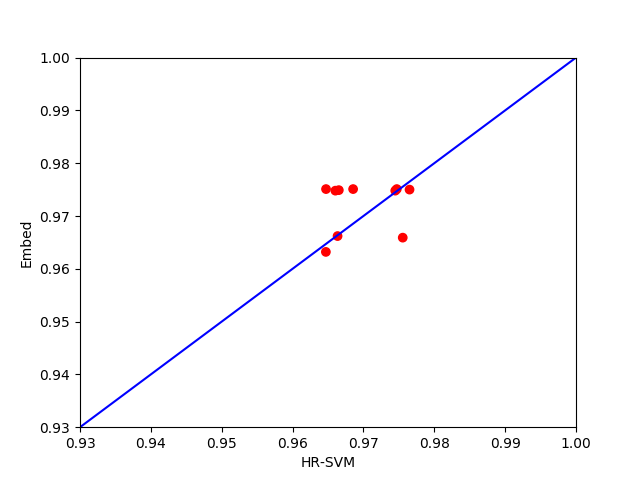}
     }\hfill
    \subfloat[ $500/500$ train/test split]{%
       \includegraphics[width=0.30\textwidth]{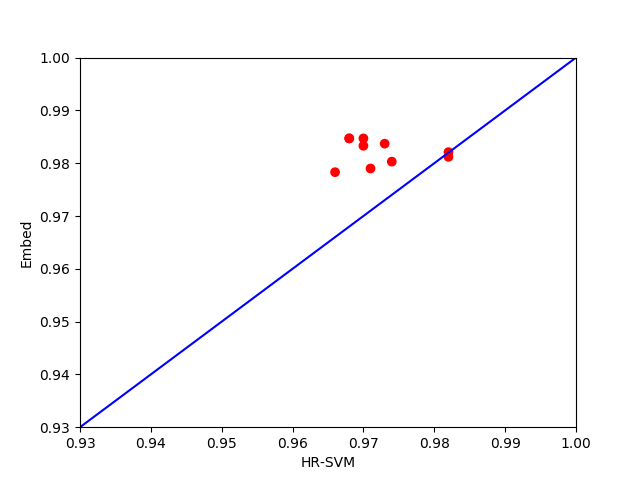}
     }\hfill
    \caption{Test accuracy of \mlsvm\ vs \sipsf\ (top row) and \hrsvm\ vs \sipsf\ (bottom row)  $10$ runs on the Reuters dataset}
    \label{fig:scatter_reuters}
\end{figure*}

\begin{figure*}[ht]
    \subfloat[$100/100$ train/test split]{%
       \includegraphics[width=0.3\textwidth]{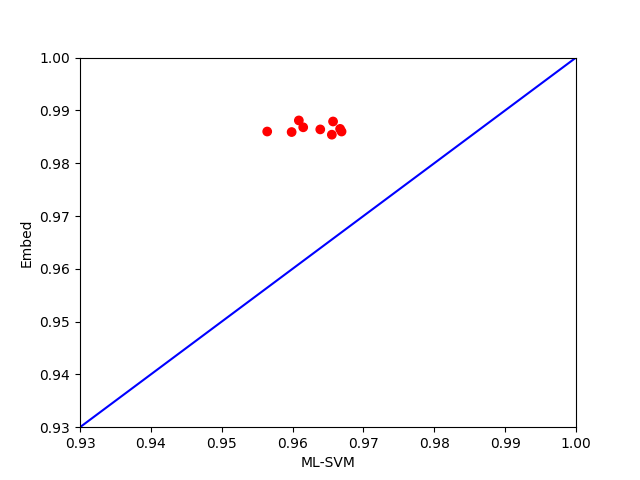}
     }\hfill
     \subfloat[ $200/200$ train/test split]{%
       \includegraphics[width=0.3\textwidth]{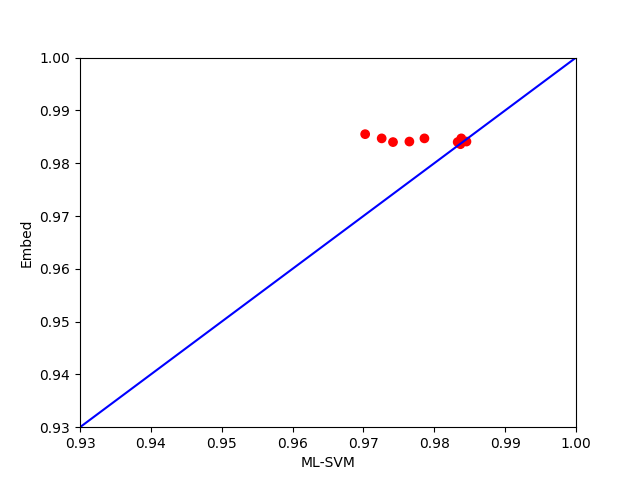}
     }\hfill
    \subfloat[$500/500$ train/test split]{%
       \includegraphics[width=0.3\textwidth]{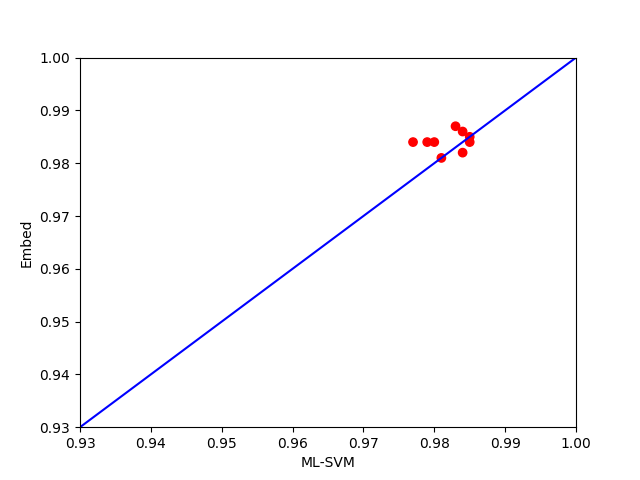}
     }\hfill \\
     \subfloat[$100/100$ train/test split]{%
       \includegraphics[width=0.3\textwidth]{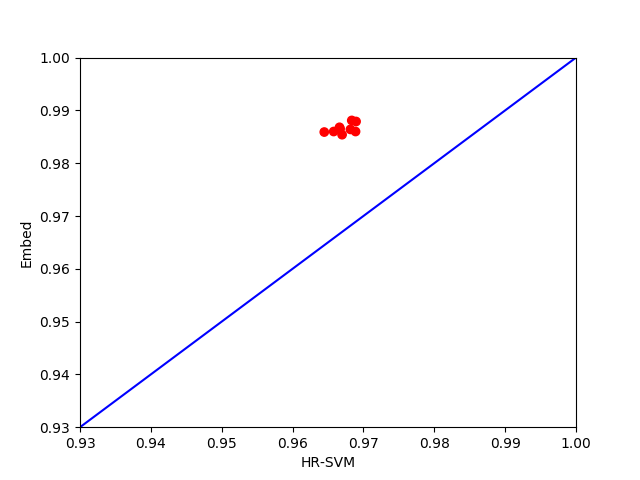}
     }\hfill
     \subfloat[$200/200$ train/test split]{%
       \includegraphics[width=0.3\textwidth]{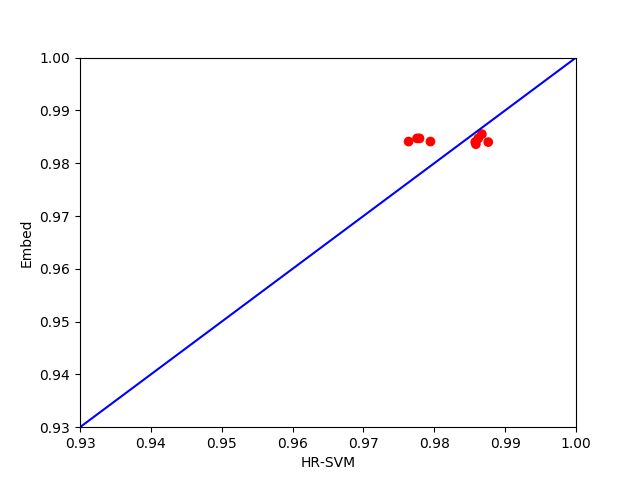}
     }\hfill
    \subfloat[ $500/500$ train/test split]{%
       \includegraphics[width=0.3\textwidth]{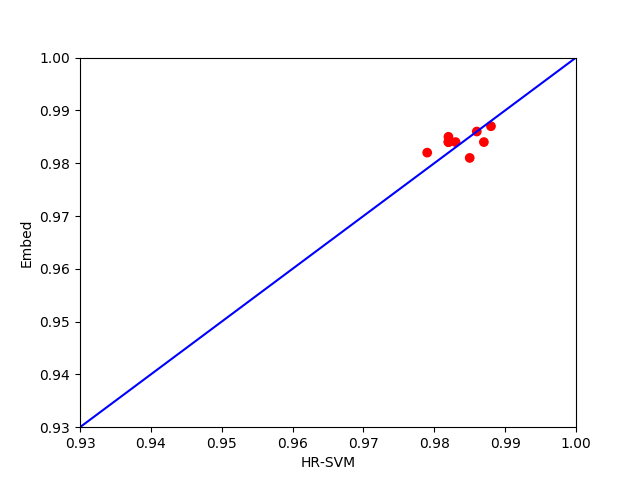}
     }\hfill
    \caption{Test accuracy of \mlsvm\ vs \sipsf\ (top row) and \hrsvm\ vs \sipsf\ (bottom row)  $10$ runs on the WIPO dataset}
    \label{fig:scatter_wipo}
\end{figure*}
\begin{figure*}[ht]
    \centering
    \subfloat[$100/100$ train/test split]{%
       \includegraphics[width=0.3\textwidth]{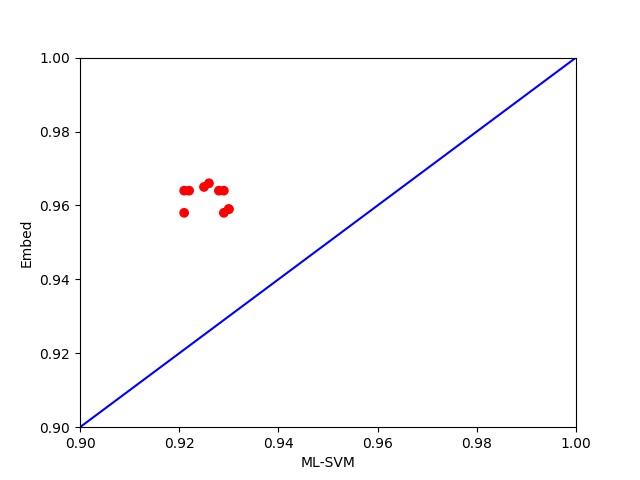}
     }\hfill
     \subfloat[ $200/200$ train/test split]{%
       \includegraphics[width=0.3\textwidth]{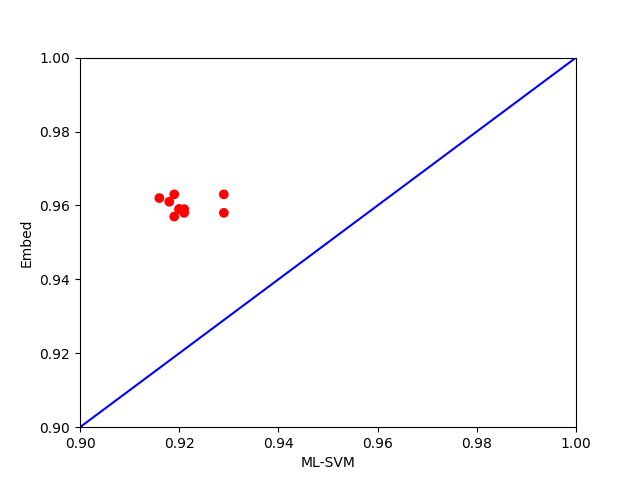}
     }\hfill
    \subfloat[$500/500$ train/test split]{%
       \includegraphics[width=0.3\textwidth]{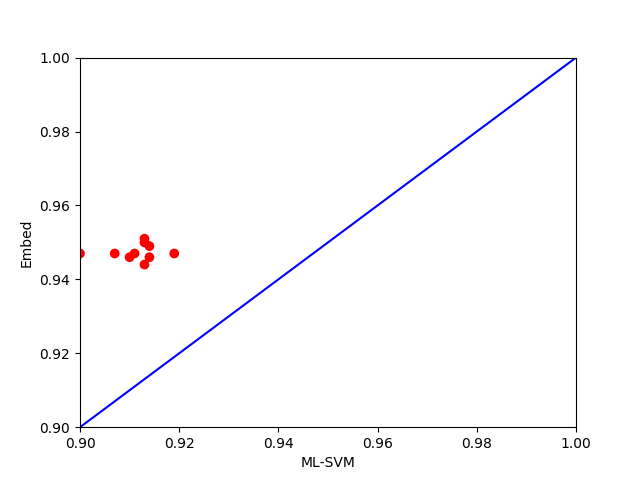}
     }\hfill \\
     \subfloat[$100/100$ train/test split]{%
       \includegraphics[width=0.3\textwidth]{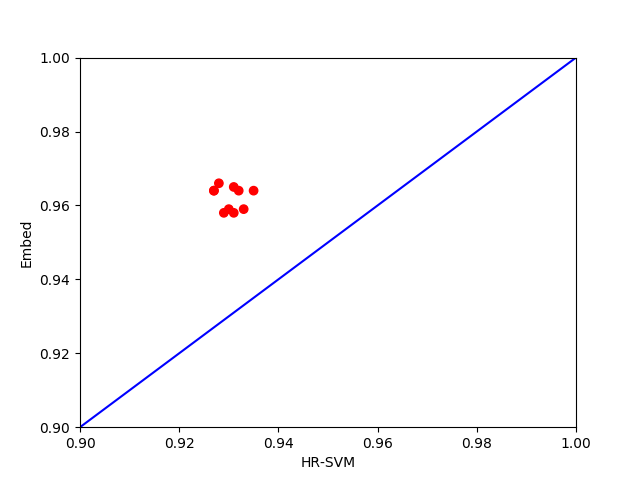}
     }\hfill
     \subfloat[$200/200$ train/test split]{%
       \includegraphics[width=0.3\textwidth]{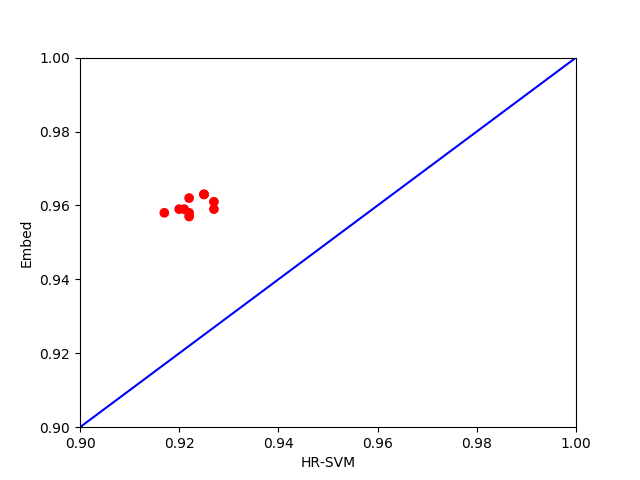}
     }\hfill
    \subfloat[ $500/500$ train/test split]{%
       \includegraphics[width=0.3\textwidth]{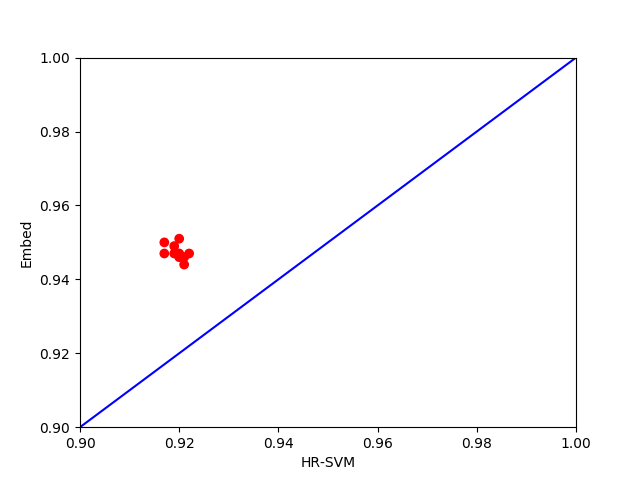}
     }\hfill
    \caption{Test accuracy of \mlsvm\ vs \sipsf\ (top row) and \hrsvm\ vs \sipsf\ (bottom row)  $10$ runs on the ENRON dataset}
    \label{fig:scatter_enron}
\end{figure*}
\begin{figure*}[t]
    \centering
    \includegraphics[scale=0.8]{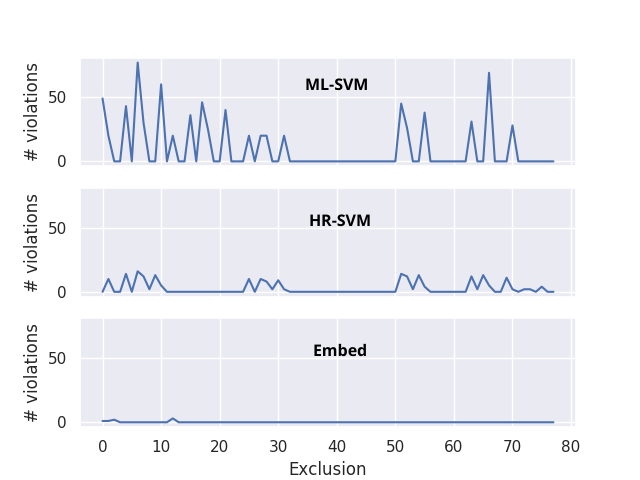}
    \caption{The number of violations for each exclusion constraint on the test set by (from top) \mlsvm, \hrsvm, and \sipsf\ on the Enron dataset with $200/200$ train/test examples. }
    \label{fig:const_vio}
\end{figure*}
\begin{figure*}[t]
    \centering
    \includegraphics[scale=0.8]{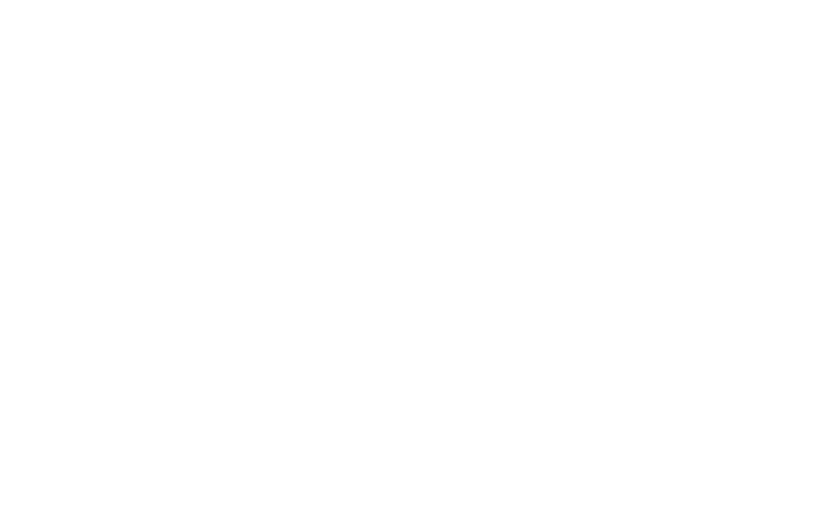}
\end{figure*}

\end{document}